\documentclass{article}

\usepackage{graphicx}
\usepackage{subcaption}
\usepackage{booktabs}
\usepackage{multirow}
\usepackage{amsmath,amssymb,amsfonts}
\usepackage{amsthm}
\usepackage{mathrsfs}
\usepackage[title]{appendix}
\usepackage{xcolor}
\usepackage{doi}
\usepackage[x11names]{xcolor}
\usepackage{wrapfig}
\usepackage{textcomp}
\usepackage{listings}
\usepackage[utf8]{inputenc}
\usepackage[T1]{fontenc}
\usepackage{hyperref}  
\usepackage{url}
\usepackage{bbm}
\usepackage{xspace}
\usepackage{manyfoot}
\usepackage{float}
\usepackage{nicefrac} 
\usepackage{microtype} 
\usepackage{dsfont}
\usepackage{mathtools}
\usepackage{hyperref}

\usepackage[accepted]{icml2026}

\usepackage[capitalize,noabbrev]{cleveref}
\usepackage{wrapfig}

\usepackage{aliascnt}
\usepackage{hyperref}

\newaliascnt{proposition}{theorem}
\newtheorem{proposition}[proposition]{Proposition}
\aliascntresetthe{proposition}

\newaliascnt{lemma}{theorem}

\aliascntresetthe{lemma}

\newaliascnt{corollary}{theorem}
\newtheorem{corollary}[corollary]{Corollary}
\aliascntresetthe{corollary}

\usepackage[textsize=tiny]{todonotes}

\newcommand{\name}[0]{Mira\xspace}

\begin{document}

\twocolumn[
  \icmltitle{MIRA: A Score for Conditional Distribution Accuracy and Model Comparison}



  \icmlsetsymbol{equal}{*}

  \begin{icmlauthorlist}
    \icmlauthor{Sammy Sharief}{equal,1,2,3,4}
    \icmlauthor{Justine Zeghal}{equal,1,2,3}
    \icmlauthor{Gabriel Missael Barco}{1,2,3}
    \icmlauthor{Pablo Lemos}{comp}
    \icmlauthor{Yashar Hezaveh}{1,2,3,5,6}
    \icmlauthor{Laurence Perreault-Levasseur}{1,2,3,5,6}
  \end{icmlauthorlist}
  \icmlaffiliation{1}{Mila -- Qu\'{e}bec AI Institute, Montreal, Quebec, Canada}
  \icmlaffiliation{2}{Ciela Institute, Montreal, Quebec, Canada}
    \icmlaffiliation{3}{Universit\'{e} de Montr\'{e}al, Montreal, Quebec, Canada}
    \icmlaffiliation{4}{Universit\'{e} Paris-Saclay, Universit\'{e} Paris Cité, CEA, CNRS, AIM, 91191, Gif-sur-Yvette, France}
  \icmlaffiliation{comp}{SanboxAQ, Palo Alto, California, USA}
  \icmlaffiliation{5}{CCA -- Flatiron Institute, 162 5th Ave, New York, NY 10010, USA}
    \icmlaffiliation{6}{Trottier Space Institute, Montreal, Quebec, Canada}
    
  \icmlcorrespondingauthor{Sammy Sharief}{sammy.sharief@universite-paris-saclay.fr}
  \icmlcorrespondingauthor{Justine Zeghal}{justine.zeghal@umontreal.ca}

  \icmlkeywords{Machine Learning, ICML}

  \vskip 0.3in
]



\printAffiliationsAndNotice{\icmlEqualContribution}  

\begin{abstract}
We introduce \name, a sample-based score for assessing the accuracy of a candidate conditional distribution using only joint samples from the true data-generating process. Relying on the principle that distributions coincide if they assign equal probability mass to all regions, we derive an analytic expression for the \name statistic, whose average defines the \name score. This formulation further allows us to compute theoretical reference values and uncertainty estimates when the candidate distribution matches the true one. This framework enables model comparison by quantifying the alignment between the conditional distribution of a candidate model and the true data generating process. Consequently, \name enables Bayesian model comparison through direct posterior validation, bypassing the challenging evidence computation. We demonstrate its effectiveness across several toy problems and Bayesian inference tasks. Code: \url{https://github.com/SammyS15/mira-score}.
\end{abstract}

\section{Introduction}
\label{sec:Intro}

Generative modeling, the task of learning an underlying distribution from a dataset of samples to generate new, synthetic samples, has made striking progress over recent years and has become a central tool in probabilistic modeling, especially in high dimensions. One particular class of such models has been of specific interest: conditional generative models, which allow learning conditional distribution $p(x \mid y)$ from joint data-label samples $(x, y) \sim p(x,y)$, with various applications ranging from computer vision~\citep[e.g.][]{7780634, Rombach2022, NEURIPS2022_94461854}, medicine~\citep[e.g.][]{Ying2019,PAN20251349}, drug discovery and protein synthesis~\citep[e.g.][]{AlphaFold}, for Bayesian inference in the context of physics (for instance in high-energy particle physics experiments, e.g.,~\citealp{ Bellagente2020, Diefenbacher2024,Pazos2025}, or in astrophysics, e.g.,~\citealp{Remy2022, Adam2022, Feng2023, Legin2024}) or geosciences~\citep[e.g][]{andry2025, savary2025, rozet2023}, only to name a few.
However, assessing the quality of samples produced by generative models, in particular conditional models, is key to guaranteeing their trustworthiness for application, especially in high-stakes applications such as AI safety~\citep[e.g.][]{somepalli2023, hitaj2017} or healthcare ~\citep[e.g.][]{pinaya2022, fernandez2024, tudosiu2024, zhu2024, bluethgen2024}, as well as other scientific applications where data analysis requires high accuracy in uncertainty quantification. 

The question of how well these models approximate the true, underlying distribution has been the focus of much recent work. Proposed methods for marginal distribution typically fall into two classes: likelihood-based methods, which rely on evaluating the density of true data samples under the generative model~\citep[e.g.][]{theis2015note, nalisnick2018deep, nowozintraining, yazici2020empirical, le2021perfect}, and sample-based methods \citep[e.g.][]{ heusel2017gans, salimans2016improved, sajjadi2018assessing, stein2023exposing, alaa2022faithful, jiralerspong2023feature, Lemos2025}, which rely on comparing true to generated samples. On the conditional distribution side, when only joint samples $(x, y) \sim p(x,y)$ are available, existing methods typically rely on coverage probability tests whose conclusions depend strongly on the choice of credible regions. Among these, TARP \citep[e.g.,][]{Lemos2023} provides a powerful sufficient condition for accuracy. However, it does not readily support model ranking, as its coverage plots are harder to interpret quantitatively than Highest Posterior Density (HPD,~\citealt{BoxTiao1973, hermans2022a}) expected coverage curves. Other approaches include probability integral transform methods such as Simulation-Based Calibration \citep[e.g.,][]{Gneiting2008, Talts2018}, which do not easily scale to high dimensions, and classifier-based scores \citep[e.g.,][]{linhart2023lc2st}, which require neural network training.

In this work, we introduce \name (Mass In Random Areas), a joint-sample-based score for conditional distributions (validating the distributions for all conditioning variables). Building on the principle that two distributions are equal if they assign the same probability mass to all regions, we derive, under the null hypothesis, an analytic expression for the probability that a true sample falls within a region given the number of candidate samples in that region. Averaging this probability over many joint true samples $(x,y) \sim p(x,y)$ and regions yields the \name score. This analytic formulation further enables us to derive key theoretical quantities, including the expected score and uncertainty under correctly specified distributions.  
This property makes \name useful not only for quantifying conditional distribution accuracy, but also as a practical tool for Bayesian model comparison. By shifting the comparison from simulation space to parameter space, \name provides a scalable alternative to evidence estimation, remaining effective in high-dimensional problems where classical approaches such as Bayes factors are often intractable~\citep{BayesFactor_1995, Alsing2018, Spurio_Mancini_2023}.
Our contributions are:
\begin{enumerate}
    \item We propose \name, a framework for quantifying the accuracy of conditional distributions using only joint samples from the true model.
    \item We establish theoretical quantities for the score and its uncertainty.
    \item We demonstrate its effectiveness on tasks including (a) quality assessment of conditional generative models, (b) posterior accuracy assessment in Bayesian inference, and (c) Bayesian model comparison.
\end{enumerate}

\begin{figure*}[t]
\centering\includegraphics[width=\linewidth]{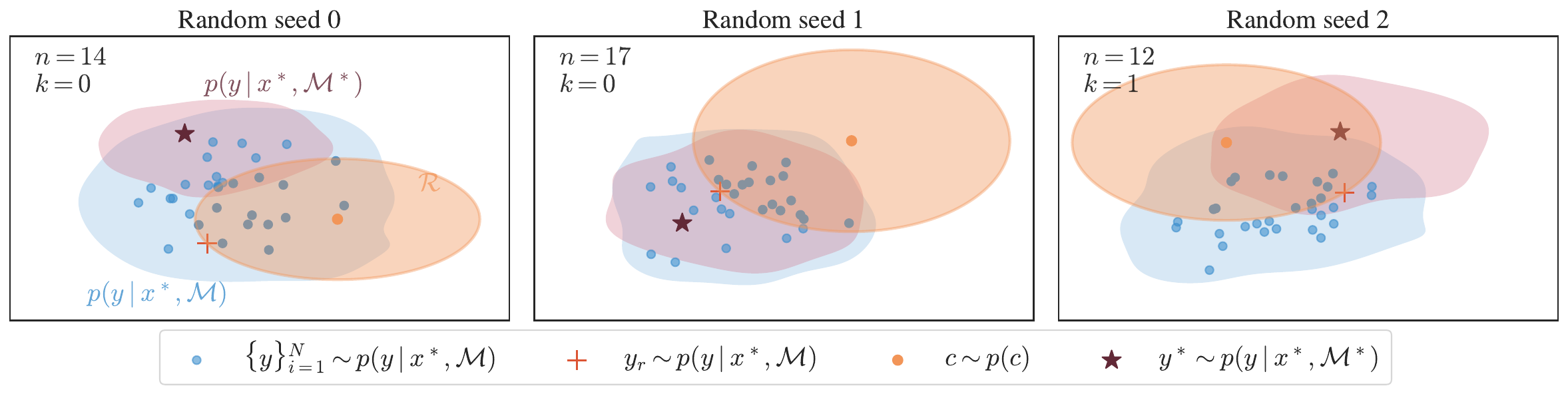}
\vspace{-0.6cm}
    \caption{
    2D illustration of the quantities used in \name to compare two conditional distributions. Following PQMass~\citep{Lemos2025}, we compare the number of samples from each distribution falling in a region $\mathcal{R}$ (orange). Each region is defined by a random center $c$ (orange point) sampled from $p(c)$, and a radius set by the distance to a random candidate sample $y_r$ (orange cross). A single sample $y^*$ (red star) is drawn from the true conditional $p(y \mid x^*, \mathcal{M}^*)$ (red), and compared to $N$ samples $\{y\}_{i=1}^N$ (blue dots) from the candidate $p(y \mid x^*, \mathcal{M})$ (blue). Three random scenarios are shown.
    }
    \label{fig:Pokie_illu}
\end{figure*}

\section{Preliminaries}
\label{sec:Method}
In this section, we introduce the notation, define the problem we aim to solve, and describe the key concepts underlying our method.

\subsection{Problem Statement and Notations}
We assume a probability space $(\Omega, \mathcal{F}, \mathbb{P})$. Let $X$ and $Y$ be random variables defined as measurable functions $X: \Omega \to \mathcal{X}$ and $Y: \Omega \to\mathcal{Y}$, where the codomains $\mathcal{X} \subseteq \mathbb{R}^{d_x}$ and $\mathcal{Y} \subseteq \mathbb{R}^{d_y}$ are equipped with the Lebesgue measure. We consider a setting where we have access to a true probabilistic model $\mathcal{M}^*$, from which we can sample $L$ joint pairs $\{(x^*_i, y^*_i)\}_{i=1}^L \sim p(x,y\mid\mathcal{M}^*)$.

Our goal is to quantify the fidelity of a candidate conditional distribution $p(y\mid x, \mathcal{M})$ against the true, unknown one $p(y\mid x, \mathcal{M}^*)$ for all $x \in \{x^*_i\}_{i=1}^L$. We assume we can draw $N$ i.i.d. samples $\{y_j\}_{j=1}^N \sim p(y \mid x^*_i, \mathcal{M})$ per observations $x^*_i$, but we have access to only a single realization $y^*_i$ from the true conditional distribution per observations $x^*_i$ as we only have access to joint pairs $(x^*_i, y^*_i)$.

The candidate model $\mathcal{M}$ may represent either a machine learning model or a sampling method, when the goal is to validate the conditional distribution produced by it, or a candidate model $(x, y) \sim p(x, y \mid \mathcal{M})$ (e.g. a numerical simulator of a physical system) when the goal is to test whether its joint distribution is consistent with the true data-generating process. This framework also applies to non-conditional distributions. 

\subsection{PQMass}
PQMass \citep{Lemos2025} is a sample-based method designed to assess whether two sets of samples originate from the same distribution.

Formally, two distributions $p(y \mid \mathcal{M}^*)$ and $p(y \mid \mathcal{M})$ are equal if their probability measures are the same over all measurable sets $\mathcal{R} \subseteq \Omega$:
$\mathbb{P}_{\mathcal{M}^*}(\mathcal{R}) = \mathbb{P}_{\mathcal{M}}(\mathcal{R}) \quad \forall \mathcal{R} \subseteq \Omega.$
The probability measures of a region $\mathcal{R}$ under model $\mathcal{M}$ can be unbiasedly estimated as:
\begin{align}
    \mathbb{P}_{\mathcal{M}}(\mathcal{R}) = \int_{\mathcal{R}}p(y \mid \mathcal{M})dy \approx \frac{1}{N} \sum_{i=1}^{N} \mathbbm{1}(y_i \in \mathcal{R}),
\end{align}
where $\{y_i\}_{i=1}^N \sim p(y \mid \mathcal{M})$ are $N$ independent samples.
Furthermore, the number of samples falling within $\mathcal{R}$, $n = \sum_{i=1}^{N} \mathbbm{1}(y_i \in \mathcal{R})$, follows a binomial distribution:
\begin{align}
    n \sim \mathcal{B}\left(N,  \mathbb{P}_{\mathcal{M}}(\mathcal{R})\right).
\end{align}
This property allows for the comparison of two distributions by comparing the binomial distributions over multiple chosen regions $\mathcal{R}$. Recognizing that the counts of samples in any collection of disjoint regions follow a multinomial distribution, PQMass actually compares the multinomial distributions induced by model $\mathcal{M}$ and model $\mathcal{M}^*$ using the Pearson $\chi^2$ goodness-of-fit test \citep{pearson}.

We note that the limitations of $\text{PQMass}$ are significant for our problem. First, it is designed for comparing marginal distributions $p(y \mid \mathcal{M})$ and cannot evaluate the quality of a conditional distribution $p(y \mid x, \mathcal{M})$ for all observations $x$. Second, it is a frequentist approach where the resulting p-value is inherently sensitive to the number of samples. As the number of samples increases, the p-value decreases due to increased statistical power, even if the absolute deviation between distributions remains constant. This sample sensitivity makes the $\text{PQMass}$ p-value ill-suited for the purpose of model comparison, where a stable, scalar fidelity score is necessary. Finally, the $\text{PQMass}$ method cannot be adapted to our setting because it relies on having multiple samples from both distributions being compared, making it incompatible with the single ground-truth sample constraint ($y^*$) that defines our problem. This justifies shifting from a frequentist approach to a Bayesian one.

\section{Mira}
\subsection{Proposal}
We propose to compare conditional distributions based on the principle that they are identical if they assign equal mass to all measurable sets for all conditioning variables.
Following the PQMass framework, we approximate the mass by comparing sample counts within randomly constructed regions. In \name, we define 
each region $\mathcal{R}$ as 
\begin{align}
\mathcal{R} = \{y \in \mathcal{Y} \mid d(y, c) \leq d(y_r,c)\},
\label{eq:region}
\end{align}
with $d$ a distance metric, $y_r \sim p(y \mid x^*, \mathcal{M})$, and $c \sim p(c)$ where $p(c)$ is any distribution\footnote{In Appendix \ref{app:Mira_Sensitivity}, we empirically investigate the sensitivity of \name to the choice of distribution and support for $p(c)$.}.
We exclude the case where $c = y_r$, that is, where $\mathcal{R}$ is a set of measure $0$. Additionally, we denote
\begin{align}
     n = \sum_{j=1}^N \mathbbm{1}\bigl(y_{j}\in\mathcal{R}\bigr)\, , \text{ \: and \: }
  k = \mathbbm{1}\bigl(y^*\in\mathcal{R}\bigr)\, ,
\end{align}
where $n$ is the number of candidate draws falling in $\mathcal{R}$, and $k$ indicates whether $y^*$ lies in $\mathcal{R}$. 
These two random variables follow Binomial distributions
\begin{align*}
    n \sim  \mathcal{B}(N, \lambda_n), \text{ \: and \: }  k \sim  \mathcal{B}(1, \lambda_k),
\end{align*}
with $\lambda_n$ and $\lambda_k$ denoting respectively the mass of $p(y\mid x^*, \mathcal{M})$ and $p(y\mid x^*, \mathcal{M}^*)$ in $\mathcal{R}$, that is 
\begin{align*}
    \lambda_n = \int_{\mathcal{R}} p(y\mid x^*, \mathcal{M}) dy \,\, ,\quad  
     \lambda_k = \int_{\mathcal{R}} p(y\mid x^*, \mathcal{M}^*) dy.
\end{align*}

We provide in \autoref{fig:Pokie_illu} an illustration of our framework.

Because we observe only a single realization $y^*$ from the true conditional distribution for each $x^*$, the corresponding probability mass $\lambda_k \mid x^*, y_r, c$ cannot be estimated. As a result, a direct frequentist comparison of $\lambda_n \mid x^*, y_r, c$ and $\lambda_k \mid x^*, y_r, c$, as performed in PQMass, is not feasible, since it requires multiple samples from both distributions.
To address this limitation, we adopt a Bayesian perspective and introduce the \textit{Mira statistic} $p(k \mid n)$, defined as the probability that the true sample $y^*$ lies within a region $\mathcal R$, given that $n$ samples drawn from the candidate conditional distribution fall within the same region. This probability is defined under the null hypothesis that the candidate and true conditional distributions are identical. In the following section, we show that, with our randomized construction of $\mathcal R$, the Mira statistic admits a closed-form expression.

\subsection{Deriving the \name statistic}
The random construction of the region $\mathcal{R}$ where the reference point $y_r$ is drawn from the candidate distribution, guarantees a key property of the mass $\lambda_n$:
\begin{proposition}[Uniform Mass]\label{prop:unif}
Let $\mathcal{R}$ be the random region defined in \autoref{eq:region}. For any realization of the true observation $x^*$ and any realization of the center $c$, the probability mass $\lambda_n$ assigned to the resulting region $\mathcal{R}$ follows a Uniform distribution on $[0,1]$, that is, $\lambda_n \mid x^*, c \sim \mathcal{U}[0,1].$ As a consequence, the marginal distribution of $\lambda_n$ is also Uniform on $[0,1]$.
\end{proposition}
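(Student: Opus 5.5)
Fix $x^*$ and $c$, and write $Q = p(\cdot \mid x^*, \mathcal{M})$ for the candidate conditional. The plan is a probability integral transform. Define the scalar random variable $D = d(Y, c)$ with $Y \sim Q$, and let $F(t) = \mathbb{P}_{Y\sim Q}\bigl(d(Y,c)\le t\bigr)$ be its cumulative distribution function. By \autoref{eq:region}, the region $\mathcal{R}$ is the closed ball around $c$ of radius $d(y_r,c)$. Its $Q$-mass is therefore
\begin{align*}
\lambda_n = \int_{\mathcal{R}} p(y\mid x^*,\mathcal{M})\,dy = F\bigl(d(y_r,c)\bigr).
\end{align*}
Because $y_r \sim Q$ is drawn independently of $c$ given $x^*$, the quantity $d(y_r,c)$ is a draw from the law of $D$. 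Hence $\lambda_n \mid x^*, c$ has the law of $F(D)$.

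The key step is the classical fact that $F(D)\sim\mathcal{U}[0,1]$ whenever $F$ is continuous. I would prove it as follows. For $u\in(0,1)$, let $t_u = \sup\{t : F(t)\le u\}$. Continuity gives $F(t_u)=u$, and $\{F(D)\le u\} = \{D \le t_u\}$ up to a $D$-null set. The only possible discrepancy is an interval on which $F$ is flat, and such an interval carries no mass. Therefore $\mathbb{P}(F(D)\le u) = F(t_u) = u$.

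The main obstacle is justifying that $F$ is continuous, i.e.\ that $D$ has no atoms. An atom at $t$ means $Q$ puts positive mass on the sphere $S_t=\{y: d(y,c)=t\}$. For the Euclidean (or any norm-induced) distance, $S_t$ has Lebesgue measure zero. Since $Q$ has a density with respect to Lebesgue measure, as assumed in the problem setup, it gives $S_t$ zero mass. This is where I would state the standing assumption that $d$ is such that spheres are Lebesgue-null. The excluded case $c=y_r$ corresponds to $t=0$, a single point, and is consistent with this argument.

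Finally, the conditional law $\mathcal{U}[0,1]$ does not depend on $(x^*,c)$. Integrating it against the joint law of $(x^*,c)$, by the tower property, gives that the marginal law of $\lambda_n$ is also $\mathcal{U}[0,1]$.
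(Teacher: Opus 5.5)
Your proof is correct and follows the same route as the paper: both write $\lambda_n = F_D\bigl(d(y_r,c)\bigr)$ with $D = d(Y,c)$, $Y \sim p(y\mid x^*,\mathcal{M})$, apply the probability integral transform conditionally on $(x^*,c)$, and then marginalize. Your version is more careful than the paper's, which simply asserts that $D$ is continuous and inverts $F_D$ directly; you justify the absence of atoms (spheres of a norm-induced distance are Lebesgue-null and the candidate has a density) and handle flat stretches of $F$ through the generalized inverse $t_u$.
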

\begin{proof}
    See Appendix \ref{app:unif}
\end{proof}
This result enables a closed-form derivation of the \name statistic under the null hypothesis that the candidate and true conditional distributions coincide, as stated in the proposition below.

\begin{proposition}[\name statistic]\label{prop:mira_stat}
Assume the null hypothesis holds, such that the deterministic variables $\lambda_n\mid x^*,y_r,c$ and  $\lambda_k\mid x^*,y_r,c$ are equal almost surely. Under this assumption, the probability of $k$ given that $n$ samples from $\{y\}_{j=1}^N$ fall in $\mathcal{R}$, is given by Laplace's rule of succession:
\begin{align}
  p\bigl(k \mid n\bigr)
  = \frac{n + 1}{N + 2}\mathbbm{1}(k=1) + \frac{N - n + 1}{N + 2}\mathbbm{1}(k=0).
\end{align}
\end{proposition}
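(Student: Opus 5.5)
We will treat $\lambda := \lambda_n = \lambda_k$ (equal almost surely under the null) as a latent variable and compute $p(k\mid n)$ by marginalizing over it:
\begin{align*}
p(k \mid n) = \int_0^1 p(k \mid \lambda)\, p(\lambda \mid n)\, d\lambda .
\end{align*}
For this we need three facts. First, conditional on $(x^*, y_r, c)$, the count $n$ is binomial $\mathcal{B}(N,\lambda)$, the indicator $k$ is Bernoulli with parameter $\lambda_k=\lambda$, and $n$ and $k$ are conditionally independent, since $y^*$ and the $y_j$ are independent draws given $x^*$. Second, the prior on $\lambda$ is $\mathcal{U}[0,1]$, by Proposition~\ref{prop:unif}. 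Third, $p(k\mid n,\lambda)=p(k\mid\lambda)$. Together these give a Beta--Binomial model.

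\textbf{Step 1 (posterior on $\lambda$).} We apply Bayes' rule with the uniform prior:
\begin{align*}
p(\lambda\mid n) \propto \lambda^n(1-\lambda)^{N-n},
\end{align*}
so $\lambda \mid n \sim \mathrm{Beta}(n+1,\,N-n+1)$. Conditioning on $x^*$ and $c$ changes nothing, because Proposition~\ref{prop:unif} gives uniformity conditionally as well as marginally.

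\textbf{Step 2 (predictive for $k$).} We obtain
\begin{align*}
p(k=1\mid n)=\mathbb{E}[\lambda\mid n]=\frac{n+1}{N+2}, \qquad p(k=0\mid n)=1-\frac{n+1}{N+2}=\frac{N-n+1}{N+2}.
\end{align*}
This can be checked by a ratio of Beta functions, $B(n+2,N-n+1)/B(n+1,N-n+1)=\frac{n+1}{N+2}$. Combining the two cases with indicators gives the stated formula.

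\textbf{Main obstacle (conditional independence).} The delicate point is justifying that $n$ and $k$ are independent given $\lambda$. The region $\mathcal{R}$ is built from $y_r$, which is itself a candidate draw, so we must take $y_r$ as separate from the $N$ counted samples $\{y_j\}$. Given $(x^*,y_r,c)$, the region is fixed, so the $y_j$ are i.i.d.\ with success probability $\lambda_n$, and $y^*$ is independent with success probability $\lambda_k$.

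We must also justify that conditioning only on the scalar $\lambda$, rather than on the full triple $(x^*,y_r,c)$, suffices. The likelihoods of $n$ and $k$ depend on the triple only through $\lambda$. Hence marginalizing the triple against its conditional law given $\lambda$ reduces the computation to the one-dimensional prior supplied by Proposition~\ref{prop:unif}.
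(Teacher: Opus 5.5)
Your proposal is correct and follows essentially the same route as the paper's proof. Both treat the common mass $\lambda$ as a latent variable with the uniform prior from Proposition~\ref{prop:unif}, use that $n$ and $k$ are conditionally independent binomial/Bernoulli given $\lambda$, and read off Laplace's rule of succession from the $\mathrm{Beta}(n+1,N-n+1)$ posterior. Your explicit checks that the likelihoods depend on $(x^*,y_r,c)$ only through $\lambda$, and that $y_r$ is separate from the $N$ counted draws, spell out the step the paper states as ``$k$ and $n$ are independent of $\mathcal{R}$ and $x^*$ given $\lambda$.''
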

\begin{proof}
    See Appendix \ref{app:Mira_Proof}
\end{proof}
We note that the expression of the \name statistic is marginalized over all regions $\mathcal{R}$ and observations $x^*$ (see Appendix \ref{app:Mira_Proof}). To obtain the accuracy for a candidate model $\mathcal{M}$, one needs to evaluate the \name statistic for all $(k,n) \sim p(k,n)$. Hence, we define the \textit{\name score} as the expectation of the \name statistic over all sources of randomness $Z = (\mathcal{R}, y^*, x^*, \{y\}_{i=1}^N)$:
\begin{equation}
\begin{split}
  \mu_\mathrm{Mira}(\mathcal{M})
  &= \mathbb{E}_{p(Z)}\left[\mathbb{E}_{p(k, n \mid Z)} \left[ p(k\mid n)\right] \right].
\end{split}
\label{eq:Mira_full}
\end{equation}

\subsection{Deriving theoretical quantities}
The \name statistic $p(k\mid n)$ is a random variable, as it is a function of the random counts $n$ and $k$. The following proposition characterizes its asymptotic distribution under the null hypothesis.
\begin{proposition}[\name statistic law]\label{prop:MiraStatLaw}
   Suppose the two deterministic variables $\lambda_n\mid x^*,y_r,c$ and  $\lambda_k\mid x^*,y_r,c$ are equal almost surely. Then the random variable $P_N := p(k\mid n)$ converges in distribution to a Beta$(2,1)$ random variable as $N$ goes to infinity: 
   \begin{align}
       P_N \xrightarrow{d} \text{Beta}(2,1).
   \end{align}
\end{proposition}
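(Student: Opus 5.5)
The plan is to couple $P_N$ with an explicit limiting random variable built from the common mass $\lambda$, show that $P_N$ minus this variable tends to zero in probability, and then compute the law of the limit directly.

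\textbf{Setup.} Under the null hypothesis write $\lambda := \lambda_n = \lambda_k$, which is a deterministic function of $(x^*, y_r, c)$. By Proposition~\ref{prop:unif}, $\lambda \sim \mathcal{U}[0,1]$ marginally. Conditionally on $(x^*, y_r, c)$, the region $\mathcal{R}$ is fixed. The $N$ candidate draws $\{y_j\}$ are i.i.d.\ and independent of $y_r$, and $y^*$ is an independent draw from the true conditional. Hence, given $\lambda$, we have $n \sim \mathcal{B}(N,\lambda)$ and $k \sim \mathcal{B}(1,\lambda)$, and these are conditionally independent. I would state this structure explicitly, since it is the modelling step that the rest of the argument relies on.

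\textbf{Limit variable.} Define
\[
W := k\,\lambda + (1-k)(1-\lambda).
\]
Since
\[
P_N = k\,\frac{n+1}{N+2} + (1-k)\,\frac{N-n+1}{N+2},
\]
the triangle inequality gives
\[
|P_N - W| \le \Bigl|\frac{n+1}{N+2} - \lambda\Bigr| \le \Bigl|\frac{n}{N} - \lambda\Bigr| + \frac{C}{N}
\]
for an absolute constant $C$; the same bound holds in the $k=0$ branch by symmetry. Conditionally on $\lambda$,
\[
\mathbb{E}\bigl[(n/N - \lambda)^2 \mid \lambda\bigr] = \frac{\lambda(1-\lambda)}{N} \le \frac{1}{4N}.
\]
Integrating over $\lambda$ and applying Chebyshev's inequality shows $P_N - W \to 0$ in probability. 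By Slutsky's lemma, it then suffices to show $W \sim \mathrm{Beta}(2,1)$.

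\textbf{Law of $W$.} Condition on $\lambda$, which is uniform: $W = \lambda$ with probability $\lambda$, and $W = 1-\lambda$ with probability $1-\lambda$. So for $t \in [0,1]$,
\[
\mathbb{P}(W \le t) = \int_0^1 \Bigl[\lambda\,\mathbbm{1}(\lambda \le t) + (1-\lambda)\,\mathbbm{1}(1-\lambda \le t)\Bigr]\,d\lambda = \frac{t^2}{2} + \frac{t^2}{2} = t^2.
\]
This is exactly the CDF of $\mathrm{Beta}(2,1)$, whose density is $2t$ on $[0,1]$, which concludes the argument.

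\textbf{Main obstacle.} The only delicate point is justifying the conditional independence of $n$, $k$ and $y_r$ given $\lambda$. In particular, $y_r$ must not be among the $N$ counted samples; otherwise the count is shifted by at most one, which is still $O(1/N)$ after dividing by $N+2$ and is therefore harmless. The excluded measure-zero event $c = y_r$ does not affect any of the distributions involved. Everything else is routine.
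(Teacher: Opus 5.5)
Your proof is correct and follows the same route as the paper: show that $P_N$ converges in probability to $W=k\lambda+(1-k)(1-\lambda)$, then verify that $W$ has CDF $t^2$. Where the paper simply invokes the law of large numbers, your uniform Chebyshev bound $\lambda(1-\lambda)/N\le 1/(4N)$ makes that step rigorous even though $\lambda$ is random, and you compute the CDF by conditioning on $\lambda$ directly rather than on $k$ (the paper uses $\mathbb{P}(k=1)=\tfrac12$ and the posterior densities $2\lambda$ and $2(1-\lambda)$), which is only a cosmetic difference.
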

\begin{proof}
    See Appendix \ref{app:beta(2,1)}
\end{proof}
In particular, its mean and variance converge to those of the $\mathrm{Beta}(2,1)$ distribution.
\begin{corollary}[\name statistic moments]\label{cor:mira_variance} Suppose that $\lambda_n\mid x^*,y_r,c = \lambda_k\mid x^*,y_r,c$ almost surely.
Then, the mean and variance of the random variable $P_N$ bounded between $0$ and $1$, converge to the moments of the Beta(2,1) distribution, that is,
\begin{align}
    \mathbb{E}_{p(k,n)}\left[P_N\right]
&\to \tfrac23 \quad \text{as } N \to \infty\\
\text{Var}_{p(k,n)}\left[P_N\right]
&\to \tfrac{1}{18} \quad \text{as } N \to \infty.
\end{align}
\end{corollary}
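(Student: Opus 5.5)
The plan is to deduce the corollary from \autoref{prop:MiraStatLaw} together with the fact that each $P_N$ is bounded. By \autoref{prop:mira_stat}, $P_N = p(k\mid n)$ equals either $(n+1)/(N+2)$ or $(N-n+1)/(N+2)$. Both values lie in $[1/(N+2),\,(N+1)/(N+2)] \subset [0,1]$, so the family $\{P_N\}_N$ is uniformly bounded. Since $x\mapsto x$ and $x\mapsto x^2$ are continuous and bounded on $[0,1]$, convergence in distribution gives $\mathbb{E}[P_N]\to\mathbb{E}[B]$ and $\mathbb{E}[P_N^2]\to\mathbb{E}[B^2]$ for $B\sim\mathrm{Beta}(2,1)$. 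Strictly speaking we use the portmanteau theorem with a bounded continuous extension of these functions to $\mathbb{R}$, for instance by clipping to $[0,1]$; this does not change the expectations because all variables live in $[0,1]$. Equivalently, one can invoke uniform integrability of bounded sequences.

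Next we compute the Beta moments. $\mathrm{Beta}(2,1)$ has density $2t$ on $[0,1]$, so $\mathbb{E}[B]=\int_0^1 2t^2\,dt=2/3$ and $\mathbb{E}[B^2]=\int_0^1 2t^3\,dt=1/2$. Hence $\mathrm{Var}[B]=1/2-4/9=1/18$. Because $\mathrm{Var}[P_N]=\mathbb{E}[P_N^2]-\mathbb{E}[P_N]^2$ and both terms converge, the variance converges to $1/18$ by continuity of $(a,b)\mapsto b-a^2$.

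As a sanity check, and as an alternative route that does not go through \autoref{prop:MiraStatLaw}, we can compute the moments exactly. Under the null and by \autoref{prop:unif}, $\lambda:=\lambda_n=\lambda_k\sim\mathcal{U}[0,1]$. Conditionally on $\lambda$, $n\sim\mathcal{B}(N,\lambda)$ and $k\sim\mathcal{B}(1,\lambda)$ are independent, so $n$ is marginally uniform on $\{0,\dots,N\}$ and $p(k=1\mid n)=(n+1)/(N+2)$. Therefore
\begin{align*}
\mathbb{E}[P_N]=\mathbb{E}_n\!\left[\Bigl(\tfrac{n+1}{N+2}\Bigr)^2+\Bigl(\tfrac{N-n+1}{N+2}\Bigr)^2\right].
\end{align*}
This is an explicit sum of squares over a uniform index, and Riemann-sum asymptotics show it tends to $\int_0^1 (u^2+(1-u)^2)\,du = 2/3$. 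The second moment is handled the same way using cubes, giving $\int_0^1 (u^3+(1-u)^3)\,du=1/2$.

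The only subtle step is justifying that convergence in distribution implies convergence of moments. Boundedness in $[0,1]$ settles this immediately, so I expect no real obstacle.
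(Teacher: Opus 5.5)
Your proof is correct and follows the paper's own argument: the paper likewise combines the convergence in distribution from the \name statistic law with the boundedness of $P_N$ in $[0,1]$ to get $\mathbb{E}[P_N]\to 2/3$ and $\mathbb{E}[P_N^2]\to 1/2$, then subtracts to obtain the variance limit $1/18$. Your supplementary exact computation is also sound but not needed for the corollary; it gives $\mathbb{E}[P_N]=\frac{2N+3}{3(N+2)}$, which matches the finite-$N$ value the paper derives in its rank-ordering reformulation.
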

\begin{proof}
    See Appendix \ref{app:conv_upper}
\end{proof}
This result provides a theoretical mean to which to compare the empirical score defined in \autoref{eq:Mira_full}.
Additionally, the theoretical variance can be used to quantify the discrepancy between the empirical score and the true score, normalized by the theoretical uncertainty under perfect calibration.

We note that while this is a necessary condition for calibration, \name is not a sufficient condition. In Appendix \ref{app:sufficency}, we show that the \name score can be modified to render it a sufficient condition, at the cost of sample efficiency.

Finally, leveraging the closed-form expression of the \name statistic (\autoref{prop:mira_stat}) combined with \autoref{prop:unif}, one can derive additional theoretical quantities. For instance, determining the lower bound of the \name score:
\begin{proposition}[Lower bound on the Mira score]
\label{prop:lower_bound}
For any candidate model $\mathcal{M}$,
\begin{equation}
    \mu_{\mathrm{Mira}}(\mathcal{M}) \geq \frac{1}{2},
\end{equation}
with equality approached if the candidate and true distributions have disjoint supports.
\end{proposition}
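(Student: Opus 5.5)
The plan is to compute the \name score exactly in terms of the two masses $\lambda_n$ and $\lambda_k$, and then use the nested structure of the regions in \autoref{eq:region} to control their correlation. Throughout, write $a=\lambda_n$ and $b=\lambda_k$. Both are deterministic functions of $(x^*,y_r,c)$.

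\textbf{Step 1 (inner expectation).} Conditionally on $(x^*,y_r,c)$, the count $n\sim\mathcal{B}(N,a)$ and the indicator $k\sim\mathcal{B}(1,b)$ are independent. This holds because the candidate draws and $y^*$ are drawn independently given $x^*$; it does not use the null hypothesis. Plugging the closed form of \autoref{prop:mira_stat} and using $\mathbb{E}[n]=Na$ gives
\begin{align*}
\mathbb{E}\left[p(k\mid n)\mid x^*,y_r,c\right]
&= b\,\frac{Na+1}{N+2} + (1-b)\,\frac{N(1-a)+1}{N+2}\\
&= \frac{1 + N\bigl(ab+(1-a)(1-b)\bigr)}{N+2}.
\end{align*}
Expanding, we have $ab+(1-a)(1-b) = \tfrac12 + 2\,(a-\tfrac12)(b-\tfrac12)$. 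Averaging over everything therefore gives
\begin{align*}
\mu_{\mathrm{Mira}}(\mathcal{M}) = \frac12 + \frac{2N}{N+2}\,\mathbb{E}\left[(a-\tfrac12)(b-\tfrac12)\right].
\end{align*}
So it suffices to show that $\mathbb{E}[(a-\tfrac12)(b-\tfrac12)]\ge 0$.

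\textbf{Step 2 (reduce to a covariance).} Condition on $(x^*,c)$. By \autoref{prop:unif}, $a\mid x^*,c\sim\mathcal{U}[0,1]$, so $\mathbb{E}[a\mid x^*,c]=\tfrac12$. Hence
\begin{align*}
\mathbb{E}\left[(a-\tfrac12)(b-\tfrac12)\mid x^*,c\right]=\mathrm{Cov}(a,b\mid x^*,c).
\end{align*}

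\textbf{Step 3 (positivity of the covariance).} Given $(x^*,c)$, the region $\mathcal{R}$ is the closed ball around $c$ of radius $r=d(y_r,c)$. Both $a=F(r)$ and $b=G(r)$ are therefore nondecreasing functions of the single random variable $r$, since the balls are nested. Chebyshev's association (sum) inequality then gives $\mathrm{Cov}(F(r),G(r))\ge 0$. Concretely, this follows from $\mathbb{E}\bigl[(F(r)-F(r'))(G(r)-G(r'))\bigr]\ge 0$ for an independent copy $r'$ of $r$. Integrating over $(x^*,c)$ yields $\mu_{\mathrm{Mira}}\ge\tfrac12$. The excluded case $c=y_r$ has measure zero and does not affect the argument.

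\textbf{Step 4 (equality).} Equality holds exactly when $G(r)$ is a.s.\ constant on the range of candidate radii, i.e.\ when the covariance vanishes. I would argue this is approached when the supports are disjoint and well separated. In that case, the balls with radii drawn from the candidate distribution essentially never capture true mass ($b\approx 0$), or always capture all of it. Either way $b$ is (nearly) constant, so $\mathrm{Cov}\to 0$ and $\mu_{\mathrm{Mira}}\to\tfrac12$.

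I expect this last step to be the delicate one. For disjoint but interleaved supports, $b$ can still vary with $r$, so the claim must be read as ``approached'' in the separated limit, and I would state precisely the condition under which $b$ becomes constant. The main inequality itself hinges only on the nesting observation in Step 3.
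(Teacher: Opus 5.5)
Your proof is correct and is essentially the paper's: you reduce to $\mu_{\mathrm{Mira}}(\mathcal{M})=\tfrac12+\tfrac{N}{N+2}\bigl(2\,\mathbb{E}[\lambda_n\lambda_k]-\mathbb{E}[\lambda_k]\bigr)$ (your $2\,\mathbb{E}[(a-\tfrac12)(b-\tfrac12)]$ equals this bracket once $\mathbb{E}[a]=\tfrac12$) and then apply Chebyshev's inequality for comonotone functions conditionally on $(x^*,c)$; the only variation is that you apply it directly to $F(r)$ and $G(r)$, whereas the paper applies it on $[0,1]$ to $\lambda_n\mapsto T_{x^*,c}(\lambda_n)=F_{D^*}(F_D^{-1}(\lambda_n))$, so you avoid the generalized-inverse step. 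Your caution in Step 4 is warranted and sharper than the paper's equality argument: equality forces $\lambda_k$ to be almost surely constant given $(x^*,c)$, and that constant can be $1$ as well as $0$ (the paper's step ``$T_{x^*,c}(0)=0$, hence $T_{x^*,c}\equiv 0$'' misses this), while disjoint but interleaved supports need not give $\tfrac12$ at all (e.g.\ candidate uniform on $[0,1]\cup[2,3]$, truth uniform on $[1,2]$ and $c\sim\mathcal{U}[0,0.1]$ give $\lambda_k=\mathbbm{1}(\lambda_n>\tfrac12)$ and $\mu_{\mathrm{Mira}}\to\tfrac34$).
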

\begin{proof}
    See Appendix \ref{app:conv_lower}
\end{proof}

\subsection{Algorithm}

In practice, the \name score  is approximated using Monte Carlo (MC) integration as
\begin{equation}
\begin{split}
  \mu_\mathrm{Mira}(\mathcal{M})
  &\approx\frac{1}{L\cdot L_r}
     \sum_{l,j}
     p(k_{l,j}\mid n_{l,j}),
\end{split}
\label{eq:Mira_monte_carlo}
\end{equation}
with $L$ the number of fiducials $(x^*, y^*)$ from the true model $p(x, y \mid \mathcal{M}^*)$ and $L_r$ the number of regions. Below, we provide the pseudo-code to compute the \name score.

\begin{algorithm}[H]
\caption{Computing \name score for model \(\mathcal{M}\)}
\label{alg:Mira-score}
\begin{algorithmic}[1]
\STATE \textbf{Input:} Number of fiducial \(L\), region \(L_r\), and samples \(N\). Distribution $p(c)$ and metric $d$. Forward model $p(x, y\mid \mathcal{M}^*)$ and candidate distribution $p(y \mid x, \mathcal{M})$.

\STATE Initialize \texttt{score} \(\leftarrow 0\)
\FOR{\(i = 1\) to \(L\)}
  \STATE Draw \(x^*, y^* \sim p(x, y \mid \mathcal{M}^*)\)
  \STATE Draw \(\{y_{j}\}_{j=1}^N \sim p(y \mid x^*, \mathcal{M})\)
  \FOR{\(\ell = 1\) to \(L_r\)}
    \STATE Draw \(c \sim p(c)\),  \(y_{r} \sim p(y \mid x^*, \mathcal{M})\)  
    \STATE \(n \leftarrow \sum_j \mathbf{1}[d(y_j, c) \le d(c, y_{r})]\)
    \STATE \(k \leftarrow \mathbf{1}[d(c, y^*) \le d(c, y_{r})]\)
    \STATE \texttt{score} \(+= \frac{n + 1}{N + 2}\) if \(k=1\), else \(+= \frac{N - n + 1}{N + 2}\)
  \ENDFOR
\ENDFOR
\STATE \textbf{return} \(\mu_\mathrm{Mira}(\mathcal{M}) = \frac{\texttt{score}}{L \cdot L_r}\)
\end{algorithmic}
\end{algorithm}

We estimate the uncertainty of the \name score using non-parametric bootstrapping over the fiducial set. Let $\mathcal{D} = \{(x^*_k, y^*_k)\}_{k=1}^L$ denote the set of $L$ fiducials. To estimate the variance, we perform $B$ bootstrap iterations. In each iteration $b$, we construct a resampled dataset $\mathcal{D}_b$ by drawing $L$ indices from $\{1, \dots, L\}$ with replacement. We then compute the aggregate \name score $\mu_b$ for this resampled set using Algorithm~\ref{alg:Mira-score}. The uncertainty we report is the variance of the bootstrap means
. This approach captures the variability in the finite set of fiducials and accounts for the Monte Carlo integration variability over regions.

We note that \name plays a pivotal role in alleviating the curse of dimensionality, as it transforms the comparison of high-dimensional distributions into a comparison of one-dimensional probability mass functions. By defining regions in a data-dependent manner, \name is able to identify misspecified conditional distributions even when only relatively small sample sizes are available, as demonstrated by the sensitivity analysis presented in Appendix \ref{app:Mira_Sensitivity}.

\section{Related Work}
A number of methods have been proposed to assess the accuracy of conditional distributions $p(y \mid x)$, especially in settings where access to the true distribution is limited to samples from the joint $p(x,y)$. We briefly review key approaches that \name complements or improves upon.

{\bf Coverage Probability Tests}
A cornerstone of model validation is coverage probability, which measures how often a region or interval derived from a conditional distribution $\hat p(y \mid x)$ contains the true outcome $y$, across repeated draws  $(x,\, y)\sim p(x,\, y)$~\citep{Neyman1937, Talts2018}. While widely used, how discriminative these tests are depends on the choice of credible region, and their result does not quantify the degree of bias measured when they reveal misspecification~\citep{patel2024variational}.

A common method for constructing these regions is to use High Posterior Density (HPD) intervals, which define the smallest region of $p(y\mid x)$ containing a specified probability mass~\citep{BoxTiao1973}. These intervals capture the most probable outcomes and have gained popularity as an accuracy metric in Simulation-Based Inference approaches~\citep {hermans2022a}. However, their construction requires access to a density function, which is often unavailable, and their coverage can be ambiguous in high-dimensional or multimodal settings~\citep{betancourt2017conceptual, Hyndman1996}. Moreover, achieving nominal coverage with this choice of region is not a sufficient condition for accuracy.

Recently, methods have emerged to construct guaranteed coverage regions in likelihood-free settings. For instance,  TARP~\citep{Lemos2023} was proposed as a randomized scheme to construct regions in a way that can guarantee accuracy even for high-dimensional and multimodal conditional distributions. WALDO ~\citep{Masserano2023}, on the other hand, constructs valid frequentist confidence regions for model parameters. See also related developments in frequentist-calibrated posterior inference~\citep{Dey2022a, Dey2022b}.  However, the problems of quantifying degrees of accuracy across multiple models and of interpretability in large dimensional spaces remain unaddressed by these tests.

{\bf Probability Integral Transform Tests} Beyond testing specific regions, some diagnostic tools focus on the entire distribution, like the Probability Integral Transform (PIT)~\citep{Dawid1984, Hyndman1996, 1996Anserdon,  Diebold1998}. For a continuous univariate variable $y$, the PIT is the value of the cumulative distribution function (CDF) evaluated at the true observation, $z=F_{y\mid x}(y\mid x)$. If the proposal distribution is perfectly calibrated, these $z$ values must be uniformly distributed, with deviations revealing systematic biases~\citep{Gneiting2007}. The agreement between the $z$ and uniform distributions is often tested with a p-value or Kolmogorov-Smirnov (KS) test, making this class of test naturally framed in a frequentist framework. However, when the Agnesti-Coull adjustment is applied to PIT, the empirical CDF can be smoothed with a different choice or prior, reframing this class of test in a Bayesian context. While there exists a connection which is made obvious through the rank-ordering reformulation of \name in Appendix \ref{subsec:rankalternative}, generalizing PIT diagnoses to high-dimensional outputs is non-trivial. Common approaches therefore rely on extensions such as dimensionality reduction~\citep{Gneiting2008}, the Rosenblatt transformation~\citep{Rosenblatt1952}, or, more practically in modern sample-based settings, on rank-based diagnostics as used in Simulation-Based Calibration (SBC)~\citep{Gneiting2008, Talts2018}. However, SBC evaluates one dimension at a time, which ignores joint correlation and makes it difficult to scale to high-dimensional setting, both computationally and in terms of interpretability. In addition, SBC typically requires a large number of fiducial samples $(x^*,y^*) \sim p(x,y)$ (i.e., a large number of conditionals $p(y\mid x = x^*)$) to obtain reliable rank histograms.

{\bf Predictive Tests}
Predictive tests compare model-generated data to observed data. Prior Predictive Checks involve generating data from the model using only the prior distribution, $y_\text{pred} \sim p(y), \, x_\text{pred} \sim p(x\mid y_\text{pred} )$, exhibiting the implications of the chosen priors and model structure~\citep{Box1980}.
Posterior Predictive Checks (PPCs), on the other hand, evaluate model adequacy after fitting~\citep{Rubin1984, Meng1994, gelman1996posterior}. They operate by comparing observed data to replicated data drawn from the posterior predictive distribution, $x_\text{rep} \sim p(x\mid y), \, y\sim \hat{p} (y\mid x_\text{obs} )$. While widely used, PPCs operate in observation space and may miss errors in the posterior $\hat p(y\mid x)$, especially when the mapping $y\rightarrow x$ is complex or non-injective. A key challenge in PPCs is selecting and interpreting discrepancy statistics. Evidence-based methods offer solutions to some of these problems. For instance, Neural Evidence Estimators have been proposed to directly and efficiently compute the Bayesian evidence (or marginal likelihood), enabling model comparison, particularly for models with intractable likelihoods~\citep{radev2021, Jeffrey_2024}.

{\bf Two-Sample Tests}
Standard metrics like Maximum Mean Discrepancy (MMD) \citep{mmd}, Wasserstein distance \citep{Villani2009}, and Classifier Two-Sample Tests (C2ST) \citep{c2st} provide powerful methodology for assessing differences between probability distributions. However, their application typically presupposes access to multiple i.i.d. samples from each distribution under comparison. In our setting, we seek to evaluate a candidate conditional distribution $\hat{p}(y \mid x)$ against the true but unknown conditional $p(y \mid x)$, for which, in practice, only a single ground-truth realization $y$ is observed for each $x$, as data are available only in the form of joint samples $(x,y) \sim p(x,y)$. A recent variant of C2ST designed to operate under such constraints has been proposed by \citet{lc2st}. While this approach is promising, it inherits from classical C2ST the reliance on training a classifier to discriminate between two sets of samples, and therefore requires a substantial number of samples. Moreover, its scalability to high-dimensional settings is unclear.

\section{Experiments}
\label{sec:Experiments}

In Section \ref{sec:Baseline}, we show that \name detects overconfident, underconfident, and biased posteriors, and in Section \ref{sec:Cond_Dist} that it identifies inaccurate conditional distributions. We then move to more realistic Bayesian inference problems, demonstrating in Section \ref{sec:Model_Misspecification} that \name detects likelihood misspecification, and in Section \ref{sec:StrongLensing} that it identifies misspecified priors and incorrect noise models, enabling Bayesian model comparison of different physical models. In Appendix \ref{app:blackhole} and Appendix \ref{app:mri}, we further evaluate \name on inverse imaging problems, comparing different generative models in their ability to approximate posterior distributions. Notably, the experiments in Section \ref{sec:StrongLensing}, Appendix \ref{app:blackhole}, and Appendix \ref{app:mri} involve complex, high-dimensional data.
In addition, Appendix \ref{app:Mira_Sensitivity} presents a sensitivity analysis demonstrating the robustness of \name to key design choices, including dimensionality, the number of regions, the numbers of conditional and true samples ($N$ and $L$), the distance metric $d$ used to define regions, and the distribution $p(c)$ used to sample region centers. In Appendix \ref{app:uninformative}, we show how \name can detect uninformative posterior estimator. Finally, in Appendix \ref{app:Mira_BF_Comp}, we show strong agreement between \name and the Bayes factor, and in Appendix \ref{app:score_comparison} we compare \name to existing conditional accuracy scores, highlighting failure modes of competing methods that \name avoids.

To confirm \name outputs, we use the TARP \citep{Lemos2023} Expected Coverage Probability (ECP) test that operates in the same joint samples setting. The ECP assesses, for all conditioning variables, the frequency with which conditional distribution samples fall within a credible region. In contrast to Highest Posterior Density (HPD) coverage, TARP provides a procedure to compute this ECP such that the conditional distribution is considered accurate if and only if, for every credible region with credibility $1-\alpha$, the corresponding ECP equals $1-\alpha$. Accordingly, in the associated plots, an accurate conditional distribution yields a curve that coincides with the diagonal line. A key limitation of TARP is that deviations from the diagonal line are difficult to quantify, making direct model comparison challenging. This comparison thus also enables us to emphasize concrete cases where \name is of particular importance compared to TARP  (e.g., the blue, red, and maroon curves in \autoref{fig:MM_Result}).

In the following, samples $\{y_j\}_{j=1}^N$ are normalized to $[0, 1]$ with $y^*$ following the same normalization, $p(c) = \mathcal{U}(0, 1)$, we use the euclidean distance to build the region $\mathcal{R}$, and we generate $100$ regions $\mathcal{R}$ per true sample $(x^*, y^*)$. 
\begin{figure}[t]
    \centering\includegraphics[width=0.85\columnwidth]{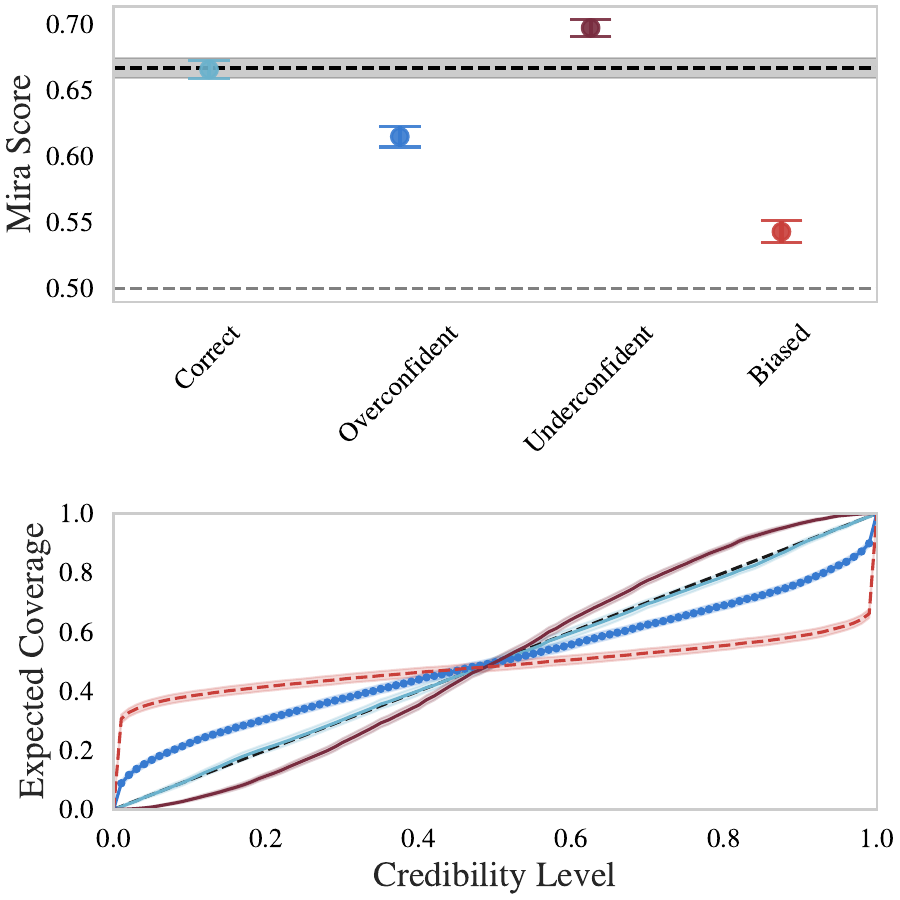}
    \caption{Test of \name's ability to detect overconfident, underconfident, and biased distributions. \textbf{Top:} The \name score separates these cases, with the true distribution attaining the expected value of $2/3$. The shaded gray region indicates the theoretical estimation uncertainty, $2/3 \pm \sqrt{1/18L}$. \textbf{Bottom:} TARP confirms \name's results, with the correct distribution lying on the diagonal.
}
\label{fig:S41_Tarp}
\end{figure}

\subsection{Detecting Over and Under Confidence}\label{sec:Baseline}

To evaluate \name on detecting overconfident, underconfident, and biased conditional distributions, we replicate the toy inference setup of \citet{Lemos2023}.
We sample $L = 1\,000$ parameters $\theta^* \sim \mathcal{U}([-5,5]^2)$ and, for each, define a covariance $\Sigma$ that is diagonal with $\log \sigma_i \sim \mathcal{U}(-5,-1)$. The candidate distribution is fixed to $\hat{p}(y \mid \theta^*) = \mathcal{N}(\theta^*, \Sigma)$ across all four scenarios. What varies is how the ground-truth sample $y^*$ is drawn. Specifically, we consider the correct model to be $p(y) = \mathcal{N}(\theta^*, \Sigma)$, the overconfident model is $p(y) = \mathcal{N}(\theta^*, 3\Sigma)$, the underconfident is  $p(y) = \mathcal{N}(\theta^*, 0.5\Sigma)$, and the biased use $p(y) = \mathcal{N}(\theta^*, \Sigma)$ but shift the candidate as $\mathcal{N}(\theta^* - \mathrm{sign}(\theta^*)\, Z(1 - |\theta^*|/5)\, \sigma, \Sigma)$, with $Z$ the inverse survival function.
To compute \name, we draw $N = 500$ candidate samples per observation. As shown in \autoref{fig:S41_Tarp}, \name correctly identifies the true model, assigns lower scores to the overconfident and biased cases, and a higher one to the underconfident case. These findings are consistent with the TARP coverage curves. Beyond detecting these failure modes, the sign of the deviation from $2/3$ is itself informative: scores below
$2/3$ are consistent with overconfidence, while scores above $2/3$, underconfidence (see Appendix \ref{app:interpretability}).

\begin{figure}[t]
    \centering
\includegraphics[width=0.85\columnwidth]{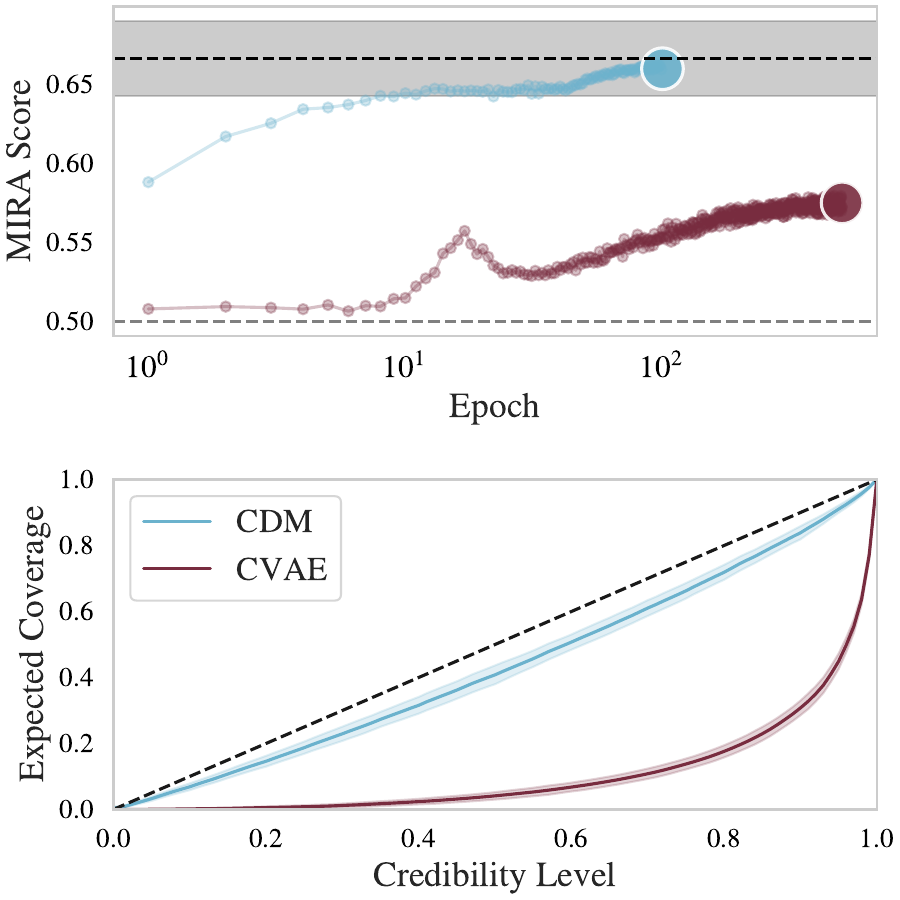}
    \caption{Comparison of two conditional generative models learning $p(\text{MNIST image} \mid \text{label})$. \textbf{Top:} \name score as a function of training (100 epochs for the Conditional Diffusion Model, CDM, 500 epochs for Conditional VAE). The CDM approaches the ideal score of $2/3$, while the conditional VAE performs worse. The gray region indicates the theoretical uncertainty, $2/3 \pm \sqrt{1/18L}$. \textbf{Bottom:} TARP validates the results.
}
\label{fig:Cond_Distb_Result}
\vspace{-0.7cm}
\end{figure}
\subsection{Conditional Distribution}
\label{sec:Cond_Dist}

We demonstrate that \name can be employed to detect inaccurate conditional distribution. 
To this end, we consider the MNIST digits dataset~\citep{lecun1998mnist}
and use \name to assess how accurately a Conditional Variational Autoencoder \citep[CVAE,][]{ivanov2019, Sohn2015} and a conditional diffusion model \citep{ho2020denoising} approximate the true conditional distribution $p(\text{image} \mid \text{label})$. Architecture details are provided in Appendix \ref{app:gen_models}. 
For each generative model, we draw $N = 1\,000$ samples per label and compare them with $100$ randomly selected MNIST test images, which yields $L = 100$ true samples for the MC approximation (see \autoref{eq:Mira_monte_carlo}). 
As shown in \autoref{fig:Cond_Distb_Result}, the conditional diffusion model exhibits superior performance relative to the CVAE, attaining a \name score that is close to the expected theoretical mean of $2/3$, whereas the CVAE attains a score of approximately $0.56$. These findings are corroborated using TARP. In \autoref{fig:cGenModel}, we present generated samples from both models for each label, confirming that the conditional diffusion model produces images of higher perceptual fidelity than those generated by the CVAE.

\subsection{Bayesian Inference and Bayesian Model Comparison}
Bayesian inference aims to infer the posterior distribution $p(y\mid x, \mathcal{M}^*)\propto p(x\mid y, \mathcal{M}^*)p(y\mid \mathcal{M}^*)$ of a forward model $\mathcal{M}^*$ of parameters $y$ and observations $x$. 
In this context, \name can provide a computationally efficient approach to Bayesian model comparison, particularly in high-dimensional settings. By quantifying the discrepancy between candidate and reference posteriors, \name directly ranks models in parameter space without requiring often intractable evidence computations. It evaluates model fidelity based on parameter distributions rather than data likelihoods, avoiding the traditional bottlenecks of the Bayes factor while offering complementary insights (see Appendix~\ref{app:Mira_BF_Comp}).

\begin{figure}[t]
    \centering
    \includegraphics[width=0.85\columnwidth]{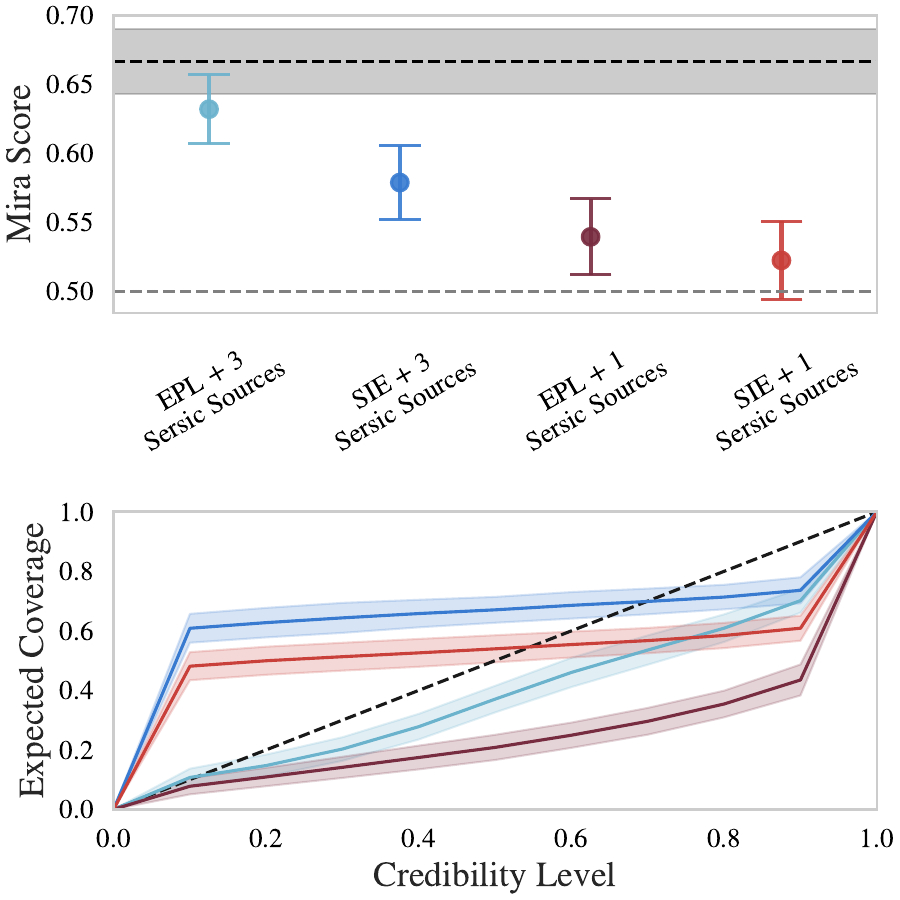}
    \caption{Comparison of strong gravitational lensing likelihood models. \textbf{Top:} \name scores identify the correct model and rank them accurately. The gray band is the theoretical uncertainty, $2/3 \pm \sqrt{1/18L}$. \textbf{Bottom:} TARP coverage plots. While the correctly specified model can be identified, the curves of the remaining models are difficult to interpret and rank.
}
\vspace{-0.5cm}
\label{fig:MM_Result}
\end{figure}
\subsubsection{Detecting Physical Model mismatch}
\label{sec:Model_Misspecification}

We aim to demonstrate that \name can be used to detect likelihood mismatch. For this, we consider an astrophysical inference problem.
Strong gravitational lensing is the distortion of background galaxy images (sources) by intervening matter (the lens). Our inference task is to jointly infer the lens and source parameters from a distorted image.
The true model $\mathcal{M}^*$ is a forward model where the distorted image is generated with an elliptical power-law (EPL) lens and 3 Sérsic sources (see Appendix \ref{app:Lensed_Param_Images} for details). We compare four candidate forward models that differ in lens type and source count: EPL + 3 Sérsic sources (ground truth), singular isothermal ellipsoid (SIE) + 3 Sérsic sources, EPL + 1 Sérsic source, and SIE + 1 Sérsic source.
All models share the same prior distribution over parameters. To run \name, we draw $L = 100$ true samples from the true forward model and use Metropolis-adjusted Langevin sampling (MALA; \citealp{roberts1996exponential}) to obtain 100 posterior distributions, each with $N = 20\,000$ samples in a 13-dimensional space (see \autoref{fig:Lensed_Param_Samples} for the samples). \autoref{fig:MM_Result} shows that \name correctly identifies EPL + 3 Sérsic sources as the true model, yielding a score closest to $2/3$. As expected, the results indicate that matching the number of sources is more critical than matching the lens profile. While TARP is initially used to validate our results, this example highlights the importance of a score such as \name, which provides an unambiguous model ranking in settings where TARP curves are difficult to interpret.

\begin{figure}[t]
    \centering
    \includegraphics[width=0.85\columnwidth]{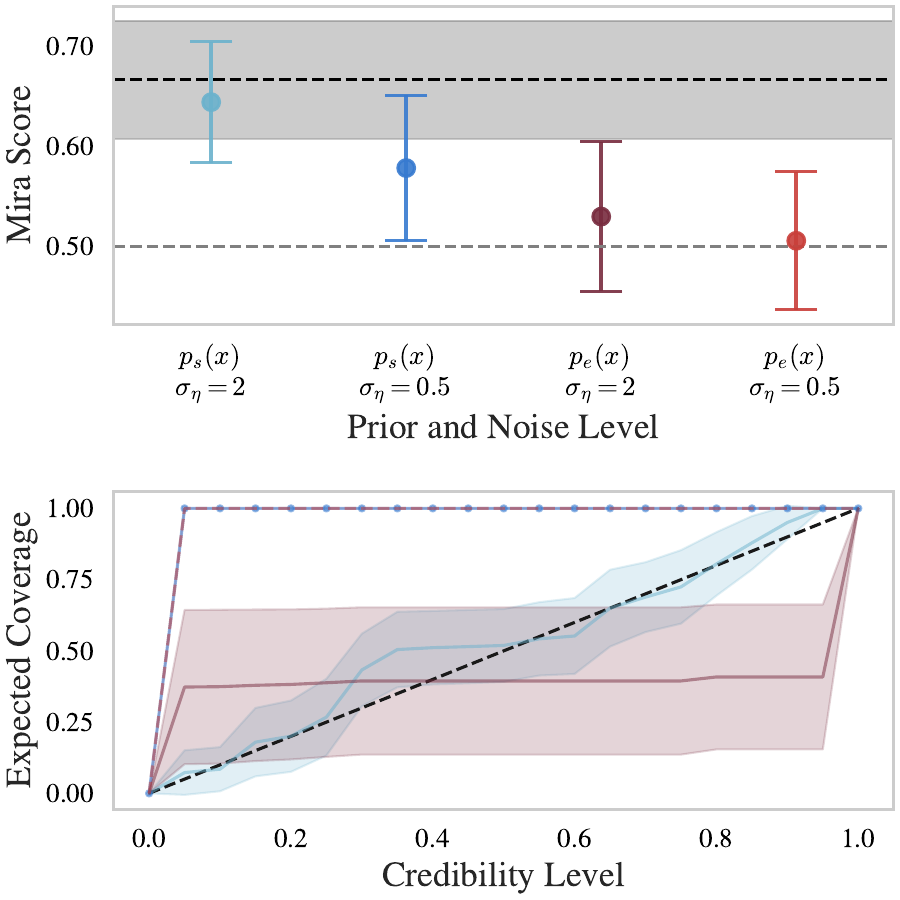}
    \caption{Comparison of prior and noise models for source reconstruction in strong gravitational lensing. \textbf{Top:} \name scores identify the correct model and accurately rank the remaining candidates, despite the limited data. The shaded gray band indicates the theoretical uncertainty, $2/3 \pm \sqrt{1/18L}$. \textbf{Bottom:} TARP coverage plots. Although the correctly specified model can be identified, the remaining models are difficult to reliably rank.
}
\vspace{-0.5cm}
\label{fig:Lensing_Result}
\end{figure}
\subsubsection{Detecting prior distribution and noise model shifts}
\label{sec:StrongLensing}

This experiment tests whether \name can detect wrong prior distribution and likelihood noise, and whether it performs well in high dimensions with few true samples. We consider a high-dimensional strong gravitational lensing inference task: inferring the source galaxy image from a noisy lensed image.
In our forward models, the prior generates a pixelated galaxy image of size $64\times64\times3$, which is then lensed using a fixed forward model. Gaussian noise with standard deviation $\sigma_{\boldsymbol{\eta}}$ is added. Within this setup, we define four simulation models by varying the prior and noise:
(1) spiral galaxies $p_s(x)$ and $\sigma_{\boldsymbol{\eta}} = 2$ (ground truth),
(2) spiral galaxies $p_s(x)$ and $\sigma_{\boldsymbol{\eta}} = 0.5$,
(3) elliptical galaxies $p_e(x)$ and $\sigma_{\boldsymbol{\eta}} = 2$,
(4) elliptical galaxies $p_e(x)$ and $\sigma_{\boldsymbol{\eta}} = 0.5$.
We draw $L=16$ true samples and, for each, generate $N=64$ posterior samples using a score-based diffusion model (see Appendix \ref{app:Lensed_Galaxy_Images} for details).
Example observations (lensed galaxies + noise), parameters (source galaxies), and posterior samples (inferred sources) for each configuration are shown in Appendix \ref{app:Lensed_Galaxy_Images}.
We then apply \name to evaluate the four models. As shown in \autoref{fig:Lensing_Result}, \name ranks them correctly: the true model is closest to $2/3$; and having the true prior distribution is more critical than having the right noise model.
This demonstrates that \name scales to high dimensionality ($3\times64\times64$) and can detect wrong prior and noise model, even with few true samples $L$ (see Appendix \ref{app:Mira_Sensitivity} for further investigation of this regime).

\section{Discussions and Conclusion}
\label{sec:Conclusion}

We introduced \name, a method to assess the accuracy of candidate conditional distributions using only joint samples from the true data generating process. By assigning a score to each candidate, \name naturally enables model comparison. It is particularly useful for Bayesian model comparison as a complement to the Bayes factor, which operates in observation space, since \name instead ranks models by comparing their posteriors in parameter space.

\name relies on the principle that two distributions coincide if they assign equal probability mass to all regions. By constructing regions from samples of the candidate conditional distribution, under the null hypothesis, we obtain a closed-form expression for the probability that a true sample falls within a region given the number of candidate samples in that region. Averaging this quantity across conditioning variables and regions yields the \name score. The analytic formulation further allows us to derive theoretical reference value ($2/3$) and theoretical uncertainty ($1/18$) for correctly specified models. Unlike many existing approaches, \name scales naturally to high-dimensions, does not require neural network training, and operates using only joint samples.

Across a broad set of experiments, \name consistently identifies correctly specified models and reliably ranks misspecified alternatives. Sensitivity studies in Appendix \ref{app:Mira_Sensitivity} show that the method remains robust to dimensionality and to the number of true and conditional samples. We also demonstrate performance on "realistic" high-dimensional inference tasks with limited samples (e.g., Section \ref{sec:StrongLensing}). Finally, comparisons with existing conditional accuracy diagnostics in Appendix \ref{app:score_comparison} reveal scenarios in which the tested methods fail while \name remains reliable.

\section*{Impact Statement}

This paper presents work whose goal is to advance the field of Machine
Learning. There are many potential societal consequences of our work, none
which we feel must be specifically highlighted here.

\section{Acknowledgment}

The authors would like to thank Benjamin Remy for the valuable discussions. This research work is partially supported by European Commission funding under the Marie Skłodowska-Curie grant agreement No. 101127936 - DeMythif.AI. This work is partially supported by France 2030 funding, managed by the National Research Agency (ANR), as part of IA CLUSTER program, reference ANR-23-IACL-0003 - DATAIA CLUSTER. This work is partially supported by Schmidt Futures, a philanthropic initiative founded by Eric and Wendy Schmidt as part of the Virtual Institute for Astrophysics (VIA). This work was partially supported by the Fond de Recherche du Quebec through grant doi.org/10.69777/348060,  and by computational resources provided by Calcul Quebec and the Digital Research Alliance of Canada. Y.H. and L.P. acknowledge support from the Canada Research Chairs Program, the National Sciences and Engineering Council of Canada through grants RGPIN-2020-05073 and 05102.

\bibliography{example_paper}
\bibliographystyle{icml2026}

\newpage
\appendix
\onecolumn
\section{Proofs}

In this section, we provide the proofs of the Propositions and Corollary of the paper.

\subsection{Uniform Mass}
\label{app:unif}

\begin{proposition}[Uniform Mass]
Let $\mathcal{R}$ be the random region defined in \autoref{eq:region}. For any realization of the true observation $x^*$ and any realization of the center $c$, the probability mass $\lambda_n$ assigned to the resulting region $\mathcal{R}$ follows a Uniform distribution on $[0,1]$, that is, $\lambda_n \mid x^*, c \sim \mathcal{U}[0,1].$
As a consequence, the marginal distribution of $\lambda_n$ is also Uniform on $[0,1]$.
\end{proposition}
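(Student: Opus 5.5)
Fix $x^*$ and $c$. The plan is to reduce the claim to the probability integral transform for a one-dimensional random variable. Define the scalar random variable $D = d(Y, c)$ with $Y \sim p(y \mid x^*, \mathcal{M})$, and let $F(t) = \mathbb{P}_{\mathcal{M}}\bigl(d(Y,c) \le t \mid x^*\bigr)$ be its cumulative distribution function. By the definition of $\mathcal{R}$ in \autoref{eq:region}, the region is the closed ball of radius $d(y_r,c)$ around $c$. Hence
\begin{align*}
\lambda_n = \int_{\mathcal{R}} p(y \mid x^*, \mathcal{M})\,dy = F\bigl(d(y_r, c)\bigr).
\end{align*}
Since $y_r$ is drawn from the same conditional distribution $p(y\mid x^*,\mathcal{M})$, independently of $c$, the quantity $d(y_r,c)$ is a draw of $D$. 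Therefore $\lambda_n \mid x^*, c = F(D)$, and the classical probability integral transform gives $F(D) \sim \mathcal{U}[0,1]$ whenever $F$ is continuous.

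The key steps, in order, are as follows.

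\begin{enumerate}
\item Write $\lambda_n$ as $F(d(y_r,c))$ using the ball structure of $\mathcal{R}$. The boundary $\{d(y,c)=d(y_r,c)\}$ causes no trouble, because we use $\le$ consistently in both the region and $F$.
\item Argue that $F$ is continuous. Since $p(y\mid x^*,\mathcal{M})$ has a Lebesgue density on $\mathcal{Y}$, it suffices that every sphere $\{y : d(y,c)=t\}$ has Lebesgue measure zero. This holds for the Euclidean metric, and more generally for norm-induced metrics.
\item Apply the standard argument. For $u\in(0,1)$, let $t_u = \inf\{t : F(t)\ge u\}$. Continuity gives $F(t_u)=u$ and $\{F(D)\le u\} = \{D\le t_u\}$ up to a null set, so $\mathbb{P}(F(D)\le u) = F(t_u) = u$.
\item Obtain the marginal statement by integrating over $x^*$ and $c$. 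Because the conditional law $\mathcal{U}[0,1]$ does not depend on $(x^*,c)$, we get $\mathbb{P}(\lambda_n\le u) = \mathbb{E}_{x^*,c}[u] = u$.
\end{enumerate}

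The main obstacle is step 2, the continuity of $F$, which is also where the hypotheses must be made explicit. If the distance level sets could carry positive mass, $F$ would have atoms and $F(D)$ would fail to be uniform. An example is a degenerate metric, or a candidate distribution with atoms, such as an empirical or discretized generator. I would therefore state the assumption that $p(\cdot\mid x^*,\mathcal{M})$ is absolutely continuous and that $d$ has null spheres. I would then verify it for the Euclidean distance used in the paper, via the fact that a sphere is a lower-dimensional smooth manifold, and remark that excluding $c=y_r$ is automatic since that event has probability zero.
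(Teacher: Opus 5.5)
Your proposal is correct and follows essentially the same route as the paper: both identify $\lambda_n \mid x^*, c = F_D\bigl(d(y_r,c)\bigr)$ with $D = d(Y,c)$ and $Y \sim p(y\mid x^*,\mathcal{M})$, apply the probability integral transform, and then marginalize over $(x^*,c)$. Your version is more careful in two places the paper takes for granted when it calls $D$ ``the continuous random variable'' and writes $F_D^{-1}$: you justify the continuity of $F_D$ (Lebesgue-null spheres under a density), and you prove the generalized-inverse equivalence $\{F_D(D)\le u\} = \{D \le t_u\}$ up to a null set.
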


\begin{proof}
    We prove using the probability integral transform (PIT) that $p(\lambda_n) = \mathcal{U}[0,1]$ where $\lambda_n$ is the random variable defined as 
\begin{align}
    \lambda_n = \int p(y\mid x^* , \mathcal{M}) \mathbbm{1}\left(d(y, c) \leq d(y_r, c)\right)dy,
\end{align}
with $x^* \sim p(x\mid y^*, \mathcal{M}^*)$, $y_r \sim p(y\mid x^*, \mathcal{M})$ and $c \sim p(c)$. We start by considering the conditioned random variable $\lambda \mid x^*, c$ and derive its cumulative distribution function 
\begin{align}
    F_{\lambda_n \mid x^*, c}(\alpha) &= \int p(\lambda_n\mid x^*, c) \mathbbm{1}(\lambda_n \leq \alpha) d\lambda_n \nonumber\\
     &= \int \int p(\lambda_n\mid x^*, c, y_r) p(y_r \mid x^*, \mathcal{M}) \mathbbm{1}(\lambda_n \leq \alpha)  d\lambda_n dy_r \nonumber\\
    &= \int p(y_r \mid x^*, \mathcal{M}) \mathbbm{1}\left(\left( \int p(y\mid x^* , \mathcal{M}) \mathbbm{1}\left(d(y, c) \leq d(y_r, c)\right)dy\right)\leq \alpha\right)  dy_r,
\end{align}

Where we used that $p(\lambda_n \mid x^*,c, y_r)$ is a Dirac. Recognizing that $\lambda_n \mid x^*, c = F_{D}\left( d(y_r,c)  \right)$
where $D$ is the continuous random variable $d(y, c)$ with $y \sim p(y \mid x^*, \mathcal{M})$, we derive 
\begin{align}
    F_{\lambda_n \mid x^*, c}(\alpha) 
    &= \int p(y_r \mid x^*, \mathcal{M}) \mathbbm{1}\left(F_{D}\left( d(y_r,c)  \right)\leq \alpha\right) dy_r\nonumber\\
    &= \int p(y_r \mid x^*, \mathcal{M}) \mathbbm{1}\left( d(y_r,c)\leq F^{-1}_{D}\left(\alpha\right)\right) dy_r\nonumber\\
    &= F_{D}\left( F^{-1}_{D}\left(\alpha\right)  \right) \nonumber\\
    &= \alpha.
\end{align}
Hence, $\lambda_n \mid x^*, c \sim \mathcal{U}\left[0,1\right]$ for all $x^*, c \sim p(x\mid y^*, \mathcal{M}^*)p(c)$. Thus the marginal $p(\lambda_n) = \mathcal{U}[0,1]$.
\end{proof}

\subsection{\name Statistic Derivation}
\label{app:Mira_Proof}

\begin{proposition}[\name statistic]
Assume the null hypothesis holds, such that the deterministic variables $\lambda_n\mid x^*,y_r,c$ and  $\lambda_k\mid x^*,y_r,c$ are equal almost surely. Under this assumption, the probability of $k$ given that $n$ samples from $\{y\}_{j=1}^N$ fall in $\mathcal{R}$, is given by Laplace's rule of succession:
\begin{align}
  p\bigl(k \mid n\bigr)
  = \frac{n + 1}{N + 2}\mathbbm{1}(k=1) + \frac{N - n + 1}{N + 2}\mathbbm{1}(k=0).
\end{align}

\end{proposition}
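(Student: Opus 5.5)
The plan is to treat the common mass $\lambda := \lambda_n = \lambda_k$ as a latent variable, marginalise it out, and then apply Bayes' rule. Conditional on $(x^*, y_r, c)$, the region $\mathcal{R}$ is fixed, so $\lambda$ is deterministic.

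\textbf{Step 1: conditional independence given $\lambda$.} Once $(x^*, y_r, c)$ is fixed, the candidate draws $\{y_j\}_{j=1}^N$ and the true sample $y^*$ are independent. Hence $n \mid \lambda \sim \mathcal{B}(N,\lambda)$ and $k \mid \lambda \sim \mathcal{B}(1,\lambda)$ independently. The null hypothesis lets both binomials share the single parameter $\lambda$.

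\textbf{Step 2: the law of $\lambda$.} By \autoref{prop:unif}, $\lambda \mid x^*, c \sim \mathcal{U}[0,1]$, and so is its marginal. I will work with the joint law of $(k,n)$ marginalised over $(x^*, y_r, c)$, equivalently over $\lambda$. This is legitimate because the conditional law of $(k,n)$ given $(x^*,y_r,c)$ depends on these variables only through $\lambda$.

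\textbf{Step 3: compute the joint law.} Integrating against the uniform density gives
\begin{align*}
p(k,n) = \int_0^1 \binom{N}{n}\lambda^{n+k}(1-\lambda)^{N-n+1-k}\,d\lambda
= \binom{N}{n} B(n+k+1,\, N-n+2-k).
\end{align*}
For the marginal of $n$, the same computation gives $p(n) = \binom{N}{n}B(n+1,N-n+1) = 1/(N+1)$.

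\textbf{Step 4: Bayes' rule.} Dividing $p(k,n)$ by $p(n)$ and using $B(a+1,b)/B(a,b) = a/(a+b)$, I get $(n+1)/(N+2)$ for $k=1$. Likewise I get $(N-n+1)/(N+2)$ for $k=0$. Combining the two cases with indicators gives the stated formula.

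\textbf{Main obstacle.} The integrals themselves are routine. The delicate point is justifying the marginalisation in Step 2. One must check that $k$ and $n$ are conditionally independent given $\lambda$, that is, given the region. One must also check that the conditioning on $y_r$ is properly integrated out, so that the uniform law from \autoref{prop:unif} is the correct mixing distribution. A subtle detail is that $y_r$ is drawn separately from $\{y_j\}$, so $n$ is genuinely $\mathcal{B}(N,\lambda)$ with no forced inclusion of $y_r$. I would state this explicitly, along with the exclusion of the measure-zero case $c = y_r$.
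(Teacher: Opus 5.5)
Your proposal is correct and follows essentially the same route as the paper. Both arguments treat the common mass $\lambda$ as a latent variable with uniform law (the Uniform Mass proposition), use the conditional independence $k\mid\lambda\sim\mathcal{B}(1,\lambda)$ and $n\mid\lambda\sim\mathcal{B}(N,\lambda)$, and apply Bayes' rule with Beta integrals to obtain Laplace's rule of succession; your computations, including $p(n)=1/(N+1)$ and the ratio $B(a+1,b)/B(a,b)=a/(a+b)$, check out.
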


\begin{proof}
We begin by marginalizing over the region $\mathcal{R}$ (i.e., $y_r$ and $c$), the observation $x^*$ and the shared deterministic parameter $\lambda\mid x^*,\mathcal{R}$, where we defined $\lambda\mid x^*,\mathcal{R} =\lambda_n\mid x^*,\mathcal{R}= \lambda_k\mid x^*,\mathcal{R}$ under the null hypothesis:
\begin{align}
    p(k\mid n) &= \int\int \int p(k, \lambda, \mathcal{R},x^* \mid n) d\lambda d\mathcal{R} d x^* \nonumber,\\
    &= \int\int \int \frac{p(k, n \mid \lambda, \mathcal{R},x^*) p(\lambda, \mathcal{R},x^*)}{p(n)} d\lambda d\mathcal{R} d x^* \nonumber,
\end{align}
Because $k$ and $n$ are independent of $\mathcal{R}$ and $x^*$ given $\lambda$ we have
\begin{align}
    p(k\mid n) 
    &= \int\int \int \frac{p(k, n \mid \lambda, ) p(\lambda, \mathcal{R},x^*)}{p(n)} d\lambda d\mathcal{R} d x^* \nonumber,\\
    &= \int \frac{p(k, n \mid \lambda) p(\lambda)}{p(n)} d\lambda  \nonumber,\\
    &= \int p(k\mid n, \lambda) p(\lambda\mid n) d\lambda.
\end{align}
Since $k$ is independent of $n$ given $\lambda$ the probability becomes 
\begin{align}
    p(k\mid n)
    = \int p(k\mid\lambda) p(\lambda \mid n) d\lambda.
\end{align}
By applying Bayes theorem, we obtain 
\begin{align}
    & p(k\mid n)
     =\frac{1}{p(n)} \int p(k\mid \lambda) p(n\mid\lambda) p(\lambda )d\lambda.
\label{eq:pkn}
\end{align}

Using that $p(\lambda_n)$ is uniform into \autoref{eq:pkn} yield
\begin{align}
    p(k\mid n ) = \frac{1}{p(n)}\int_0^1 p(k\mid \lambda) p(n \mid \lambda)d\lambda.
\end{align}
Recalling that $p(k \mid \lambda) = \mathcal{B}(1,\lambda)$ and $ p(n \mid \lambda) = \mathcal{B}(N,\lambda)$ we have
\begin{align}
    &p(k \mid n) = \frac{\binom{1}{k} \binom{N}{n}}{\int_0^1 p(n \mid\lambda)d\lambda}\int_0^1 \lambda^{k+n} (1 - \lambda)^{1-k +N-n} d\lambda.
\end{align}
Recognizing the Beta distribution we end up with
\begin{align}
    p(k \mid n) 
    &= \binom{1}{k+1} \frac{\beta(k+n+1, 2-k+N-n)}{\beta(n+1, N-n +1)} \nonumber\\
    &= \frac{\Gamma(k + n + 1 )\Gamma(2-k+N-n)}{\Gamma(2 -k ) \Gamma(k +1) \Gamma(n+1)\Gamma(N-n+1)(N+2)}
\end{align}
Finally, we have
\begin{align}
  p\bigl(k \mid n\bigr)
  = \frac{n + 1}{N + 2}\mathbbm{1}(k=1) + \frac{N - n + 1}{N + 2}\mathbbm{1}(k=0).
\end{align}
\end{proof}

\subsection{\name Statistic Law}
\label{app:beta(2,1)}

\begin{proposition}[\name statistic law]
   Suppose the two random variables $\lambda_n\mid x^*,y_r,c$ and  $\lambda_k\mid x^*,y_r,c$ are equal almost surely. Then the random variable $P_N := p(k\mid n)$ converges in distribution to a Beta$(2,1)$ random variable as $N$ goes to infinity: 
   \begin{align}
       P_N \xrightarrow{d} \text{Beta}(2,1).
   \end{align}

\end{proposition}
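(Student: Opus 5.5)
We will compute the exact finite-$N$ law of $P_N$ and then pass to the limit by writing it as a function of a uniform proportion. Under the null hypothesis, write $\lambda$ for the common value of $\lambda_n$ and $\lambda_k$ given $(x^*,y_r,c)$. By Proposition~\ref{prop:unif}, $\lambda\sim\mathcal{U}[0,1]$. Conditionally on $\lambda$, the counts are independent with $n\sim\mathcal{B}(N,\lambda)$ and $k\sim\mathcal{B}(1,\lambda)$, as in the derivation of Proposition~\ref{prop:mira_stat}. Hence $P_N=\frac{n+1}{N+2}$ on $\{k=1\}$ and $P_N=\frac{N-n+1}{N+2}$ on $\{k=0\}$.

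The cleanest route is a coupling. Conditionally on $\lambda$, set $U_N := n/N$ and note that $U_N\to\lambda$ almost surely and in probability, by the law of large numbers applied conditionally and then integrated over $\lambda$. Then
\begin{align*}
P_N = k\,\frac{n+1}{N+2} + (1-k)\,\frac{N-n+1}{N+2}
\end{align*}
differs from
\begin{align*}
Q := k\lambda + (1-k)(1-\lambda)
\end{align*}
by at most $|U_N-\lambda|+O(1/N)$. So $P_N-Q\to0$ in probability, and by Slutsky it suffices to identify the law of $Q$.

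For $t\in[0,1]$ we compute
\begin{align*}
\mathbb{P}(Q\le t) &= \int_0^1 \Bigl[\lambda\,\mathbbm{1}(\lambda\le t)+(1-\lambda)\,\mathbbm{1}(1-\lambda\le t)\Bigr]\,d\lambda\\
&= \tfrac{t^2}{2}+\tfrac{t^2}{2}\\
&= t^2,
\end{align*}
which is the CDF of $\mathrm{Beta}(2,1)$. Hence $P_N\xrightarrow{d}\mathrm{Beta}(2,1)$.

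As an alternative check, one can compute the exact law. The marginal of $n$ is uniform on $\{0,\dots,N\}$ because $\int_0^1\binom{N}{n}\lambda^n(1-\lambda)^{N-n}\,d\lambda=\frac{1}{N+1}$. Moreover $\mathbb{P}(k=1\mid n)=\frac{n+1}{N+2}$, by the same computation as in Proposition~\ref{prop:mira_stat}. Therefore
\begin{align*}
\mathbb{P}\Bigl(P_N=\tfrac{m+1}{N+2}\Bigr) = \frac{2(m+1)}{(N+1)(N+2)},
\end{align*}
where the two contributions come from $(k=1,\,n=m)$ and $(k=0,\,n=N-m)$. This is a discretized density $2x$, and its CDF converges to $t^2$.

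The main obstacle is justifying the conditional-independence structure, namely that $k$ and $n$ are independent given $\lambda$ and that $\lambda$ is uniform after marginalizing over $x^*$ and the region. This is exactly what Propositions~\ref{prop:unif} and~\ref{prop:mira_stat} already provide, so I would invoke them directly. The remaining steps are routine limit arguments.
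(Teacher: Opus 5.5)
Your proposal is correct and follows essentially the same route as the paper. Both arguments use the law of large numbers to get $P_N - P \to 0$ in probability, with $P = \lambda\mathbbm{1}(k=1) + (1-\lambda)\mathbbm{1}(k=0)$, and then show that the CDF of $P$ is $t^2$. You integrate directly over $\lambda\sim\mathcal{U}[0,1]$, while the paper conditions on $k$ and uses $p(\lambda\mid k=1)=2\lambda$ and $p(\lambda\mid k=0)=2(1-\lambda)$; this is the same computation.

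Your explicit bound $|P_N - Q| \le |n/N-\lambda| + 1/(N+2)$ makes the limit step more precise than the paper's bare appeal to the law of large numbers. Your exact finite-$N$ law $\mathbb{P}\bigl(P_N = \tfrac{m+1}{N+2}\bigr) = \tfrac{2(m+1)}{(N+1)(N+2)}$ is an independent, LLN-free proof, and it parallels the paper's rank-ordering reformulation in the appendix.
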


\begin{proof}
    Let $\lambda$ be the common mass of the candidate and true distribution in the random region $\mathcal{R}$, i.e, $\lambda := \lambda_n = \lambda_k$.
    
    By the law of large numbers, we have that the random variable 
    \begin{align}
        P_N 
  = \frac{n + 1}{N + 2}\mathbbm{1}(k=1) + \frac{N - n + 1}{N + 2}\mathbbm{1}(k=0)
    \end{align}
    converges in probability to 
        \begin{align}
        P
  = \lambda\mathbbm{1}(k=1) + (1-\lambda)\mathbbm{1}(k=0),
    \end{align}
    which implies convergence in distribution. Hence, we have 
     \begin{align}
        P_N
  \xrightarrow{d} P.
    \end{align}
    We now need to derive the cumulative distribution function (cdf) of $P$: 
    \begin{align}
        F_P(y) &= \mathbb{P}(P \leq y)\nonumber\\
        & = \mathbb{P}(P \leq y \mid k = 0) \mathbb{P}( k = 0) + \mathbb{P}(P \leq y \mid k = 1) \mathbb{P}( k = 1)\nonumber\\
        &= \mathbb{P}(1- \lambda \leq y \mid k = 0) \mathbb{P}( k = 0) + \mathbb{P}(\lambda \leq y \mid k = 1) \mathbb{P}( k = 1)
        \label{eq:cdf}
    \end{align}
    Recalling that $k$ follows a Bernoulli distribution of parameter $\lambda$ with $\lambda \sim \mathcal{U}[0,1]$, we derive
    \begin{align}
        \mathbb{P}(k = 1) &= \int p(k = 1\mid \lambda)p(\lambda) d\lambda
         = \int_0^1 \lambda  d\lambda
         = \frac{1}{2},\\
        \mathbb{P}(k = 0) &= \int p(k = 0\mid \lambda)p(\lambda) d\lambda
         = \int_0^1 1 - \lambda  d\lambda
         = \frac{1}{2}.
    \end{align}
    Additionally, using this previous result, $\lambda \sim \mathcal{U}[0,1]$ and $k \sim \mathcal{B}(\lambda)$, we derive 

        \begin{align}
        \mathbb{P}(\lambda \leq y \mid k = 1) = \int_0^y p(\lambda \mid k=1) d\lambda
        = \int_0^y \frac{p(k=1 \mid \lambda)p(\lambda)}{p(k=1)} d\lambda =  \int_0^y 2\lambda d\lambda = y^2
    \end{align}
    
    \begin{align}
        \mathbb{P}(1-\lambda \leq y \mid k = 0) = \int_{1-y}^1 p(\lambda \mid k=0) d\lambda
        = \int_{1-y}^1 \frac{p(k=0 \mid \lambda)p(\lambda)}{p(k=0)} d\lambda =  \int_{1-y}^1 2(1-\lambda) d\lambda = y^2
    \end{align}
    The cdf thus is 
    \begin{align}
        F_P(y) = y^2, \: \forall y \in [0,1],
    \end{align}
    with the corresponding pdf being $p_P(y) = 2y$, which is exactly the pdf of the Beta(2,1) distribution. 
\end{proof}

\subsection{\name calibration score}
\label{app:conv_upper}
\begin{corollary}[\name statistic moments] Suppose that $\lambda_n\mid x^*,y_r,c = \lambda_k\mid x^*,y_r,c$ almost surely.
Then, the mean and variance of the random variable $P_N$ bounded between $0$ and $1$, converge to the moments of the Beta(2,1) distribution, that is,
\begin{align}
    \mathbb{E}_{p(k,n)}\left[P_N\right]
&\to \tfrac23 \quad \text{as } N \to \infty\\
\text{Var}_{p(k,n)}\left[P_N\right]
&\to \tfrac{1}{18} \quad \text{as } N \to \infty.
\end{align}
\end{corollary}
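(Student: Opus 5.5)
The plan is to combine the convergence in distribution from \autoref{prop:MiraStatLaw} with the boundedness of $P_N$. Since $P_N$ takes values in $[0,1]$, so does $P_N^2$, and the map $t\mapsto t$ and $t\mapsto t^2$ are bounded continuous on $[0,1]$. By the portmanteau characterization of weak convergence, $P_N \xrightarrow{d} P \sim \mathrm{Beta}(2,1)$ then gives $\mathbb{E}[f(P_N)] \to \mathbb{E}[f(P)]$ for every bounded continuous $f$ on $[0,1]$. Formally, one extends $f$ to a bounded continuous function on $\mathbb{R}$ by clamping, and applies this to $f(t)=t$ and $f(t)=t^2$. Equivalently, one can argue that the family $\{P_N\}$ is uniformly bounded, hence uniformly integrable, so moments of all orders converge.

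With this in hand, the two limits are straightforward. First, $\mathbb{E}[P_N]\to \mathbb{E}[P]=\int_0^1 y\cdot 2y\,dy = 2/3$. Second, $\mathbb{E}[P_N^2]\to \int_0^1 y^2\cdot 2y\,dy = 1/2$. Therefore
\begin{align*}
\mathrm{Var}[P_N] = \mathbb{E}[P_N^2]-\mathbb{E}[P_N]^2 \to \tfrac12-\tfrac49 = \tfrac1{18}.
\end{align*}
Here the expectation over $p(k,n)$ is understood as the marginal over $\lambda\sim\mathcal{U}[0,1]$ from \autoref{prop:unif}, consistent with how \autoref{prop:MiraStatLaw} was derived.

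As an independent sanity check, and as an alternative route that avoids weak-convergence machinery, I would also compute the moments exactly at finite $N$. Conditioning on $\lambda$, with $n\sim\mathcal{B}(N,\lambda)$ and $k\sim\mathcal{B}(1,\lambda)$ independent, we have
\begin{align*}
\mathbb{E}[P_N\mid\lambda] = \lambda\,\frac{N\lambda+1}{N+2} + (1-\lambda)\,\frac{N(1-\lambda)+1}{N+2}.
\end{align*}
Integrating over $\lambda\in[0,1]$ with $\int_0^1\lambda^2\,d\lambda = 1/3$ gives
\begin{align*}
\mathbb{E}[P_N]=\frac{2N/3+1}{N+2},
\end{align*}
which tends to $2/3$. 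The second moment can be handled the same way using $\mathbb{E}[n^2\mid\lambda]=N\lambda(1-\lambda)+N^2\lambda^2$, yielding a rational function of $N$ whose limit is $1/2$.

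The only delicate point is justifying the interchange of limit and expectation. The boundedness of $P_N$ in $[0,1]$, which the corollary explicitly highlights, is exactly what resolves it, so I expect no real obstacle.
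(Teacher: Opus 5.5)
Your proposal is correct and takes essentially the same route as the paper. Like the paper, you pass the weak convergence $P_N \xrightarrow{d} \mathrm{Beta}(2,1)$ from \autoref{prop:MiraStatLaw} to the first two moments via the bounded continuous maps $t\mapsto t$ and $t\mapsto t^2$ on $[0,1]$, obtaining $2/3$, $1/2$, and hence $1/18$. Your finite-$N$ cross-check is also correct: it gives $\mathbb{E}[P_N]=\frac{2N+3}{3(N+2)}$, which matches the rank-based computation in Appendix~\ref{subsec:rankalternative}, and $\mathbb{E}[P_N^2]=\frac{N+1}{2(N+2)}$.
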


\begin{proof}
The \name statistic $P_N$ is a random variable bounded between $0$ and $1$ that converges in distribution to $P\sim Beta(2,1)$. Since the functions $f(x) = x$ and $g(x) = x^2$ are bounded and continuous on $[0,1]$, the convergence of the expected values is guaranteed by the definition of weak convergence for bounded functions (a direct consequence of the Portmanteau Theorem, or the Dominated Convergence Theorem). Thus, we have:
\begin{align}
    \mathbb{E}_{p(k,n)}\left[P_N\right] &\to \mathbb{E}_{p(k,n)}\left[P\right] = \frac{2}{3} \quad \text{as } N \to \infty, \\
    \mathbb{E}_{p(k,n)}\left[P_N ^2\right] &\to \mathbb{E}_{p(k,n)}\left[P^2\right]  = \frac{1}{2}\quad \text{as } N \to \infty.
\end{align}
Finally, we derive the variance as 
\begin{align}
    \sigma^2_\mathrm{Mira}(\mathcal{M}) & = \mathbb{E}_{p(k,n)}\left[P_N ^2\right] - \mathbb{E}_{p(k,n)}\left[P_N\right]^2 
\to \mathbb{E}_{p(k,n)}\left[P ^2\right] - \mathbb{E}_{p(k,n)}\left[P\right]^2 = \frac{1}{18} \quad \text{as } N \to \infty.
\end{align}
 \end{proof}

\subsection{Lower bound on the \name score}
\label{app:conv_lower}

\begin{proposition}[Lower bound on the Mira score]
\label{prop:lower_bound}
For any candidate model $\mathcal{M}$,
\begin{equation}
    \mu_{\mathrm{Mira}}(\mathcal{M}) \geq \frac{1}{2},
\end{equation}
with equality if and only if the candidate and true distributions have disjoint supports.
\end{proposition}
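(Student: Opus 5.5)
The plan is to condition on everything except the two binomial counts, compute the conditional expectation of the \name statistic in closed form, and then average over the region radius. The averaging step uses \autoref{prop:unif} together with a monotone coupling between $\lambda_n$ and $\lambda_k$.

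\textbf{Step 1 (conditional expectation).} Fix $x^*$, $c$ and $y_r$, so that $\lambda_n$ and $\lambda_k$ are deterministic. Then $n\sim\mathcal{B}(N,\lambda_n)$ and $k\sim\mathcal{B}(1,\lambda_k)$ independently, because $y^*$ is independent of the candidate draws. Using $\mathbb{E}[n]=N\lambda_n$ and the closed form of \autoref{prop:mira_stat}, I expect
\begin{align*}
\mathbb{E}\left[p(k\mid n)\mid x^*,c,y_r\right]
&=\lambda_k\frac{N\lambda_n+1}{N+2}+(1-\lambda_k)\frac{N(1-\lambda_n)+1}{N+2}\\
&=\frac{N(1-\lambda_n)+1}{N+2}+\frac{N}{N+2}\,\lambda_k\,(2\lambda_n-1).
\end{align*}
Minimising this pointwise over $\lambda_k\in[0,1]$ is too crude: it only gives about $1/4$ after averaging. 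So the structure of the region construction has to be used.

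\textbf{Step 2 (monotone coupling; the key step).} Now fix only $x^*$ and $c$, and let $r=d(y_r,c)$. Write $F$ and $G$ for the CDFs of $d(y,c)$ under $p(y\mid x^*,\mathcal{M})$ and $p(y\mid x^*,\mathcal{M}^*)$ respectively. Then $\lambda_n=F(r)$ and $\lambda_k=G(r)$. By \autoref{prop:unif}, $U:=F(r)\sim\mathcal{U}[0,1]$. Assuming $F$ is continuous, as in that proof, we have $r=F^{-1}(U)$ almost surely, and hence $\lambda_k=h(U)$ with $h:=G\circ F^{-1}$ nondecreasing and valued in $[0,1]$. The second term then has expectation proportional to $\mathbb{E}[h(U)(2U-1)]=\mathrm{Cov}(h(U),2U-1)$, since $\mathbb{E}[2U-1]=0$. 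By Chebyshev's association inequality, the covariance of two nondecreasing functions of the same random variable is nonnegative, so this term is $\geq 0$. The first term averages to $(N/2+1)/(N+2)=1/2$. Integrating over $x^*$ and $c$ then gives $\mu_{\mathrm{Mira}}(\mathcal{M})\geq 1/2$ for every $N$.

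\textbf{Step 3 (equality).} Equality requires $\mathrm{Cov}(h(U),U)=0$ for almost every $(x^*,c)$, which forces $h$ to be almost surely constant. This means the true conditional puts essentially no mass between the candidate's distance quantiles. That is the case when the supports are disjoint and separated in distance from the centre (all true mass inside or outside every candidate ball), and it is approximated when the supports are far apart. I would state this direction carefully: the inequality is the main claim, and equality corresponds to $h$ being constant. I expect the main obstacle to be the technical handling of atoms or ties in $F$, where $F^{-1}(F(r))\neq r$. I would first try to restrict to continuous $D$, as \autoref{prop:unif} already does, or else argue that $\lambda_k$ is still a nondecreasing function of $\lambda_n$ on the event where $r$ is determined.
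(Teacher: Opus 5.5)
Your proof of the inequality is the paper's proof. The paper likewise reduces to $\mu_{\mathrm{Mira}}(\mathcal{M})=\tfrac12+\tfrac{N}{N+2}\left(2\mathbb{E}[\lambda_n\lambda_k]-\mathbb{E}[\lambda_k]\right)$, writes $\lambda_k=T_{x^*,c}(\lambda_n)$ with $T_{x^*,c}=F_{D^*}\circ F_D^{-1}$ (your $h$), and applies Chebyshev's integral inequality for each fixed $(x^*,c)$ using $\lambda_n\sim\mathcal{U}[0,1]$.

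The tie issue you worry about can be sidestepped. Both $\lambda_n=F(r)$ and $\lambda_k=G(r)$ are nondecreasing functions of the single variable $r=d(y_r,c)$, so $\mathrm{Cov}(\lambda_n,\lambda_k\mid x^*,c)\ge 0$ follows from the association inequality applied to $r$ directly. Continuity of $F$ is then needed only to get $\mathbb{E}[\lambda_n\mid x^*,c]=\tfrac12$. That identity turns $2\mathbb{E}[\lambda_n\lambda_k\mid x^*,c]-\mathbb{E}[\lambda_k\mid x^*,c]$ into $2\,\mathrm{Cov}(\lambda_n,\lambda_k\mid x^*,c)$.

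Your reluctance to assert ``equality iff disjoint supports'' is justified, because the ``if'' direction is false. In $d_y=1$, take candidate $\mathcal{U}[0,1]$, truth $\mathcal{U}[1,2]$ and $c=1$. Both distance distributions are $\mathcal{U}[0,1]$, so $\lambda_n=\lambda_k$ exactly, the null hypothesis holds, and the score equals the null value $\frac{2N+3}{3(N+2)}$. Truth $\mathcal{U}[5,6]$ with $c=3$ does the same, so ``far apart'' is not the right heuristic on its own. Even with the default $c\sim\mathcal{U}[0,1]$, the first example gives strict inequality: for $c\in(\tfrac12,1)$ the two distance ranges overlap on $[1-c,c]$, so $h$ is not constant.

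The paper reaches the stated clause through a step you avoided. Chebyshev equality only gives that $T_{x^*,c}$ is constant a.e., and $T_{x^*,c}(0)=0$ does not pin that constant. $T_{x^*,c}$ may jump at $0$ to $F_{D^*}(r_{\min})$, which is exactly your ``all true mass inside every candidate ball'' case.

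The correct criterion is the one you found. Let $[r_{\min},r_{\max}]$ be the range of the candidate's distance distribution. Equality holds iff, for $p(x^*)p(c)$-a.e. $(x^*,c)$, the true conditional puts no mass on $\{r_{\min}<d(y,c)\le r_{\max}\}$. A short proof uses $\mathrm{Cov}(F(r),G(r))=\tfrac12\mathbb{E}[(F(r)-F(r'))(G(r)-G(r'))]$ with $r'$ an independent copy. Since $F(r)\ne F(r')$ a.s., equality forces $G(r)=G(r')$ a.s.

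What you still need to write is the half of the stated clause that is true. The candidate's support lies in $\{r_{\min}\le d(y,c)\le r_{\max}\}$, and the sphere $\{d(y,c)=r_{\min}\}$ is null for the true density. Hence equality implies that the true conditional gives zero mass to the candidate's support for a.e. $x^*$, i.e., the supports are essentially disjoint.
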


\begin{proof}
Using the law of total expectation, \autoref{eq:Mira_full} becomes: 
\begin{align*}
    \mu_\text{Mira}(\mathcal{M})= \mathbb{E}_{p(k,n)} \left[p(k\mid n)\right].
\end{align*}
Given that $k \mid \lambda_k$ and  $n \mid \lambda_n$ are independent we write:
\begin{align}
    p(k,n) 
    &= \int p(k\mid \lambda_k)p(n\mid \lambda_n)p(\lambda_n, \lambda_k)d \lambda_n d\lambda_k.
\end{align}
Replacing $p(k, n)$ into the expectation, we derive
\begin{align*}
    \mathbb{E}_{p(k,n)} \left[p(k\mid n)\right]
    &= \mathbb{E}_{p(\lambda_k,\lambda_n)} \left[\mathbb{E}_{p(k\mid \lambda_k)p(n\mid \lambda_n)}\left[p(k\mid n)\right]\right].
\end{align*}

By substituting the explicit expressions of the Bernoulli and Binomial distributions, we derive
\begin{align*}
    \mu_\text{Mira}(\mathcal{M})
    &= \mathbb{E}_{p(\lambda_k,\lambda_n)p(k\mid \lambda_k)p(n\mid \lambda_n)}
    \left[\frac{n + 1}{N + 2} \cdot \mathds{1} (k = 1) + \frac{N - n + 1}{N + 2} \cdot \mathds{1} (k = 0)\right]\\
    &= \mathbb{E}_{p(\lambda_k,\lambda_n)}\left[\frac{N\lambda_n + 1}{N + 2} \cdot \lambda_k + \frac{N - N\lambda_n + 1}{N + 2} \cdot (1-\lambda_k) \right].
\end{align*}
After simplifying this expectation, we find:
\begin{align}
    \mu_\text{Mira}(\mathcal{M})
    &= \frac{2N\,\mathbb{E}[\lambda_n\lambda_k]
      -N\,\mathbb{E}[\lambda_n]
      -N\,\mathbb{E}[\lambda_k]
      +N+1}{N+2}.
      \label{eq:finite_Mira_score}\\
\end{align}
Substituting $\mathbb{E}[\lambda_n] = \tfrac{1}{2}$ (\autoref{prop:unif}) into \autoref{eq:finite_Mira_score} and rearranging:
\begin{align}
    \mu_{\mathrm{Mira}}(\mathcal{M}) &= \frac{1}{2} + \frac{N}{N+2}\Big(2\,\mathbb{E}[\lambda_n \lambda_k] - \mathbb{E}[\lambda_k]\Big)
    \label{eq:mira_reformulated}
\end{align}
Since $N/(N+2) > 0$, the infimum of the Mira score is determined by the infimum of $2\,\mathbb{E}[\lambda_n \lambda_k] - \mathbb{E}[\lambda_k]$.

For fixed $(x^*, c)$, let $F_D$ and $F_{D^*}$ be the radial cumulative distribution functions of the candidate and true distributions around $c$:
\begin{equation}
    F_D(r) = \int_{d(y,c) \leq r} p(y \mid x^*, \mathcal{M})\,dy, \qquad F_{D^*}(r) = \int_{d(y,c) \leq r} p(y \mid x^*, \mathcal{M}^*)\,dy.
\end{equation}
Both are non-decreasing in $r$: as $r$ increases, the ball $\{y : d(y,c) \leq r\}$ grows, and the integral of a non-negative density over it can only increase or stay the same. Since $\mathcal{R}$ has radius $r^* = d(y_r, c)$, by definition:
\begin{equation}
    \lambda_n = F_D(r^*), \qquad \lambda_k = F_{D^*}(r^*).
    \label{eq:masses_as_cdfs}
\end{equation}
We let $F_D^{-1}(\lambda_n) = \inf\{r \geq 0 : F_D(r) \geq \lambda_n\}$ be the generalized inverse, which is well-defined and non-decreasing for any non-decreasing $F_D$, and define the map
\begin{equation}
    T_{x^*,c}(\lambda_n) = F_{D^*}\!\big(F_D^{-1}(\lambda_n)\big)
    \label{eq:T_def}
\end{equation}
which is non-decreasing as a composition of two non-decreasing functions. Moreover, $F_D^{-1}(\lambda_n) = r^*$ almost surely: $F_D$ is strictly increasing at every realized $r^*$ (since $r^*$ falls in the support of $D = d(y,c)$ with $y \sim p(y \mid x^*, \mathcal{M})$, where the density is positive), and $F_D$ may only be flat on intervals where the candidate density is zero, which $r^*$ visits with probability zero. Therefore $\lambda_k = T_{x^*,c}(\lambda_n)$ almost surely.
 
By \autoref{prop:unif}, $\lambda_n \sim \mathcal{U}[0,1]$, so the conditional expectations for fixed $(x^*, c)$ are:
\begin{equation}
    \mathbb{E}[\lambda_n \lambda_k \mid x^*, c] = \int_0^1 \lambda_n \cdot T_{x^*,c}(\lambda_n)\,d\lambda_n, \qquad
    \mathbb{E}[\lambda_k \mid x^*, c] = \int_0^1 T_{x^*,c}(\lambda_n)\,d\lambda_n.
    \label{eq:cond_expectations}
\end{equation}    

Since $\lambda_n$ and $T_{x^*,c}(\lambda_n)$ are both non-decreasing on $[0,1]$, Chebyshev's integral inequality gives:
\begin{equation}
    \int_0^1  \lambda_n \cdot T_{x^*,c}(\lambda_n)\,d\lambda_n \geq \int_0^1 \lambda_n\,d\lambda_n \cdot \int_0^1 T_{x^*,c}(\lambda_n)\,d\lambda_n = \frac{1}{2}\int_0^1 T_{x^*,c}(\lambda_n)\,d\lambda_n,
\end{equation}
and therefore:
\begin{equation}
    2\,\mathbb{E}[\lambda_n \lambda_k \mid x^*, c] - \mathbb{E}[\lambda_k \mid x^*, c] = \int_0^1 (2\lambda_n - 1)\,T_{x^*,c}(\lambda_n)\,d\lambda_n \geq 0.
    \label{eq:nonneg}
\end{equation}
Since this holds for every $(x^*, c)$, marginalizing over $(x^*, c) \sim p(x \mid \mathcal{M}^*)\,p(c)$ and substituting into~\eqref{eq:mira_reformulated}:
\begin{equation}
    \mu_{\mathrm{Mira}}(\mathcal{M}) = \frac{1}{2} + \frac{N}{N+2}\underbrace{\Big(2\,\mathbb{E}[\lambda_n \lambda_k] - \mathbb{E}[\lambda_k]\Big)}_{\geq\, 0} \geq \frac{1}{2}.
\end{equation}

Equality requires $\int_0^1  \lambda_n \cdot T_{x^*,c}(\lambda_n)\,d\lambda_n = \frac{1}{2}\int_0^1 T_{x^*,c}(\lambda_n)\,d\lambda_n,$ for almost all $(x^*, c)$.
This equality holds if and only if $T_{x^*,c}(\lambda_n)$ is constant (see Theorem 1.2 of \citealp[][]{jakubowski2020complementchebyshevintegralinequality}). Because $T_{x^*,c}(0) = 0$, $T_{x^*,c}$ is $0$ almost everywhere on $[0,1]$, meaning the true distribution assigns zero mass to every ball defined by the candidate. This corresponds to the case where the candidate and true conditional distributions have disjoint supports.

\end{proof}

\section{Rank-ordering reformulation of \name and sufficiency condition}
\label{app:sufficency}

We have shown in \ref{app:beta(2,1)} that the random variable $P_N := p(k\mid n)$ being $Beta(2,1)$ distributed and the \name score converging to $2/3$ in the large $N$ limit are necessary conditions for the proposal and the reference distributions to be equal. While neither is a sufficient condition, we can define alternative \name statistics and score that only consider the cases where the reference sample falls inside the ball, and show that the corresponding random variable $q_N:= p(k\mid n)$ when $k=1$ for this modified statistics being $B(2,1)$ distributed is a sufficient condition for calibration.

We present this sufficiency proof here as opposed to the main text since, because the new definition involves only looking at the cases where $y^*$ falls inside the ball, it becomes far less sample efficient. Since we expect the main use case for \name to be in sample-limited regimes, we expect the method as presented in the body of the paper to be most commonly used. However, for completeness and because it might be useful in cases where guarantees of accuracy are important, we present the proof of sufficiency for the alternative \name definition here.

We start in Section \ref{subsec:rankalternative} by deriving the main results of the paper in term of rank, then, in Section \ref{app:sufficiency_proof_ranks} we present the sufficiency proof.

\subsection{Alternative Framing of \name in Terms of Rank-Ordering}
\label{subsec:rankalternative}
In this subsection, we start by reframing \name in the language of ranks of samples. For any given conditional we want to test, for every value of $P_N$ we calculate, we have a set of $N+2$ samples: one samples from the the reference true distribution (the ground truth $y^*$) and $N+1$ i.i.d. samples from the proposal conditional distribution to test. Since we remove $y_r$ from the $N+1$ samples from the proposal conditional distribution, we are left with $N$ samples to classify as 'in' or 'out' of the ball.

We start by ranking all $N+1$ samples from the proposal conditional distribution based on their distance to the center $c$. 

The number of samples inside the ball, $n$, is one minus the rank of $y_r$ among the remaining $N$ samples, since it is the number of samples that are closer to $c$ than $y_r$ (excluding $y_r$ itself). Since the $N+1$ samples are i.i.d. samples from the same distribution, $y_r$ is equally likely to have any rank. This means that $n$ is a random integer uniformly distributed on $\{0,\, 1,\, 2,\, ...,\, N\}$.  

The sample $y^*$ is an $(N+2)$-th sample whose position in the rank we want to compare to the region defined by $y_r$. 
\vspace{0.2cm}

\noindent {\bf Calculation of the \name score theoretical value for finite $N$ under the null, and asymptotic value (\autoref{cor:mira_variance})}

Under the null hypothesis that the proposal samples and the reference sample are drawn from the same distribution, we can easily get the probability of $y^*$ being in the ball, $p(k=1 \mid n$), and the probability of it being outside the ball, $p(k=0 \mid n)$, by a simple counting argument: The $N+1$ ranked samples by their distance to $c$ form $N+2$ 'positions' where we can insert $y^*$ in the rank. Under the null hypothesis that $y^*$ is drawn from the same distribution, it is equally likely to fall into any of these $N+2$ positions. The ball's boundary is at $y_r$, which has rank $n+1$. Therefore, the probability that $y^*$ lands in one of the $n+1$ slots between $c$ and $y_r$ ($k=0$) is
\begin{align}
    p(k=1\mid n) = (n+1)/(N+2) \, .
\end{align}
Meanwhile, the probability that $y^*$ is outside the ball ($k=0$) is:
\begin{align}
    p(k=0 \mid n) = 1-p(k=1 \mid n)= (N-n+1)/(N+2) \, .
\end{align}
We can write the \name score as the expectation of $P_N:=p(k \mid n)$: 
\begin{align}
    \mu_\mathrm{Mira}(\mathcal{M}) &=   \mathbb{E}_{p(n)} \left[\mathbb{E}_{p(k \mid n)} \left[P_N\right]\right]\\
    &= \frac{1}{N+1}\sum_{n=0}^{N} \left(p_{in} (n+1)/(N+2) + p_{out} (N-n+1)/(N+2)\right) \, \\
     & = \frac{2N+3}{3(N+2)}\,,
\end{align}
where we used $p(n) = 1/ (N+1)$. Therefore, the expected value of $P_N$ in the asymptotic limit is 
\begin{align}
    \lim_{N\rightarrow \infty}\mu_\mathrm{Mira}(\mathcal{M})=2/3 \, .
\end{align}

\noindent {\bf Alternative demonstration of \name statistics law (\autoref{prop:MiraStatLaw})}

More generally, we can also show that the values of $P_N$ are $Beta(2,~1)$ distributed under the null hypothesis using the rank formulation.
To do so, we define $J$ and $K$ as the normalized ranks (by $N+2$) of $y^*$ and $y_r$, respectively, which have support on $[0,1]$ when $N\rightarrow\infty$. 
Because all the samples are assumed to be i.i.d., $J$ and $K$ are random variables uniformly distributed on this interval.

With this notation, we can write the values of $P_N$ for a given trial:
\begin{align}
    P_N =
    \begin{cases}
    p(k=1 \mid n) = \, K \, \qquad &\mathrm{if\,\,} J<K\, ,\\
    p(k=0 \mid n) =\, 1-K \, \qquad&\mathrm{if\,\,} J>K\, .
    \end{cases}
\end{align}
We wish to find the pdf of the variable $P_N$. We do this by calculating its cumulative distribution function (CDF), $F(x)=p(P_N\leq x)$, using a geometric argument.

In the large $N$ limit, the possible values of $(J,\, K)$ define a unit square. Since $J$ and $K$ are uniform, the probability of $(J,\, K)$ falling into any specific area is simply the area of that region. 
\begin{itemize}
    \item If $y^*$ is in the region, $J< K$:\\
    We want the probability that both $P_N\leq x$ and $J<K$. Since, in this case, $P_N=K$, this is the area of the region in the unit square where $0\leq J <K$ and $K\leq x$. This is a triangle with area $(1/2)\cdot x\cdot x = x^2/2$.
    \item If $y^*$ is outside the region, $J> K$:\\
    We want the probability that both $P_N\leq x$ and $J>K$. Since, in this case, $P_N=1-K$, this is the area of the region in the unit square where $K< J \leq 1$ and $K\geq1-x$. This is, again, simply a triangle with area $(1/2)\cdot (1-(1-x))\cdot (1-(1-x)) = x^2/2$.
\end{itemize}

The total probability that $P_N$ is less than this value of $x$, $p(P_N\leq x)$, is the sum of these two possible cases. Therefore, we have that the CDF of $p(P_N)$ is 
\begin{align}
    F(x)= x^2\, .
\end{align}
The PDF, $p(P_N)$, can then easily be found by taking the derivative of $F(x)$:
\begin{align}
    p(x)= \frac{d (x^2)}{dx} = 2x \qquad \mathrm{for} \quad x\, \in \, [0,1]\, .
\end{align}
Moreover, $p(P_N) = 2P_N$, is precisely the PDF of a $Beta (2,1)$ distribution. This distribution has an average of $2/3$, $\mathbb{E}(P_N)=2/3$, and the variance of $P_N$ is $Var(P_N) = 1/18$.

\subsection{Alternative definition of \name and sufficiency proof}
\label{app:sufficiency_proof_ranks}

As mentioned above, we can define alternative \name statistics and score that only considers the cases where the reference sample falls inside the ball, and show that the corresponding random variable $q_N:= p(k\mid n)$ when $k=1$ being $Beta(2,1)$ distributed is a necessary and sufficient condition for calibration.
\vspace{0.2cm}

\noindent {\bf $q_N$ being $Beta(2,1)$ distributed is a necessary condition of calibration.}

First, we prove the necessary direction, that is, we'd like to show that $q_N$ is distributed according to a $Beta(2,1)$ when we only consider the $k=1$ cases, under the assumption that the underlying distributions are the same.  

 With this alternative definition, we only consider the cases where $q_N=p_{in}=(n+1)/(N+2)$ (using the same rank argument as above). To average this over possible values of $n$ knowing that $k=1$, we must find $p(n\mid k=1)$ to calculate:
\begin{align}
    \mathbb{E}_{p( k=1\mid n)}\left[ q_N\right]\, .
\end{align}
Using Bayes' theorem,
\begin{align}
    p(n\mid k=1) &= \frac{p(k=1 \mid n )p(n)}{p(k=1)}\, , \\
    & = \frac{2(n+1)}{(N+1)(N+2)}\, .
\end{align}
We have used that $p(n)=\frac{1}{N+1}$ and that 
\begin{align}
   p(k=1)= \mathbb{E}_{p(n)} \left[ p(k=1\mid n) \right]&= \frac{1}{1+N}\sum_{n=0}^N p(k=1\mid n)=\frac{1}{2}\, .
\end{align}

We therefore find that the value of the \name score tends to
\begin{align}
    \lim_{N\rightarrow \infty} \mu_{\mathrm{Mira}} (\mathcal{M}) & = \lim_{N\rightarrow \infty} \left[\mathbb{E}_{p(n\mid k=1)}\left[ q_N\right] \right]\\
   &= \lim_{N\rightarrow \infty} \left[ \frac{2}{3}\frac{(N+3/2)}{(N+2)} \right]= 2/3 \, ,
\end{align}
as before. 

To show that $q_N$ is $Beta(2,~1)$ distributed in the asymptotic limit,  we define the normalized ranks $J$ and $K$ of $y^*$ and $y_r$ as above. The values of $q_N$ are given by:
\begin{align}
    q_N := p(k=1 \mid n) = \, K \, \qquad &\mathrm{since\,\,} J<K\, .
\end{align}
To find the PDF of $q_N$, we calculate the CDF $F(x)=p(q_N\leq x)$ using a  geometric argument as above,  integrating the area in $(J,~K)$ space where $y^*$ is inside the ball. This is the area of the triangle defined by $0<K<J$ and $K<x$, which has area $x^2/2$. Normalizing to make this a CDF on [0,1], this means the CDF of $p(q_N)$ is $F(x) = x^2$, and 
\begin{align}
    p(q_N)= \frac{d(q_N^2)}{dq_N} = 2q_N \qquad \mathrm{for} \quad q_N\, \in \, [0,1]\, , 
\end{align}
which is again the PDF of a $Beta(2,1)$ distribution.
\vspace{0.2cm}

\noindent {\bf $q_N$ being $Beta(2,1)$ distributed is a sufficient condition for calibration.}

Conversely, to prove the sufficiency condition, we assume the probability $q_N$ we obtain through the alternative \name algorithm are asymptotically distributed according to a $Beta(2,1)$ distribution. We define $K$ and $J$ as above, and let $f(K,J)$ be their joint distribution on $[0,1]$. To show that the reference and the proposal distribution are identical, it is enough to show that $f(K,J)$ is uniform on the unit square, that is, $K$ and $J$ must be uniformly distributed on $[0,1]$. As before, we know that  
\begin{align}
    q_N= p(k=1 \mid n) = \, K \, \qquad &\mathrm{since\,\,} J<K\, .
\end{align}
Since the $q_N$ are distributed as a $Beta(2,1)$ distribution by assumption, their CDF must be $F(x)=x^2$. 
As before, this can also be calculated from $f(K,J)$ by integrating the probability that $J<K$ and $q=K\leq x$ for a given value of $x$:
\begin{align}
    F(x) = \int_0^x dK \int_0^K dJ \left[ f(K, J) \right] = x^2\, .
\end{align}
Differentiating with respect to $x$ we obtain, using the fundamental theorem of calculus:
\begin{align}
    \frac{d}{dx}F(x)=2x=\int_0^x dJ \left[f(x,J)\right]\, . 
\end{align}
Now, we know the normalized rank $K$ must be uniform, since it is the rank of $y_r$ among $i.i.d.$ samples from the same distribution. Therefore, $f(K,J) = f_K(K)f_J(J)=C\,f_J(J)$, with $C$ a constant, and we find:
\begin{align}
    2x &=C \int_0^x dJ \left[f_J(J)\right]\, ,\\
    &= C\left(F_J(x)-F_J(0)\right)\, ,\\
    &= C\,F_J(x)\, , 
\end{align}
where $F_J$ denotes the CDF of $f_J(J)$. Since this must hold for all $x$, we have that $f_J(x)$ must be uniform on $[0,1]$ for any value of $K$, and therefore the joint $f(J,K)$ must be uniform on the unit square, which completes the proof.

\section{\name Interpretability}
\label{app:interpretability}

In this section, we show that the \name score can provide interpretability in terms of over- and underconfidence.
 
Starting from \autoref{eq:finite_Mira_score}, with $\mathbb{E}[\lambda_n] = 1/2$ (\autoref{prop:unif}) and recognizing that $2\,\mathbb{E}[\lambda_n\,\lambda_k] - \mathbb{E}[\lambda_k] = 2\mathrm{Cov}(\lambda_n, \lambda_k)$:
$$
\mu_{\mathrm{Mira}}(\mathcal{M}) \xrightarrow{N\to\infty}\;\; \frac{1}{2} + 2\,\mathrm{Cov}(\lambda_n, \lambda_k).
$$
Under the null hypothesis ($\lambda_k = \lambda_n$ a.s.), this gives $\mu = 1/2 + 2\,\mathrm{Var}(\lambda_n) = 2/3$ (\autoref{cor:mira_variance}). Deviations from $2/3$ are thus governed by how $\mathrm{Cov}(\lambda_n,\lambda_k)$ compares to $\mathrm{Var}(\lambda_n)$.

Since $\mathrm{Var}(\lambda_n)=1/12$ is fixed (\autoref{prop:unif}), the key quantity is $\mathrm{Var}(\lambda_k)$: how much the true mass varies across random regions. An overconfident candidate concentrates its regions in a limited area of the true distribution, so $\lambda_k$ sees only a restricted range of true mass values, yielding $\mathrm{Var}(\lambda_k) < \mathrm{Var}(\lambda_n)$. Conversely, an underconfident candidate probes the full extent of the truth, and $\lambda_k$ swings from 0 to 1, yielding $\mathrm{Var}(\lambda_k) > \mathrm{Var}(\lambda_n)$.

When $\mathrm{Var}(\lambda_k)<\mathrm{Var}(\lambda_n)$, By Cauchy Schwarz we derive $\mathrm{Cov}(\lambda_n,\lambda_k) < \mathrm{Var}(\lambda_n)$, hence $\mu < 2/3$. This motivates defining:

\begin{itemize}
    \item \textbf{Overconfident}: $\mathrm{Var}(\lambda_k) < \mathrm{Var}(\lambda_n) => \mu_{\mathrm{Mira}} < 2/3$ (by Cauchy Schwarz).
    \item \textbf{Underconfident}: $\mu_{\mathrm{Mira}} > 2/3 => \mathrm{Var}(\lambda_k) > \mathrm{Var}(\lambda_n)$ (contrapositive).
\end{itemize}
This formulation clarifies the interpretation of score differences in experiment of section \ref{sec:Baseline}: a score of $\sim 0.62$ is consistent with overconfidence, while a score of $\sim 0.7$ necessarily requires $\mathrm{Var}(\lambda_k) > \mathrm{Var}(\lambda_n)$, consistent with underconfidence. These two scores thus reflect qualitatively different failure modes despite similar distances from $2/3$. The Mira score thus provides not only quantitative fidelity assessment but also qualitative diagnostic information about the nature of misspecification.

\section{Experiments Details}

\subsection{Conditional Distribution Experiment}
\label{app:gen_models}

In this section, we provide additional details for the experiments in Section \ref{sec:Cond_Dist}.

\subsubsection{Architecture and Hyperparameters}
The diffusion model was implemented using the \texttt{score-models}\footnote{\href{https://github.com/AlexandreAdam/score\_models}{github.com/AlexandreAdam/score\_models}} package. It utilizes an NCSN++ architecture \citep{Song2021sde} with a variance exploding noising process. 
The model was trained with the Adam optimizer \citep{Diederik2015-adam} ($lr = 1\times10^{-4}$, batch size $256$, \texttt{ema\_decay} $=0.999$). All other hyperparameters followed the \texttt{score-models} defaults. Training was performed on an A100 GPU for $\sim$2 hours (wall time) using 32\,GB of VRAM.

The conditional VAE used a convolutional encoder with layer structure $(28\times28)\to(32\times14\times14)\to(64\times7\times7)\to32$, and a symmetric decoder with transposed convolutions. Class conditioning was introduced through learned label embeddings concatenated to both the encoder and decoder inputs. The model was trained for 500 epochs with batch size 512 using Adam ($lr = 3\times10^{-4}$, weight decay $10^{-5}$) and a cosine annealing learning rate schedule, with KL warmup.

\subsubsection{Generated Samples}
\label{app:cond_dist_samples}
\begin{figure}
    \centering
    \includegraphics[width=0.8\textwidth]{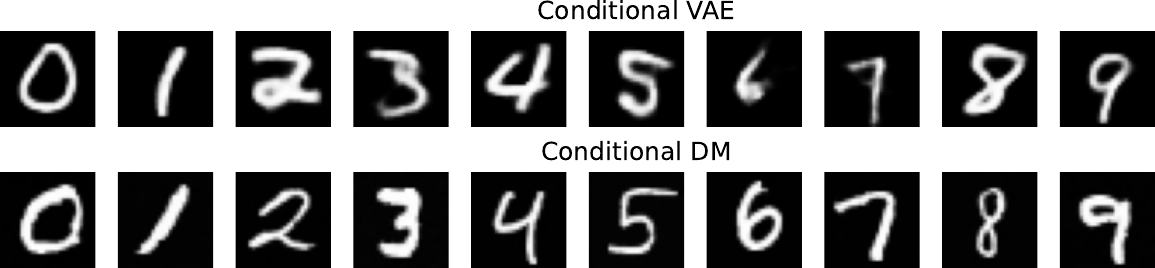}
    \caption{\textbf{Top row:} Samples generated by the conditional VAE for a single draw of digits $\{1,\dots,9\}$. In this particular realization, digits 3, 6, and 7 exhibit noticeable distortions, while the remaining digits are reasonably well formed, suggesting mild limitations in the model’s ability to capture the target conditional distribution.
\textbf{Bottom row:} Samples generated by the conditional diffusion model for the same conditioning, where all digits appear sharp and correctly rendered, indicating a stronger capacity to model the conditional distribution.  
Although based on a single qualitative example, these observations are consistent with the trends captured by the \name\ score reported in \autoref{fig:Cond_Distb_Result}.}
    \label{fig:cGenModel}
\end{figure}

In \autoref{fig:cGenModel}, we show generated samples (a single draw of digits $\{0,\dots,9\}$) for the two conditional generative models. For both models, we directly sample from the learned conditional distribution $p(y \mid x)$ given the class label $x$ where each sample is drawn independently according to the respective model’s conditional generative mechanism. Although only one sample per class is displayed, the conditional diffusion model produces sharper and more accurate digits than the conditional VAE. This qualitative observation is consistent with the trends captured by the \name\ score reported in \autoref{fig:Cond_Distb_Result}.

\subsection{Physical model mismatch experiment}
\label{app:Lensed_Param_Images}

We provide details about the physical forward models used in Section \ref{sec:Model_Misspecification} and the sampling to get posteriors. 

\begin{table}

\centering
\caption{Parameter ranges for lens and source parameters that are inferred. All other parameters are held constant across models. SIE models implicitly fix $\gamma = 2.0$.}
\label{tab:Param_Config}
\begin{tabular}{ll}
\toprule
\textbf{Parameter} & \textbf{Distribution} \\
\midrule
\multicolumn{2}{l}{\textbf{EPL Lens}} \\
Einstein radius $b$ & $\mathcal{U}[1.0, 1.5]$ \\
Axis ratio $q$ & $\mathcal{U}[0.5, 0.9]$ \\
Orientation angle $\phi$ & $\mathcal{U}[0.0, \pi]$ \\
Power-law slope $\gamma$ & $\mathcal{U}[1.75, 2.25]$ \\
\midrule
\multicolumn{2}{l}{\textbf{SIE Lens}} \\
Einstein radius $b$ & $\mathcal{U}[1.0, 1.5]$ \\
Axis ratio $q$ & $\mathcal{U}[0.5, 0.9]$ \\
Orientation angle $\phi$ & $\mathcal{U}[0.0, \pi]$ \\
\midrule
\multicolumn{2}{l}{\textbf{Sérsic Source}} \\
Source center $\hat{x}_{\text{src}}$ & $\mathcal{U}[-0.5, 0.5]$ \\
Source center $\hat{y}_{\text{src}}$ & $\mathcal{U}[0.05, 0.10]$ \\
Effective intensity $I_e$ & $\mathcal{U}[0.4, 0.8]$ \\
\bottomrule
\end{tabular}
\end{table}

\subsubsection{Forward Models}

Following \citet{filipp2024}, we generate strong lensing images with background sources composed of one or three Sérsic components \citep{Sersic1963}, allowing us to vary the complexity of source morphologies in a controlled manner. A Sérsic profile is a parametric model for a galaxy’s surface brightness, with parameters controlling its size, ellipticity, orientation, and concentration. For the lens mass distribution, we consider either an EPL or an SIE model. All lensing images are generated using \texttt{caustics} \cite{stone2024}.
The resulting lensed galaxy images are defined on a $100 \times 100$ grid with a pixel size of $0.05$ arcsec. Independent Gaussian noise with distribution $\mathcal{N}(0,1)$ is then added to each pixel. \autoref{tab:Param_Config} summarizes the parameters of the forward models that we aim to infer. \autoref{fig:Lensed_Param_Samples} provides an example of the observed image (second column) and image without noise (first column) of the true forward model (EPL + 3 Sérsic sources).

\subsubsection{Posterior Samples}
To get the posterior samples, MALA is run using 100 walkers, with 200 steps for burn-in and 200 steps for sampling, yielding $20\,000$ posterior samples per model.
In setups with three Sérsic sources, we independently sample source-specific parameters, i.e., the position ($\hat{x}_{\text{src}}$, $\hat{y}_{\text{src}}$), and the intensity  ($I_e$), for each component. MALA was run on a single AMD Milan CPU core for approximately 8 minutes (wall-time) per configuration, using up to 10 GB of memory. Across 100 synthetic observations, the total inference cost was approximately 13.33 CPU-hours.

When running \name, all models are evaluated in the 13-dimensional parameter space.  For single-source configurations, the second and third source positions ($\hat{x}_{\text{src}}$, $\hat{y}_{\text{src}}$) and intensities ($I_e$) are fixed to zero, ensuring a consistent parameter structure across models and allowing for fair posterior comparisons. 

\begin{figure}[htbp]
    \centering
    \includegraphics[width=0.7\textwidth]{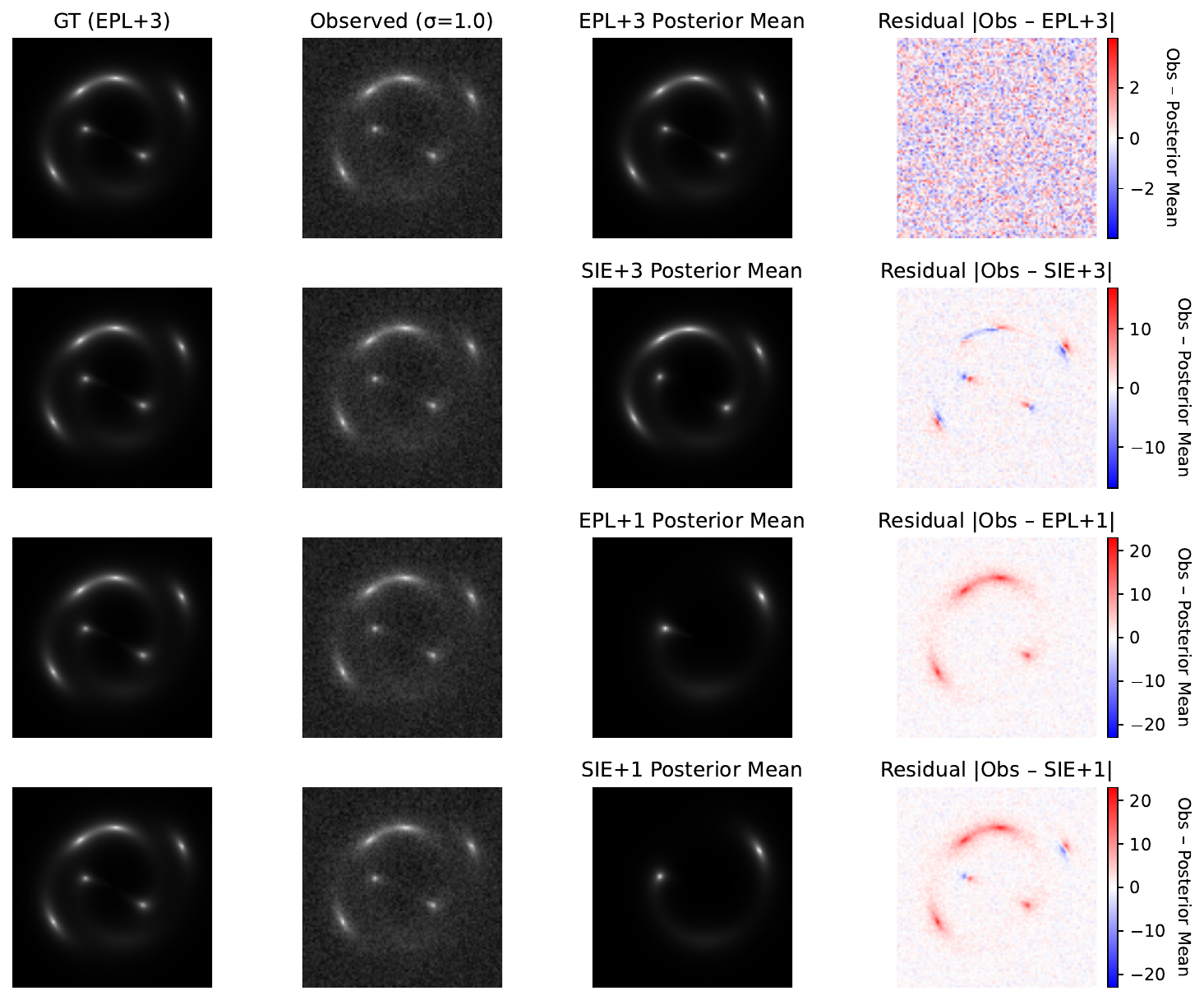}
    \caption{Left to right: clean ground truth image (EPL + 3 Sérsic sources), observed data with Gaussian noise (\(\sigma = 1\)), posterior means from four candidate models, and corresponding residuals (observation minus posterior mean). Only the correctly specified model (top row: EPL + 3 Sérsic source) produces residuals consistent with Gaussian noise. Other models show structured residuals, revealing mismatches due to incorrect lens type and/or source count.
}
\label{fig:Lensed_Param_Samples}
\end{figure}

\clearpage
\par\noindent

\autoref{fig:Lensed_Param_Samples} shows one example of the clean ground truth image (EPL + 3 Sérsic sources), the corresponding noisy observation, and posterior means from each candidate model: EPL+3, SIE+3, EPL+1, and SIE+1. Each posterior mean is computed by averaging 100 MALA samples. The final column shows residuals between the observation and the posterior mean. Only the correctly specified model (EPL + 3 Sérsic, top row) produces residuals consistent with Gaussian noise. All other models exhibit structured residuals, revealing mismatches due to incorrect lens profile, source count, or both.  Among these models, the SIE + 3 Sérsic model yields smaller residual amplitudes than the single source models. The EPL + 1 Sérsic and SIE + 1 Sérsic models exhibit residuals of comparable magnitude. This ordering is consistent with the ranking provided by the \name score.

\subsection{Prior distribution and noise model shifts experiment}
\label{app:Lensed_Galaxy_Images}
In this section, 
we provide details about the physical forward models used in Section \ref{sec:StrongLensing} and the sampling to get posteriors. 

\subsubsection{Architecture and Hyperparameters}

In this experiment, we employ a score-based diffusion model to learn the prior distribution from images of elliptical and spiral galaxies. We then combined this learned prior with our analytical likelihood that lens the input source galaxy to get posterior samples (see \citealp{Barco_2025} for further details). The models were implemented using the \texttt{score-models} package and adopt an NCSN++ architecture \citep{Song2021sde}.
Training was performed using the Adam optimizer \citep{Diederik2015-adam}, with a learning rate of $ 1\times10^{-4}$, a batch size of $256$, and an exponential moving average decay parameter \texttt{ema\_decay} $= 0.999$, for approximately $2.5\times10^5$ optimization steps. All hyperparameters not explicitly specified were set to the default values provided by the \texttt{score-models} package. Each model was trained on a single NVIDIA A100 GPU for roughly $20$ hours of wall-clock time, with $32$~GB of allocated VRAM.

\begin{figure}[htbp]
    \centering
    \includegraphics[width=0.8\textwidth]{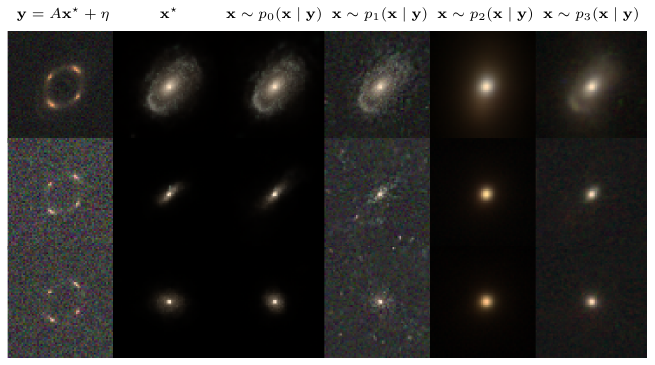}
    \caption{Plotted in order of left to right is the result of the forward model noised up. The 2nd column is the ground truth, next is the first posterior model with elliptical galaxy prior and $\sigma_n = 2.0$, the next column is the posterior given elliptical galaxy prior and $\sigma_n = 2$, the 5th column is the posterior model with a spiral galaxy prior and $\sigma_n = 0.5$, and lately the last column is the posterior model given a spiral galaxy prior and $\sigma_n = 0.5$. Rows are different sources.}
    \label{fig:Lensed_Images}
\vspace{-3.5mm}
\end{figure}

\subsubsection{Posterior Samples}

To get posterior samples, we use the same SDE solver setup for prior and posterior sampling as \citep{Barco_2025}, which is a predictor-corrector solver \citep{Song2021sde} with $1024$ solver steps. We obtain $16$ prior samples to simulate the ground truths $x^*$, and get $64$ posterior samples per observation per configuration. Inference of these $4\,112$ samples was carried out in a single A100 GPU for $~4$ hours (wall-time) and $40$Gb of VRAM allocated.

In \autoref{fig:Lensed_Images}, we show the true source galaxy (second column) and the corresponding lensed and noisy observation (first column). We then display one posterior sample for each candidate model. Consistent with the \name score, the best reconstruction of the source galaxy is obtained when using the correct prior and noise model (third column), followed by the model with the correct prior but an incorrect noise model (fourth column). The two remaining models, which rely on an incorrect prior, both yield poor reconstructions, and their relative performance is difficult to distinguish qualitatively.

\section{Additional Experiments} 
In this section we provide additional experiments on which we test \name.

\subsection{InverseBench}
\label{app:InverseBench_Problem}

We build on the recently introduced \textsc{InverseBench} framework~\citep{InverseBench}, which evaluates plug-and-play diffusion priors (PnPDPs) across a diverse set of scientific inverse problems, including compressed sensing MRI, black hole imaging, and linear inverse scattering. For full experimental details, we refer the reader to \citealp{InverseBench}. While InverseBench highlights the promise of diffusion models in producing high-fidelity reconstructions, its evaluation depends on deterministic metrics such as PSNR, SSIM, and task-specific misfit scores such as $\tilde{\chi}^2$. These metrics assess image quality and data consistency, but do not evaluate the calibration of the inferred posterior. For this reason, we are not using these metrics to validate \name.

Across all experiments, we compare the following models: DPS~\citep{chung2023diffusion}, REDDiff~\citep{mardani2023}, DiffPIR~\citep{zhu2023}, DAPS~\citep{Zhang_2025}, and PnP-DM~\citep{wu2024}. We consider $L=8$ true parameters, and for each, we generate $N=50$ i.i.d. posterior samples. 

\begin{figure}[h]
    \centering
    \includegraphics[width=0.5\columnwidth]{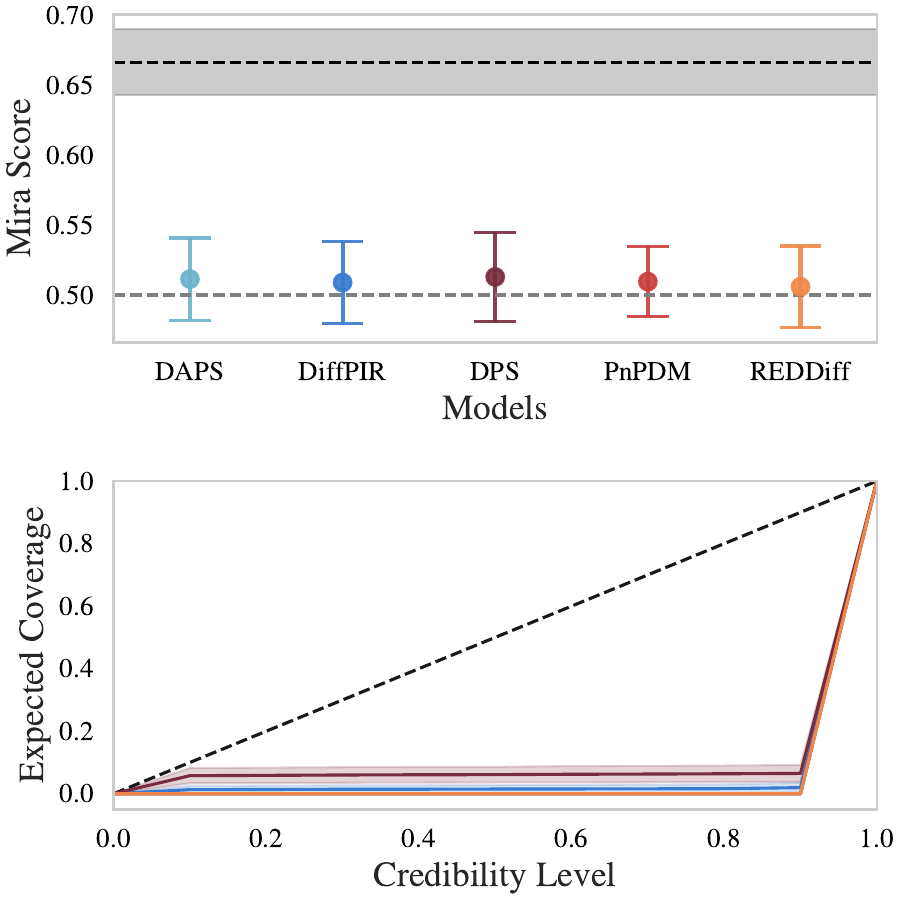}
    \caption{\textbf{Top}: \name applied to the Black Hole Imaging inverse problem shows all models are uncalibrated, with DPS performing best. The shaded gray band indicates the theoretical uncertainty, $2/3 \pm \sqrt{1/18L}$.  \textbf{Bottom}: TARP results are consistent with \name’s assessment.
}
\label{fig:blackhole_imaging_Mira}
\vspace{-4.0mm}
\end{figure}

\subsubsection{Black hole imaging}\label{app:blackhole}

The black hole imaging experiment aims to recover ideal $64\times64$ pixels images, $\mathbf{z}$, of black hole event horizon in position space from measured $\mathbf{I}_{a,b}^t(\mathbf{z})$ representing the Fourier component of the image. Visibilities are then corrupted by the instrument measurement as follows: 
\begin{equation}
V_{a,b}^t
=
g_a^t g_b^t
e^{-i(\phi_a^t - \phi_b^t)}
\mathbf{I}_{a,b}^t(\mathbf{z})
+
\mathbf{\eta}_{a,b}^t, 
\label{eq:visibilities}
\end{equation}
with (a,b) denoting a pair of telescopes, $\eta_{a,b}^t$ Gaussian thermal noise,  $g_a^t$ and $g_b^t$ the telescope-dependent amplitude errors, and  $\phi_a^t$ the phase errors $\phi_a^t$, $\phi_b^t$.

Using this forward model, we perform posterior inference with pretrained diffusion priors, generating multiple samples per test image. We then evaluate these posterior samples using \name to quantify how well they capture the true posterior distribution.
\Cref{fig:blackhole_imaging_Mira} shows \name scores for the black hole imaging inverse problem. Across all methods, scores remain close to the lower bound of $1/2$, indicating that none of the evaluated models reliably capture the posterior structure. This aligns with prior findings that black hole imaging is an ill-posed problem~\citep{zhang2025, leong2023, wu2024}. TARP confirms the \name findings.  
Figure~\ref{fig:blackhole_images_mean} shows the means of the 50 black hole image samples we generate.

\begin{figure}
    \centering
\includegraphics[width=0.8\textwidth]{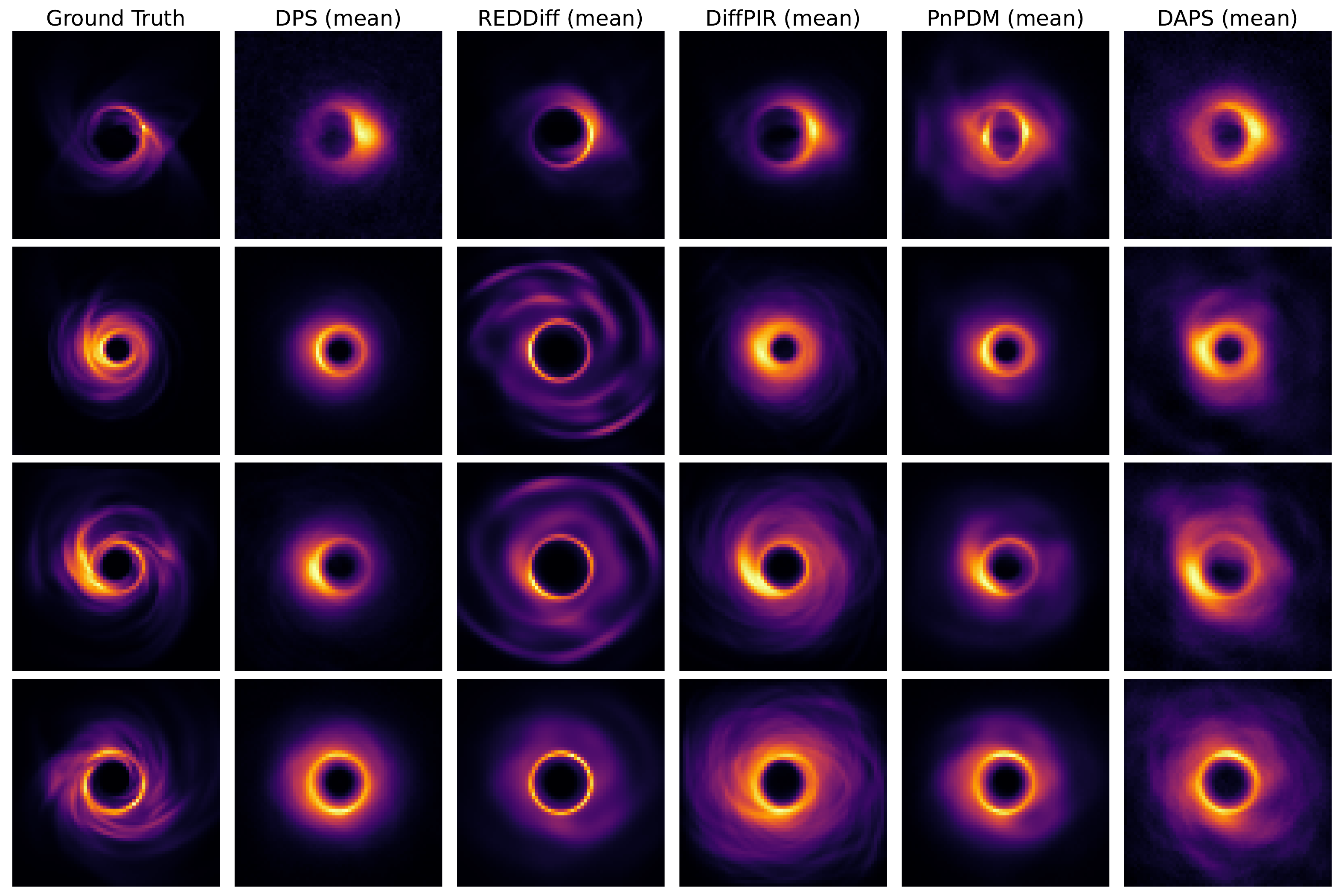}
    \caption{The mean of the posterior samples from the black hole imaging inverse problem. All models struggle to correctly recover the truth black hole image.}
    \label{fig:blackhole_images_mean}
\end{figure}

\subsubsection{Compressed sensing MRI}\label{app:mri}

The compressed sensing MRI experiment aims to recover the image $\mathbf{z} \in \mathbb{C}^m$ of dimension $320 \times 320$ pixels, from the following corrupted observation
\begin{align}
    \mathbf{y}_j = \mathbf{P} \mathbf{F} \mathbf{S}_j \mathbf{z} + \mathbf{n}_j, \quad \text{for } j = 1, \ldots, J,
\end{align}
where $\mathbf{P}$ is the subsampling mask, $\mathbf{F}$ is the Fourier transform, $\mathbf{S}_j$ is the sensitivity map of the j-th coil, and $\mathbf{n}_j$ is measurement noise.

\begin{figure}[h]
    \centering\includegraphics[width=0.5\columnwidth]{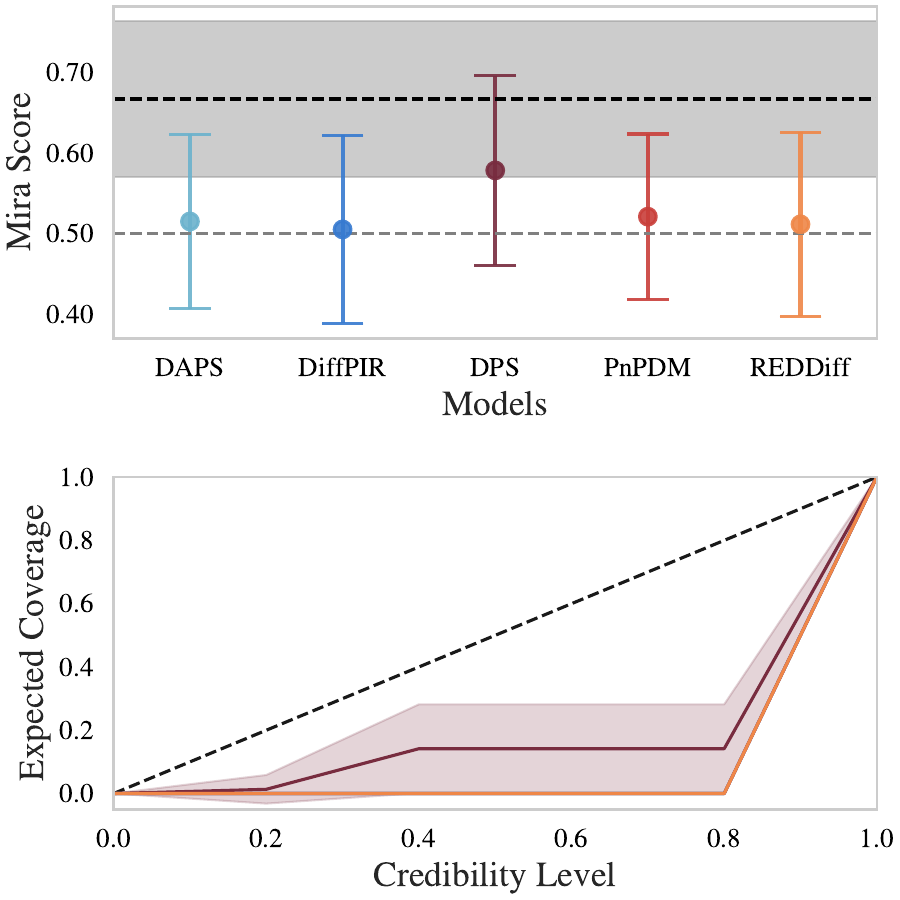}
    \caption{\textbf{Top}:  \name applied to compressed sensing MRI identifies DPS as the best-calibrated model, though all models remain miscalibrated. Horizontal lines denote expected behavior: well-calibrated (black), underconfident (upper gray), and overconfident or biased (lower gray). The shaded gray band indicates the theoretical uncertainty, $2/3 \pm \sqrt{1/18L}$.  Note that the theoretical variance is large due to there only being 6 fiducial samples to perform inference on. \textbf{Bottom}: TARP supports this conclusion, DPS performs best, but no model achieves calibration.
}
\label{fig:multi_coil_Mira}
\vspace{-3mm}
\end{figure}

\Cref{fig:multi_coil_Mira} presents \name scores, and Figure~\ref{fig:multi_coil_images_mean} displays the mean of the 50 posterior samples per model. While most models achieve strong reconstruction fidelity, their posterior samples are poorly calibrated. Notably, DPS achieves a \name score of 0.58, indicating significantly better calibration compared to the other models. Again, TARP confirms the \name findings. 

These results highlight that fidelity metrics alone cannot identify when models fail to capture uncertainty correctly. On the other hand, coverage tests such as TARP capture the uncertainty but do not enable ranking posteriors. \name offers a complementary evaluation, enabling principled comparisons of posterior quality across generative inference pipelines.

\begin{figure}[t]
    \centering
\includegraphics[width=0.7\textwidth]{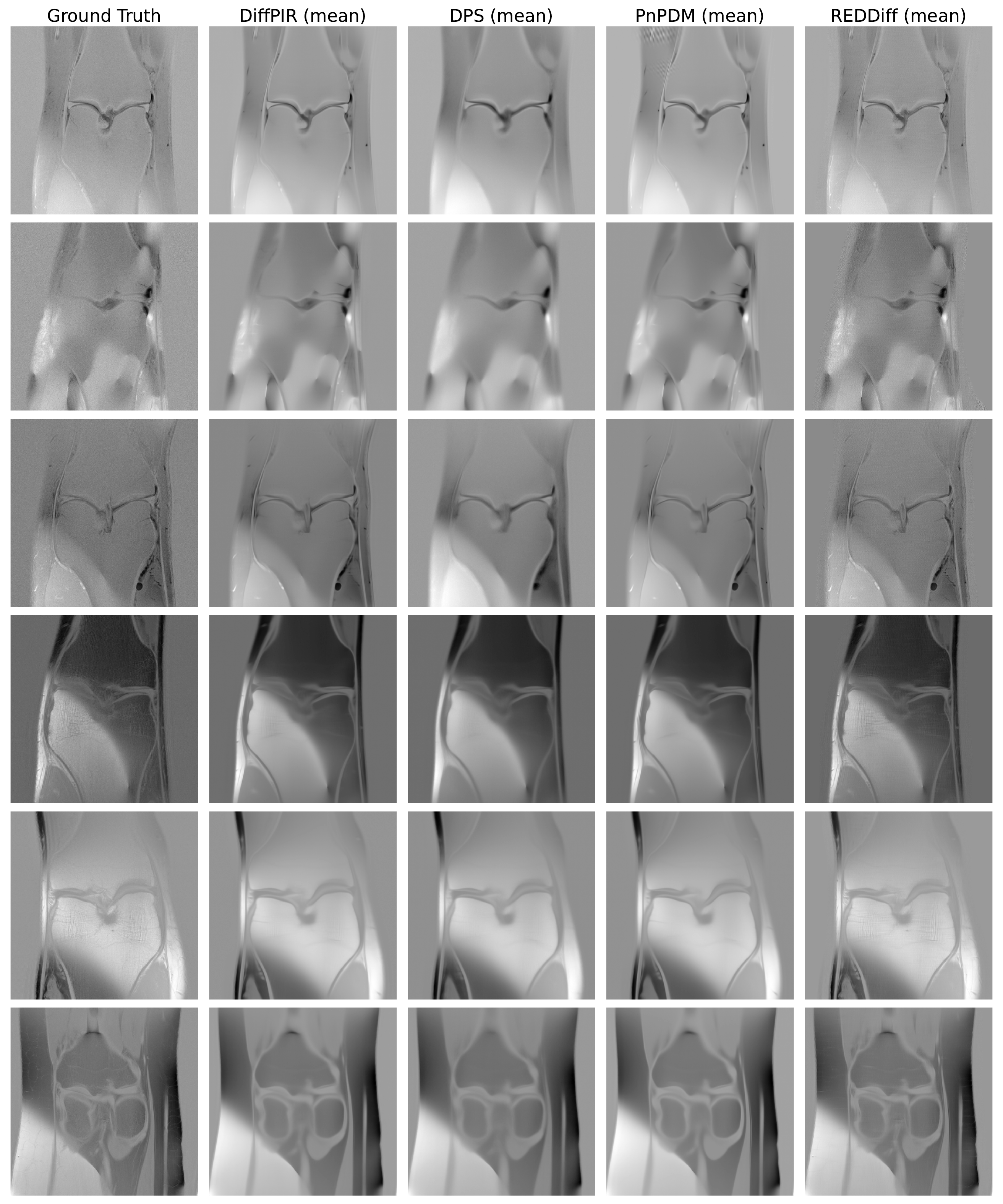}
    \caption{The mean of the posterior samples from the compressed sensing MRI inverse problem. Despite high fidelity, it does not directly mean the posteriors are well calibrated.}
    \label{fig:multi_coil_images_mean}
\end{figure}

\subsection{Bayes Factor Comparison}
\label{app:Mira_BF_Comp}

In this appendix, we compare \name with the Bayes factor (BF) on two experiments. We do not include this comparison in the main experiments because computing the BF is not tractable in that setting. Instead, we construct toy problems for which the evidence can be computed exactly.

\subsubsection{Linear Regression}
\label{app:bf_linear}

We start by considering a toy Bayesian inference problem where we aim to infer the posterior distribution $p(y \mid x)$ over weights $y = [m, b] \sim \mathcal{N}([0,0], [0.5,2])$ of the linear model $x=mz + b + \eta$ with $\eta \sim \mathcal{N}(0, \sigma^2)$ and $z \sim \mathcal{U}[-1,1]$. We define five models where we increase the noise levels, $\sigma = \{0.001, 0.01, 0.01, 0.10, 0.20, 0.25\}$, and choose the model with the least noise as our true model. 

To compute \name, for each model, we consider $L=5\,000$ true samples and consider $100$ regions per true sample. For each of these observations, we draw $N=5\,000$ samples from our analytic posterior distribution. 
The BF is defined as the ratio of the marginal likelihood of a candidate model to that of a reference (true) model:
\[
\text{BF}(\eta) = \frac{p(y \mid \mathcal{M}_\eta)}{p(y \mid \mathcal{M}^*)},
\]
where \(\mathcal{M}_\eta\) denotes the model parameterized by the noise level \(\eta\), and \(\mathcal{M}^*\) denotes the model associated with the smallest noise level \((\eta = 0.001)\), which is taken as the ground-truth data-generating model. To make the results consistent with \name, we average the BF over the $L=5\,000$ true observations. 

\begin{table}[htbp]
\centering
\caption{Comparison of \name and BF scores in linear regression. \name and BF both rank models consistently with increasing levels of misspecification. \name scores range from $1/2$ (poorly specified model) to $2/3$ (well specified model). BF near $1$ indicate models that are nearly as plausible as the reference model $\mathcal{M}^*$; lower values indicate less support.}
\vspace{2mm}
\label{tab:bf_linear}
\begin{tabular}{lccc}
\hline
Noise Level & \name & BF \\
\hline
\textbf{0.001} & \textbf{0.6638} & \textbf{0.999} \\
0.01  & 0.6422 & 0.990 \\
0.1   & 0.5668 & 0.906 \\
0.15  & 0.5589 & 0.863 \\
0.2   & 0.5551 & 0.821 \\
0.25  & 0.5521 & 0.782 \\
\hline
\end{tabular}
\end{table}

As shown in \autoref{tab:bf_linear}, both BF and \name consistently assign higher scores to the models with less noise, with the ranking degrading smoothly as noise increases.

\subsubsection{Gaussian Mixture Model Shifts}
\label{app:bf_gmm}

For this experiment, we consider a non-conditional high-dimensional setup where we define Gaussian Mixture Models (GMMs) of $100$ dimensions and $20$ mixture components. To consider different models, we create a distributional shift by translating the means along the diagonal direction, with the unshifted distribution being the true model. 

To compute \name, from each GMM, we generate $N=5\,000$ samples,  $L=5\,000$ true samples, and consider $100$ regions per true sample. As this experiment does not involve Bayesian inference, there is no likelihood or prior. Instead, we treat the GMM probability density function (pdf) as our evidence and compute the BF as the ratio of the candidate model pdf to the true model pdf. To make it consistent with \name, we average the BF over the $L = 5\,000$ true samples.

\begin{table}[htbp]
\centering
\caption{Comparison of \name and BF scores across GMM shift magnitudes. \name reliably favors the in-distribution model and penalizes shifted ones. \name scores range from $1/2$ (poorly specified model) to $2/3$ (well specified model). BF near $1$ indicate models that are nearly as plausible as the reference model $\mathcal{M}^*$; lower values indicate less support.}
\vspace{2mm}
\label{tab:bf_gmm}
\begin{tabular}{lccc}
\hline
Shift Magnitude & \name & BF \\
\hline
-6 & 0.5040 & 0.000 \\
-3 & 0.5878 & $4.55 \times 10^{-145}$ \\
\textbf{0} & \textbf{0.6664} & \textbf{1.000} \\
3  & 0.5925 & $5.45 \times 10^{-162}$ \\
6  & 0.5034 & 0.000 \\
\hline
\end{tabular}
\end{table}

As shown in \autoref{tab:bf_gmm}, both \name and the BF correctly identify the unshifted model ($\ell = 0$) as the most accurate and identify increasingly shifted models as less probable. The \name score transitions from $2/3$ for the well-calibrated model toward the lower bound of $1/2$ as the shift increases, while the log BF becomes increasingly smaller. 

Across both experiments, we observe strong agreement between \name and the BF. This indicates that \name provides a reliable, sample-based criterion for model comparison. Unlike the BF, which compares models via the marginal likelihood in data space, \name operates directly on posterior samples in parameter space, making it a practical and complementary alternative for Bayesian model selection, particularly in settings where evidence computation is intractable.

\subsection{Comparison with other Accuracy Scores}\label{app:score_comparison}

We aim here to benchmark the sensitivity of \name against other existing conditional distribution calibration metrics. 

\subsubsection{Comparison with with MMD, Wasserstein, C2ST on Bayesian Posterior Accuracy Evaluation}

First, in the Bayesian inference setting, we compare the sensitivity of \name against that of the two-sample tests C2ST, MMD, and the Wasserstein distance. We want to evaluate the capacity of these metrics to detect adversarial perturbations of posterior distributions.

We define the inference problem as a linear Gaussian model with prior $y \sim \mathcal{N}(0, I_2)$ and data distribution  $x \mid y \sim \mathcal{N}(y, 0.5^2 I_2)$. For every perturbation considered, we draw $L=200$ true samples $y^*$ from the prior distribution, and a corresponding observation $x^*$ for each. From this, we draw $N = 500$ posterior samples for each observation from the analytical posterior.

In addition to testing the scores on the samples from the true analytical posteriors as a baseline, we consider two perturbations: First, the heavy-tail perturbation, for which we replace the analytical Gaussian posterior by a location-scale Student's t distribution with mean set to match the true posterior and with two degrees of freedom. 
The second perturbation we consider is a 2-dimensional rotation of the posteriors by a fixed $40^o$ about the MAP. This transformation preserves all the pairwise distances between the posterior samples, but breaks the relation between the point cloud and the true sample $y^*$.

Results are displayed in Table~\ref{tab:performance}. Under the null test, \name, C2ST, MMD, and the Wasserstein distance correctly identify that the posterior estimators are accurate. In the case of the heavy-tailed perturbation, while Mira, C2ST, and the Wasserstein distance can detect the distortion, MMD shows limited sensitivity. In the case of the rotated posteriors, C2ST, MMD, and Wasserstein remain near their calibrated values due to the distance invariance of the perturbation. \name, on the other hand, correctly identifies that the posteriors are not calibrated. This reveals that traditional scores of posterior calibration can be blind to some discrepancies in posterior estimators, especially when posteriors are not calibrated in ways that respect certain symmetries (such as distance invariance, or when marginals are identical to those of the unconditional distribution).

\begin{table}[ht]
\centering
\caption{Scores sensitivity comparison for different posterior perturbations (see text for details).}
\begin{tabular}{lcccc}
\toprule
Setting & Mira & C2ST & MMD & Wasserstein \\
\midrule
Null &
$0.6599 \pm 0.0155$ &
$0.5271 \pm 0.0005$ &
$0.0013 \pm 0.0008$ &
$0.0689 \pm 0.0127$ \\

Heavy Tailed &
$0.6215 \pm 0.0200$ &
$0.7880 \pm 0.0005$ &
$0.0194 \pm 0.0020$ &
$0.3835 \pm 0.0308$ \\

Rotated &
$0.6954 \pm 0.0152$ &
$0.5934 \pm 0.0050$ &
$0.0015 \pm 0.0009$ &
$0.0707 \pm 0.0139$ \\
\bottomrule
\end{tabular}

\label{tab:performance}
\end{table}
\subsubsection{Comparison with MMD, Wasserstein, C2ST on Tail and Dimension Sensitivity}
Next, Tables~\ref{tab:2d-results} and \ref{tab:100d-results} evaluate the sensitivity of Mira relative to MMD, the Wasserstein distance, and C2ST as the dimensionality of the $\mathcal{Y}$ space, $d_y$, varies. To simulate distributional shifts, we systematically attenuate the tail mass of the conditional distribution: beyond a parametric cutoff, the density is scaled by a factor $\alpha < 1$ (specifically, we consider a cutoff of 4.0 with $\alpha=0.8$ and a cutoff of 2.0 with $\alpha=0.4$). This effectively supresses rare events without fully truncating them. Mira consistently detects these shifts, with scores scaling according to the degree of suppression, whereas standard metrics fail to flag them. Moreover, as shown here and in Appendix~\ref{app:Mira_Sensitivity} (Figure~\ref{fig:Distribution_Choice}), Mira scales effectively with dimensionality. This reliability highlights Mira's utility for assessing conditional model calibration, particularly in settings where the modeling of rare tail events is critical but prone to under-estimation.

\begin{table}[ht]
\centering
\caption{Results in 2 dimensions for varying levels of tail suppression.}
\begin{tabular}{lcccc}
\toprule
Label & Mira & MMD & Wasserstein & C2ST \\
\midrule
Null &
$0.6628 \pm 0.0205$ &
$0.00005 \pm 0.0002$ &
$0.0429 \pm 0.0094$ &
$0.5009 \pm 0.0094$ \\

cutoff=4.0, scale=0.80 &
$0.6317 \pm 0.0243$ &
$0.0001 \pm 0.0003$ &
$0.0448 \pm 0.0107$ &
$0.4998 \pm 0.0097$ \\

cutoff=2.0, scale=0.40 &
$0.6027 \pm 0.0246$ &
$0.0000 \pm 0.0002$ &
$0.0430 \pm 0.0085$ &
$0.5007 \pm 0.0084$ \\
\bottomrule
\end{tabular}
\label{tab:2d-results}
\end{table}

\begin{table}[ht]
\centering
\caption{Results in 100 dimensions for varying levels of tail suppression.}
\begin{tabular}{lcccc}
\toprule
Label & Mira & MMD & Wasserstein & C2ST \\
\midrule
Null &
$0.6568 \pm 0.0224$ &
$0.000021 \pm 0.000037$ &
$0.0430 \pm 0.0022$ &
$0.4960 \pm 0.0097$ \\

cutoff=4.0, scale=0.80 &
$0.644229 \pm 0.0250$ &
$0.000023 \pm 0.000039$ &
$0.0430 \pm 0.0019$ &
$0.4950 \pm 0.0096$ \\

cutoff=2.0, scale=0.40 &
$0.5756 \pm 0.0312$ &
$0.000027 \pm 0.000032$ &
$0.0428 \pm 0.0020$ &
$0.4974 \pm 0.0095$ \\
\bottomrule
\end{tabular}
\label{tab:100d-results}
\end{table}
\subsubsection{Comparison with MMD and Wasserstein Distance on Conditional Generative Model Evaluation}

Next, we benchmark \name against MMD (with radial basis function kernel) and the Wasserstein distance using the conditional generative models described in Section~\ref{sec:Cond_Dist}. For each of the 10 MNIST digits, draw $L=100$ fiducial samples $y^*$ made of real MNIST test images and compare each against $N=80$ samples drawn from the proposal model. We consider 3 models in this experiment: (1) a null setting (real vs. real), where the proposal samples are drawn from the remaining held-out test data to establish a baseline; (2) a conditional Variational Autoencoder (CVAE); and (3) a conditional Diffusion Model. Results are reported as the average across all 10 digits.

\begin{table}[htbp]
\centering
\caption{Comparison of \name, MMD, and Wasserstein Distance for conditional generative model evaluation. \name correctly ranks models according to their level of accuracy. In contrast, MMD and Wasserstein yield similar scores for both the null test and misscalibrated models, indicating limited sensitivity.}
\label{tab:Metric_Comp}
\begin{tabular}{lcccc}
\hline
Experiment & \name & MMD & Wasserstein Distance\\
\hline
Null Test & 0.6718 & 0.003885 & 2.413185 \\
Conditional VAE & 0.5716 & 0.021919 & 2.390693\\
Conditional Diffusion Model & 0.6599 & 0.010114 & 2.449833 \\
\hline
\end{tabular}
\end{table}

As reported in Table~\ref{tab:Metric_Comp}, \name correctly characterizes the divergence of the models proposal from the true data distribution.
It assigns high scores to the Null setting, but significantly lower scores to the CVAE, which is known to produce lower-quality samples. The Diffusion model, on the other hand, achieves scores near the theoretical $2/3$ benchmark (expected for a well-calibrated posterior), demonstrating its improved conditional accuracy. Conversely, MMD and the Wasserstein distance produce comparable values for both the Null and the samples from the CVAE model, indicating limited sensitivity to moderate distributional shifts in this low-sample regime.

\subsubsection{Comparison with L-C2ST on Bayesian Posterior Accuracy Evaluation}

Next, we benchmark \name against the Local Classifier Two-Sample Test (LC2ST; \citep{linhart2023lc2st}), a local posterior diagnostic that trains a binary classifier to distinguish samples drawn from the true posterior $p(y \mid x_o)$ versus the approximate posterior $q(y \mid x_o)$, with statistical significance assessed via a permutation-calibrated test statistic. While LC2ST operates within the same conditional comparison framework as \name, it is fundamentally limited by the capacity of its learned classifier. This classifier can degrade significantly in high-dimensional settings, particularly when discrepancies appear in higher-order structures of distributions like tail behavior rather than in low-order moments. To demonstrate this, we construct a 50-dimensional example where the true posterior is a factorized Laplace distribution (zero mean, unit variance) and the approximate posterior is a standard multivariate Gaussian $\mathcal{N}(0, I_{50})$. These distributions are matched in mean and variance, differing only in their tail structure. For each observation, we draw $100$ samples from each posterior and compute the LC2ST statistics using a permutation-based null distribution. LC2ST yields near-null values (Table~\ref{tab:LC2STvsMira}), failing to distinguish the heavy-tailed Laplace from the Gaussian. In contrast, \name consistently identifies the mismatch. This result highlights that classifier-based two-sample tests can suffer severe power loss under high-dimensional distributional shifts, whereas \name retains its sensitivity.
\begin{table}[htbp]
\centering
\caption{Comparison of MIRA and LC2ST on a 50-dimensional Laplace distribution and a Gaussian distribution. MIRA successfully detects the mismatch, whereas LC2ST fails to reject the null.}
\label{tab:LC2STvsMira}
\begin{tabular}{lccc}
\hline
Quantity & \name  & LC2ST \\
\hline
Score      & $0.6457 \pm 0.0251$ & $0.0660 \pm 0.0710$ \\
\hline
\end{tabular}
\end{table}

\subsubsection{Comparison with SBC and Sliced Wasserstein Distance on Conditional Model Evaluation}

Finally, we compare the sensitivity of SBC, \name, and the Sliced Wasserstein Distance (SWD) on a high-dimensional conditional generation task using MNIST. In each iteration, we draw a conditioning parameter $x = (p_0, p_2, p_4)$ from a Dirichlet distribution, representing class mixing proportions for digits 0, 2, and 4. A single reference image $y^*$ is generated by sampling a digit label according to $x$ and randomly selecting a corresponding image from the MNIST dataset, with small additive Gaussian noise. We compare the accuracy of three conditional models that generate samples given this $x$: Model A (Oracle) generates samples using the true conditioning parameter $x$; Model B (mildly misspecified) uses a perturbed $x$ that slightly underestimates class 2, with slightly elevated noise; and Model C (severely misspecified) ignores the conditioning $x$, generating samples from uniform proportions over all three classes, with strongly elevated noise.

In high-dimensional observation spaces such as images, axis-aligned marginal testing can be insensitive as individual pixel dimensions carry limited signal about joint distributional structure. A natural remedy is to replace pixel-aligned marginals with random 1D projections, which can better capture joint variations. We therefore evaluate SBC both in its standard marginal form and with random projections, and additionally compare against SWD, which is itself projection-based by construction, and Mira.

We assess each method through two complementary tests: the Kolmogorov-Smirnov (KS) $D$ statistic, which measures deviation from a reference distribution and can be used to rank models, and the associated $p$-value for hypothesis testing. For SBC (both marginal and projected variants), the ranks are tested against the uniform distribution. For SWD, per-projection distances are tested against those of the oracle (Model A). For projection-based methods, we vary the number of projections across $\{10, 50, 100\}$. With $n = 15{,}000$ pooled values per model, the KS critical value at $\alpha = 0.05$ is $D_{\mathrm{critical}} \approx 0.016$. This means that models with KS-D below this threshold are statistically indistinguishable from the reference.

As shown in \autoref{tab:projections_10}, \autoref{tab:projections_50}, and \autoref{tab:projections_100}, marginal SBC rejects all three models including the oracle ($p = 0.000$ throughout), and the KS-D values incorrectly rank Model B above the oracle. Projected SBC resolves this degeneracy (the oracle is no longer rejected), but still fails to reliably detect Model B, whose KS-D fluctuates around $D_{\mathrm{critical}}$ across projection counts. SWD correctly orders all three models by KS-D, yet its $p$-values and KS-D for Model B remain below the detection threshold, leaving B statistically indistinguishable from A. Random projections thus improve upon axis-aligned marginals by avoiding the degeneracy of uninformative pixel dimensions, but they still decompose the high-dimensional comparison into 1D slices, each of which carries only a weak signal about joint distributional structure.

Mira is the only method that detects and correctly ranks all three models with clear, stable separation (A $\approx 0.68$, B $\approx 0.63$, C $\approx 0.50$), consistently across all runs and projection settings. Its ball-based regions operate natively in the full space, capturing joint distributional structure without projection or aggregation. Additionally, Mira scores are directly comparable across datasets of different sizes $L$ (expected value $2/3$, theoretical uncertainty $\sqrt{1/(18L)}$), unlike the KS-D statistic whose null distribution depends on sample size.
\begin{table}[htbp]
\centering
\caption{Comparison of SBC, SWD, and \name with 10 projections.}
\label{tab:projections_10}
\begin{tabular}{lcccccccc}
\hline
 & \multicolumn{2}{c}{SBC Marginal} & \multicolumn{2}{c}{SBC Projected} & \multicolumn{2}{c}{SWD vs A} & \\
\cmidrule(lr){2-3} \cmidrule(lr){4-5} \cmidrule(lr){6-7}
Model & KS-D & $p$-val & KS-D & $p$-val & KS-D & $p$-val & \name \\
\hline
Model A & $0.2409$ & $0.0000$ & $0.0227$ & $0.4240$ & $0.0000$ & $1.0000$ & $0.6789 \pm 0.0134$ \\
Model B & $0.1303$ & $0.0000$ & $0.0187$ & $0.6728$ & $0.0137$ & $0.9421$ & $0.6281 \pm 0.0196$ \\
Model C & $0.2549$ & $0.0000$ & $0.0353$ & $0.0472$ & $0.1283$ & $0.0000$ & $0.4994 \pm 0.0185$ \\
\hline
\end{tabular}
\end{table}
\begin{table}[htbp]
\centering
\caption{Comparison of SBC, SWD, and \name with 50 projections.}
\label{tab:projections_50}
\begin{tabular}{lcccccccc}
\hline
 & \multicolumn{2}{c}{SBC Marginal} & \multicolumn{2}{c}{SBC Projected} & \multicolumn{2}{c}{SWD vs A} & \\
\cmidrule(lr){2-3} \cmidrule(lr){4-5} \cmidrule(lr){6-7}
Model & KS-D & $p$-val & KS-D & $p$-val & KS-D & $p$-val & \name \\
\hline
Model A & $0.2483$ & $0.0000$ & $0.0062$ & $0.9336$ & $0.0000$ & $1.0000$ & $0.6746 \pm 0.0117$ \\
Model B & $0.1283$ & $0.0000$ & $0.0080$ & $0.7200$ & $0.0055$ & $0.9748$ & $0.6308 \pm 0.0130$ \\
Model C & $0.2507$ & $0.0000$ & $0.0229$ & $0.0008$ & $0.1247$ & $0.0000$ & $0.4946 \pm 0.0169$ \\
\hline
\end{tabular}
\end{table}
\begin{table}[htbp]
\centering
\caption{Comparison of SBC, SWD, and \name with 100 projections.}
\label{tab:projections_100}
\begin{tabular}{lcccccccc}
\hline
 & \multicolumn{2}{c}{SBC Marginal} & \multicolumn{2}{c}{SBC Projected} & \multicolumn{2}{c}{SWD vs A} & \\
\cmidrule(lr){2-3} \cmidrule(lr){4-5} \cmidrule(lr){6-7}
Model & KS-D & $p$-val & KS-D & $p$-val & KS-D & $p$-val & \name \\
\hline
Model A & $0.2448$ & $0.0000$ & $0.0047$ & $0.8980$ & $0.0000$ & $1.0000$ & $0.6782 \pm 0.0139$ \\
Model B & $0.1233$ & $0.0000$ & $0.0086$ & $0.2197$ & $0.0073$ & $0.3991$ & $0.6366 \pm 0.0133$ \\
Model C & $0.2532$ & $0.0000$ & $0.0224$ & $0.0000$ & $0.1298$ & $0.0000$ & $0.4976 \pm 0.0151$ \\
\hline
\end{tabular}
\end{table}

\subsection{Uninformative Estimator}\label{app:uninformative}

A critical failure mode for posterior estimators is when $\hat{p}(y \mid x)=p(y)$. As demonstrated by \citet{Lemos2023}, standard diagnostic tools like HPD regions often fail to detect this pathology. To evaluate \name's sensitivity to an uninformative posterior estimator, we replicate the experiment performed in Section 4.3 of \citet{Lemos2023}. We consider a prior $p(y) = \mathcal{N}(0,1)$, with observations generated via $x \sim \mathcal{N}(\theta,0.1)$, and we set $p(y) = p(y \mid x)$.

We show in \autoref{tab:Uninformative_Estimator} that \name is blind to this failure mode, as it gives a score of $2/3$. However, similarly to TARP, this issue can be solved by making the regions dependent on the true observation $x^*$, for instance by choosing centers as $c=x^* + \epsilon$, where $\epsilon \sim \mathcal{U}(-0.05,0.05)$. With this new region construction, the \name score drops to $0.54$, thus identifying the discrepancy between the true and proposal posteriors.

\begin{table}[htbp]
\centering
\caption{We report the mean and standard deviation for both single-run and bootstrap estimates. While random (unconditioned) centers fail to detect that the postererio estimator is not calibrated, x-dependent centers correctly identify that the model is conditionally independent of the data}
\label{tab:Uninformative_Estimator}
\begin{tabular}{lcc}
\hline
Center Selection Strategy ($c$) & \name\\
\hline
Random (Unconditioned) & $0.6665 \pm 0.0071$  \\
x-Dependent (Conditioned) & $0.5412 \pm 0.0095$ \\
\hline
\end{tabular}
\end{table}

\subsection{Sensitivity Analysis}
\label{app:Mira_Sensitivity}

In this section, we conduct a comprehensive sensitivity analysis to characterize the behavior of \name under variation of the following factors: (1) the dimensionality of the conditional distribution; (2) the number of regions associated with each true observation; (3) the number of conditional samples $N$; (4) the number of true samples $L$; (5) the choice of distance metric $d$ used to define regions; (6) the distribution $p(c)$ used to specify region centers; and the support of the distribution $p(c)$.

\begin{figure}[h]
    \centering
    \includegraphics[width=\textwidth]{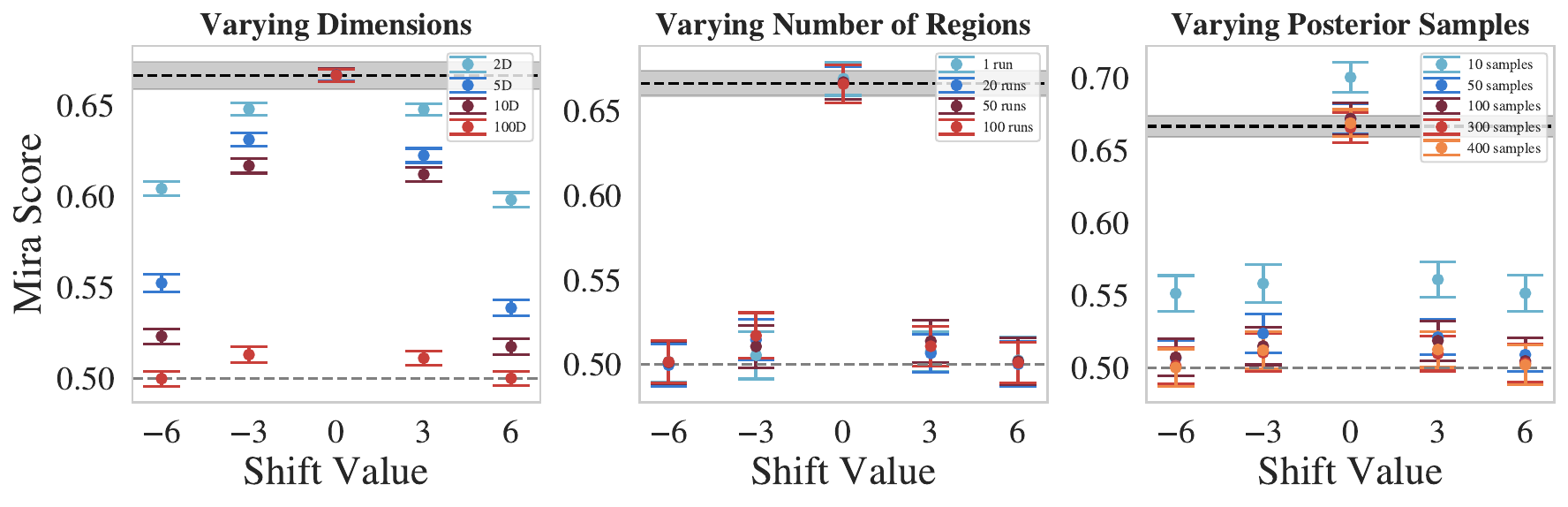}
    \caption{\name score sensitivity under varying experimental conditions. Top-left: effect of dimension on score; Top-right: number of hyperspheres per fiducial; Bottom-left: number of posterior samples. Across settings, \name scores peak for the well-calibrated ($\ell = 0$) model and fall to the poorly calibrated limit with increasing shift.}
    \label{fig:GMM_Mira_Plot}
\end{figure}

\paragraph{(1) Dimensionality of the conditional distribution} We use the toy GMM experiment introduced in Appendix \ref{app:bf_gmm} to study the behavior of \name under a fixed forward model configuration while increasing the dimensionality. Specifically, we retain $20$ Gaussian components and consider dimensions $d \in \{2, 5, 10, 100\}$ for each distributional shift. The number of posterior samples and true samples are kept fixed at $N = 400$ and $L=500$, and we use $100$ different regions per true observation $L$. As shown in \autoref{fig:GMM_Mira_Plot}, \name consistently ranks the models correctly across all dimensions, with the well-specified model (no shift) maintaining a score close to $2/3$. We observe a gradual degradation of the \name score as dimensionality increases, indicating increased sensitivity to the wrong model in higher-dimensional settings.

\paragraph{(2) Number of regions} We use the toy GMM experiment introduced in Appendix \ref{app:bf_gmm}, consisting of $20$ Gaussian components in a $100$-dimensional space. We fix the number of samples and true samples to $N = 400$ and $L=500$, and vary the number of regions used per true observation $L$ to estimate the \name score. We test with  $\{1, 20, 50, 100\}$ regions. As shown in \autoref{fig:GMM_Mira_Plot}, the estimated \name score remains stable, indicating that the score is robust to the choice of the number of regions, even in moderately high-dimensional settings with a limited number of posterior samples.

\paragraph{(3) Number of conditional samples $N$} We use the toy GMM experiment introduced in Appendix \ref{app:bf_gmm}, consisting of $20$ Gaussian components in a $100$-dimensional space. We fix the number of regions per true observation $L$ to $100$ and vary the number of posterior samples as $N \in \{10, 50, 100, 300, 400\}$. As shown in \autoref{fig:GMM_Mira_Plot}, \name correctly ranks the models across all number of sample. However, with very few samples ($N = 10$), the estimate is noisy, while stable and accurate estimates emerge once $N = 100$.

\begin{figure}[t]
    \centering
    \begin{minipage}[t]{0.31\textwidth}
        \centering
        \includegraphics[width=\textwidth]{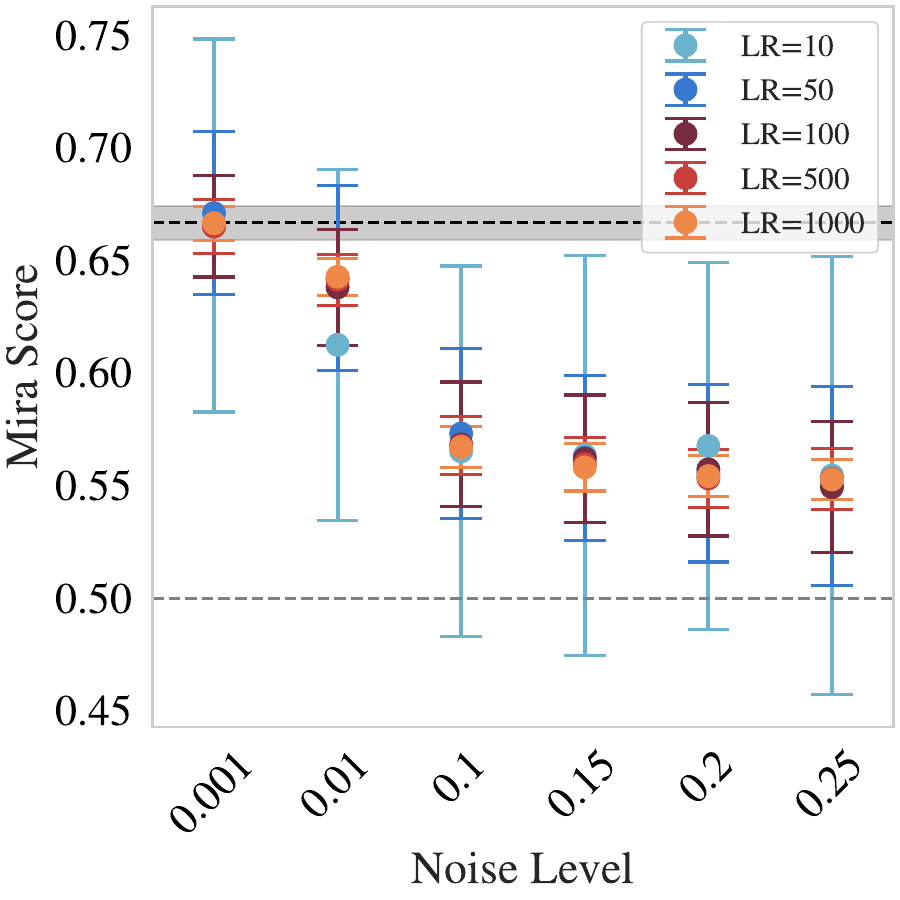}
        \caption{\name score vs noise level for varying counts of true posteriors. As the number of true posteriors increases, confidence in the \name estimate improves, and the distinction between well- and poorly-calibrated models becomes more pronounced.}
        \label{fig:LR_Mira_Plot}
    \end{minipage}\hfill
    \begin{minipage}[t]{0.31\textwidth}
        \centering
        \includegraphics[width=\textwidth]{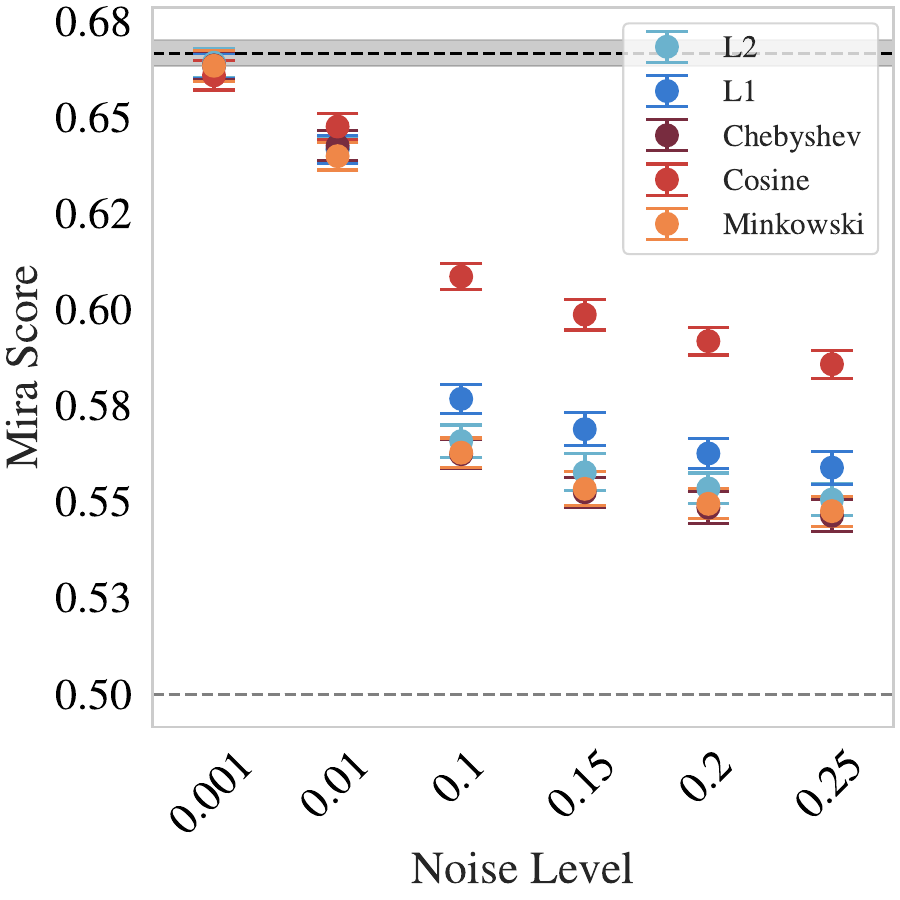}
        \caption{\name scores for different distance metrics as a function of noise level.}
        \label{fig:Dist_Metric_Comp}
    \end{minipage}\hfill
    \begin{minipage}[t]{0.31\textwidth}
        \centering
        \includegraphics[width=\textwidth]{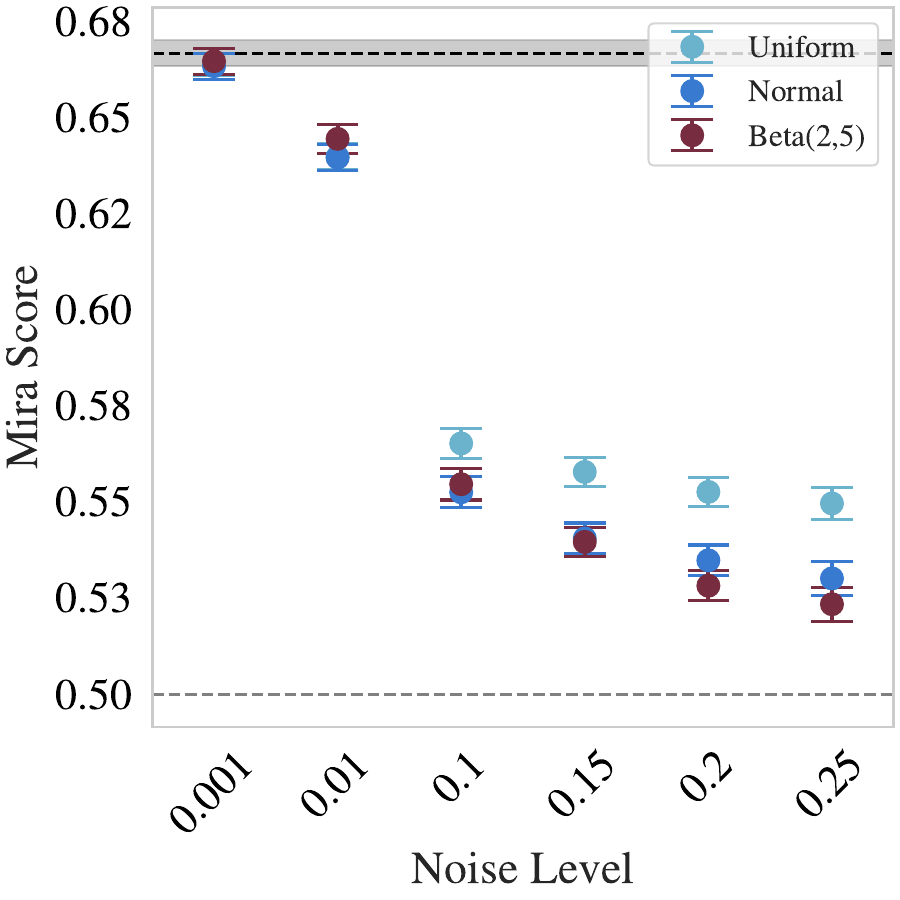}
        \caption{\name scores for different center distributions as a function of noise level.}
        \label{fig:Distribution_Choice}
    \end{minipage}\hfill
\end{figure}

\paragraph{(4) Number of true samples $L$} We use the linear regression experiment described in Appendix \ref{app:bf_linear} to study the effect of the number of true samples $L$ on the \name score. We fix the number of posterior samples per model to $N = 5\,000$ and use $100$ regions per true sample. The number of distinct ground-truth draws is varied from $10$ to $1{,}000$. The results, shown in \autoref{fig:LR_Mira_Plot}, demonstrate that \name correctly ranks the models, where increasing noise leads to increasingly incorrect predictions, once $L \geq 50$. Moreover, as expected, when the number of true samples increases, the confidence intervals of \name become narrower.

To further investigate the small-$L$ regime, we reproduce the strong lensing experiment of Section \ref{sec:StrongLensing}, a more challenging inference problem than the linear regression setting, with high-dimensional posteriors that can exhibit complex structure. The cost of simulation further restricts us to only $N = 64$ posterior samples per observation. This setting is therefore representative of realistic complex inference tasks, where both $L$ and $N$ are severely constrained. Apart from $L$, we use the same configuration (number of regions, distribution $p(c)$, etc.) as in Section \ref{sec:StrongLensing}. As shown in \autoref{tab:num_ground_truths}, even with a single fiducial sample \name successfully identifies the best-performing model ($p_s(x)$, $\sigma_n = 2$), though it struggles to discriminate among poorly calibrated models. As $L$ increases, \name recovers the correct model ranking.

\begin{table}[htbp]
\centering
\caption{Effect of the number of ground truths on \name~scores.}
\label{tab:num_ground_truths}
\begin{tabular}{lcccc}
\hline
\# Ground Truths & $p_s(x),\ \sigma_n=2$ & $p_s(x),\ \sigma_n=0.5$ & $p_e(x),\ \sigma_n=2$ & $p_e(x),\ \sigma_n=0.5$ \\
\hline
1  & $0.6722 \pm 0.2508$ & $0.5242 \pm 0.2919$ & $0.5136 \pm 0.2969$ & $0.5452 \pm 0.2715$ \\
2  & $0.7042 \pm 0.1426$ & $0.5037 \pm 0.2033$ & $0.4844 \pm 0.2195$ & $0.5277 \pm 0.2071$ \\
3  & $0.6522 \pm 0.1321$ & $0.5502 \pm 0.1596$ & $0.5065 \pm 0.1858$ & $0.5348 \pm 0.1466$ \\
4  & $0.6190 \pm 0.1246$ & $0.5521 \pm 0.1362$ & $0.4998 \pm 0.1373$ & $0.5056 \pm 0.1459$ \\
5  & $0.6642 \pm 0.1025$ & $0.5508 \pm 0.1235$ & $0.5025 \pm 0.1463$ & $0.5163 \pm 0.1219$ \\
6  & $0.6213 \pm 0.0974$ & $0.5659 \pm 0.1095$ & $0.4968 \pm 0.1195$ & $0.5143 \pm 0.1268$ \\
7  & $0.6448 \pm 0.0958$ & $0.5475 \pm 0.1030$ & $0.4939 \pm 0.1096$ & $0.5011 \pm 0.1058$ \\
8  & $0.6501 \pm 0.0774$ & $0.5693 \pm 0.0880$ & $0.5132 \pm 0.0974$ & $0.5217 \pm 0.1055$ \\
9  & $0.6615 \pm 0.0855$ & $0.5780 \pm 0.0906$ & $0.5133 \pm 0.0997$ & $0.5195 \pm 0.0988$ \\
10 & $0.6416 \pm 0.0879$ & $0.5780 \pm 0.0942$ & $0.4878 \pm 0.0830$ & $0.5117 \pm 0.0850$ \\
11 & $0.6417 \pm 0.0693$ & $0.5593 \pm 0.0893$ & $0.5047 \pm 0.1005$ & $0.5019 \pm 0.0822$ \\
12 & $0.6555 \pm 0.0692$ & $0.5721 \pm 0.0770$ & $0.5020 \pm 0.0907$ & $0.5046 \pm 0.0961$ \\
13 & $0.6685 \pm 0.0674$ & $0.5769 \pm 0.0743$ & $0.5025 \pm 0.0755$ & $0.5217 \pm 0.0732$ \\
14 & $0.6558 \pm 0.0652$ & $0.5818 \pm 0.0660$ & $0.5331 \pm 0.0799$ & $0.5058 \pm 0.0828$ \\
15 & $0.6465 \pm 0.0628$ & $0.5795 \pm 0.0674$ & $0.5249 \pm 0.0685$ & $0.5194 \pm 0.0676$ \\
16 & $0.6484 \pm 0.0595$ & $0.5774 \pm 0.0715$ & $0.5259 \pm 0.0692$ & $0.5026 \pm 0.0701$ \\
\hline
\end{tabular}
\end{table}

\paragraph{(5) Choice of distance metric} To test \name's sensitivity to the choice of distance metric used to build the random regions (see \autoref{eq:region}), we use the linear regression experiment from Appendix \ref{app:bf_linear} and consider five distance metrics: L2 (Euclidean), L1 (Manhattan), Chebyshev, Cosine, and Minkowski (with $p=3$). Each distance metric induces a different region geometry. Similarly to Appendix \ref{app:bf_linear}, we use $L = 5\,000 $ true samples and for each observation generate $N=5\,000$ samples from the posterior. The number of regions per fiducial is fixed to $100$.
We show the results in \autoref{fig:Dist_Metric_Comp}. All metrics successfully identify the least biased model (with $\eta = 0.001$) as the best-calibrated, and correctly rank the remaining models. 

We further evaluate the choice of distance metric on high-dimensional image data using the strong lensing experiment of Section \ref{sec:StrongLensing} with the same \name configuration. As shown in \autoref{tab:distance_metric}, \name produces nearly identical scores and consistent model rankings across L2, L1, Cosine, and Minkowski metrics. The exception is Chebyshev distance, which is known to become degenerate in high-dimensional spaces, rendering it unsuitable for 12\,288-dimensional image vectors. Excluding this pathological case, \name's rankings remain consistent across standard pixel-space metrics, further confirming its robustness to the choice of distance metric.

\begin{table}[htbp]
\centering
\caption{Effect of distance metric on \name~scores.}
\label{tab:distance_metric}
\begin{tabular}{lcccc}
\hline
Metric & $p_s(x),\ \sigma_n=2$ & $p_s(x),\ \sigma_n=0.5$ & $p_e(x),\ \sigma_n=2$ & $p_e(x),\ \sigma_n=0.5$ \\
\hline
L2                & $0.6489 \pm 0.0576$ & $0.5722 \pm 0.0651$ & $0.5281 \pm 0.0793$ & $0.5002 \pm 0.0721$ \\
L1                & $0.6455 \pm 0.0601$ & $0.5800 \pm 0.0593$ & $0.5263 \pm 0.0667$ & $0.5020 \pm 0.0670$ \\
Chebyshev         & $0.6670 \pm 0.0592$ & $0.5122 \pm 0.0592$ & $0.5378 \pm 0.0764$ & $0.5959 \pm 0.0669$ \\
Cosine            & $0.6490 \pm 0.0564$ & $0.5418 \pm 0.0665$ & $0.5105 \pm 0.0833$ & $0.5293 \pm 0.0709$ \\
Minkowski ($p=3$) & $0.6476 \pm 0.0625$ & $0.5734 \pm 0.0754$ & $0.5202 \pm 0.0712$ & $0.5068 \pm 0.0761$ \\
\hline
\end{tabular}
\end{table}

\paragraph{(6) Center distribution $p(c)$} To test \name’s sensitivity to the choice of distribution used to sample centers from the label space (see \autoref{eq:region}), we replicate the linear regression setup from Appendix \ref{app:bf_linear}.
We compare three center distributions: uniform distribution $\mathcal{U}[0,1]$, standard normal distribution $\mathcal{N}(0,1)$, and asymmetric Beta distribution $\mathcal{B}(2,5)$. We use the same samples comfiguration as in Appendix \ref{app:bf_linear}: $L= 5\,000$, $N=5\,000$ and $100$ regions.
As shown in \autoref{fig:Distribution_Choice}, \name scores rank the model accurately across all three distributions, highlighting \name's robustness to the choice of center distribution. 

\paragraph{(7) Support of center distribution $p(c)$}
To test \name's sensitivity to the support of the distribution $p(c)$, we repeat experiment of Section \ref{sec:Baseline} with, $c \sim U[-10, 10]$ and $c \sim \mathcal{N}[-5, 1]$ and compare to the original $c \sim U[0, 1]$. We use the same configuration as in Section \ref{sec:Baseline}. Our results in \autoref{tab:center_distribution_support} demonstrate that \name is insensitive to this choice.

\begin{table}[htbp]
\centering
\caption{Comparison of different center distributions support.}
\label{tab:center_distribution_support}
\begin{tabular}{lcccc}
\hline
Center Distribution & Correct & Overconfident & Underconfident & Biased \\
\hline
$\mathcal{U}[0, 1]$       & $0.6677 \pm 0.0072$ & $0.6144 \pm 0.0079$ & $0.6937 \pm 0.0073$ & $0.5448 \pm 0.0089$ \\
$\mathcal{U}[-10, 10]$    & $0.6700 \pm 0.0066$ & $0.6150 \pm 0.0081$ & $0.6941 \pm 0.0061$ & $0.5448 \pm 0.0095$ \\
$\mathcal{N}[-5, 1]$      & $0.6671 \pm 0.0074$ & $0.6162 \pm 0.0081$ & $0.6930 \pm 0.0057$ & $0.5480 \pm 0.0090$ \\
\hline
\end{tabular}
\end{table}

Overall, \name remains robust across the sensitivity tests, in the sense that it consistently ranks the models correctly. The only exception occurs in the experiment with a varying number of true samples, in the extreme case where only $10$ true samples are used. In this regime, the model with noise $0.2$ is ranked slightly above the one with noise $0.15$. This issue is resolved by adding $40$ additional simulations to the setup. These results indicate that \name is well-suited for practical use in simulation-based tasks under realistic computational constraints.


\end{document}